\tikzstyle{decision}=[diamond,draw]
\tikzstyle{line}=[draw]
\tikzstyle{elli}=[draw,ellipse]
\tikzstyle{arrow} = [thick]
\newcommand{\nn}{\nonumber}
\newcommand{\R}{\Re} 
\newcommand{\ra}{\rightarrow}
\newcommand{\E}[1]{\mathbb{E}\left[#1\right]}
\newcommand{\B}{\mathcal{B}}
\newcommand{\V}{\mathcal{V}}
\newcommand{\I}{\mathcal{I}}
\newcommand{\psiv}{\psi^{\text{v}}}
\newcommand{\psif}{\psi^{\text{f}}}
\newcommand{\kf}{K^{\text{f}}_{\Theta}}
\newcommand{\Tv}{{\Theta}^{\text{v}}}
\newcommand{\tv}{{\theta^{\text{v}}}}
\newcommand{\Tf}{{\Theta}^{\text{f}}}
\newcommand{\Tdgn}{\Theta^{\text{DGN}}}
\newcommand{\din}{d_{\text{in}}}
\newcommand{\dnet}{d_{\text{net}}}
\newcommand{\dvnet}{d^{\text{v}}_{\text{net}}}
\newcommand{\dfnet}{d^{\text{f}}_{\text{net}}}
\newcommand{\ta}{\theta_\text{a}}
\newcommand{\tb}{\theta_\text{b}}
\newcommand{\pta}{\partial_{\ta}}
\newcommand{\ptb}{\partial_{\tb}}
\newcommand{\norm}[1]{\|#1\|}
\newcommand{\eqdef}{\doteq}
\renewcommand{\eqdef}{\stackrel{def}{=}}
\renewcommand{\epsilon}{\varepsilon}
\newtheorem{theorem}{Theorem}[section]
\newtheorem{lemma}{Lemma}[section]
\newtheorem{proposition}{Proposition}[section]
\newtheorem{corollary}[theorem]{Corollary}
\newtheorem{definition}{Definition}[section]
\newcommand{\ip}[1]{\langle #1\rangle}
\def\Re{\mathbb{R}}
\def\Z{\mathbb{Z}}
\newcounter{subequation}[equation]
\def\mathdisplay#1{%
  \ifmmode \@badmath
  \else
    $\def\@currenvir{#1}%
    \let\dspbrk@context\z@
    \let\tag\tag@in@display \SK@equationtrue 
    \global\let\df@label\@empty \global\let\df@tag\@empty
    \global\tag@false
    \let\mathdisplay@push\mathdisplay@@push
    \let\mathdisplay@pop\mathdisplay@@pop
    \if@fleqn
      \edef\restore@hfuzz{\hfuzz\the\hfuzz\relax}%
      \hfuzz\maxdimen
      \setbox\z@\hbox to\displaywidth\bgroup
        \let\split@warning\relax \restore@hfuzz
        \everymath\@emptytoks \m@th $\displaystyle
    \fi
}
\newcounter{algostep}
\newcounter{acalgorithm}
\crefname{assumption}{assumption}{assumptions}
\crefname{lemma}{lemma}{lemmas}
\Crefname{lemma}{Lemma}{Lemmas}
\crefname{thm}{theorem}{theorems}
\Crefname{thm}{Theorem}{Theorems}
\crefname{proposition}{proposition}{propositions}
\title{Neural Path Features and Neural Path Kernel : Understanding the role of gates in deep learning}
\author{Chandrashekar Lakshminarayanan${}^*$ and Amit Vikram Singh\thanks{Equal Contribution}, \\ Indian Institute of Technology Palakkad\\\texttt{chandru@iitpkd.ac.in, amitkvikram@gmail.com}}
\begin{document}
\maketitle
\begin{abstract}
Rectified linear unit (ReLU) activations can also be thought of as \emph{gates}, which, either pass or stop their pre-activation input when they are \emph{on} (when the pre-activation input is positive) or \emph{off} (when the pre-activation input is negative) respectively. A deep neural network (DNN) with ReLU activations has many gates, and the {on/off} status of each gate changes across input examples as well as network weights. For a given input example, only a subset of gates are \emph{active}, i.e., on, and the sub-network of weights connected to these active gates is responsible for producing the output. At randomised initialisation, the active sub-network corresponding to a given input example is random. During training, as the weights are learnt, the active sub-networks are also learnt, and could hold valuable information. 

In this paper, we analytically characterise the role of gates and active sub-networks in deep learning. To this end, we encode the {on/off} state of the gates for a given input in a novel \emph{neural path feature} (NPF), and the weights of the DNN are encoded in a novel \emph{neural path value} (NPV). Further, we show that the output of network is indeed the inner product of NPF and NPV.  The main result of the paper shows that the \emph{neural path kernel} associated with the NPF is a fundamental quantity that characterises the information stored in the gates of a DNN. We show via experiments (on MNIST and CIFAR-10) that in standard DNNs with ReLU activations NPFs are learnt during training and such learning is key for generalisation. Furthermore, NPFs and NPVs can be learnt in two separate networks and such learning also generalises well in experiments. In our experiments, we observe that almost all the information learnt by a DNN with ReLU activations is stored in the gates - a novel observation that underscores the need to investigate the role of the gates in DNNs.
\end{abstract}

\section{Introduction}
We consider deep neural networks (DNNs) with rectified linear unit (ReLU) activations. A special property of the ReLU activation (denoted by $\chi$) is that its output can be written as a product of its pre-activation input, say $q\in\R$ and a gating signal, $G(q)=\mathbbm{1}_{\{q>0\}}$, i.e., $\chi(q)=q\cdot G(q)$. While the weights of a DNN remain the same across input examples, the $1/0$ state of the gates (or simply gates) change across input examples. For each input example, there is a corresponding \emph{active sub-network} consisting of those gates which are $1$, and the weights which pass through such gates. This active sub-network can be said to hold the memory for a given input, i.e., only those weights that pass through such active gates contribute to the output. In this viewpoint, at random initialisation of the weights, for a given input example, a random sub-network is active and produces a random output.  However, as the weights change during training (say using gradient descent), the gates change, and hence the active sub-networks corresponding to the various input examples also change. At the end of training, for each input example, there is a learnt active sub-network, and produces a learnt output. Thus, the gates of a trained DNN could potentially contain valuable information. 

We focus on DNNs with ReLU activations. The goal and claims in this paper are stated below.\\
\resizebox{\columnwidth}{!}{
\begin{tabular}{ll}
Goal &: \emph{To study the role of the gates in DNNs trained with gradient descent (GD).}\\
Claim I (\Cref{sec:infomeasure})  &: \emph{Active sub-networks are fundamental entities.}  \\
Claim II (\Cref{sec:experiments})	&: \emph{Learning of the active sub-networks improves generalisation.}
\end{tabular}
}
Before we discuss the claims in terms of our novel technical contributions in \Cref{sec:contrib}, we present the background of  \emph{neural tangent} framework in \Cref{sec:background}.

\textbf{Notation:} We denote the set $\{1,\ldots, n\}$ by $[n]$. For $x,y\in\R^m$, $\ip{x,y}=x^\top y$. The dataset is denoted by $(x_s,y_s)_{s=1}^n\in\R^{\din}\times \R$. For an input $x\in\R^{\din}$, the output of the DNN  is denoted by $\hat{y}_{\Theta}(x)\in\R$, where $\Theta\in\R^{\dnet}$ are the weights. We use $\theta\in\Theta$ to denote a single arbitrary weight, and $\partial_{\theta}(\cdot)$ to denote $\frac{\partial (\cdot)}{\partial \theta}$. We use $\nabla_{\Theta}(\cdot)$ to denote the gradient of $(\cdot)$ with respect to the network weights. We use vectorised notations $y=(y_s,s\in[n]), \hat{y}_{\Theta}=\left(\hat{y}_{\Theta}(x_s), s\in[n]\right)\in\R^n$ for the true and predicted outputs and $e_{t}= (\hat{y}_{\Theta_t}-y)\in\R^n$ for the error in the prediction.

\subsection{Background: Neural Tangent Feature and Neural Tangent Kernel}\label{sec:background}
The neural tangent machinery was developed in some of the recent works [\citenum{ntk, arora2019exact,cao2019generalization,dudnn}] to understand optimisation and generalisation in DNNs trained using GD. For an input $x\in\R^{\din}$, the \emph{neural tangent feature} (NTF) is given by $\psi_{x,\Theta}=\nabla_{\Theta}\hat{y}_{\Theta}(x)\in\R^{\dnet}$, i.e.,  the gradient of the network output with respect to its weights. The \emph{neural tangent kernel} (NTK) matrix $K_{\Theta}$ on the dataset is the $n\times n$ Gram matrix of the NTFs of the input examples, and is given by $K_{\Theta}(s,s')=\ip{\psi_{x_s,\Theta},\psi_{x_{s'},\Theta}}, s,s'\in[n]$. 
\begin{proposition}[\textbf{Lemma 3.1} \cite{arora2019exact}]\label{prop:basic}
Consider the GD procedure to minimise the  squared loss $L(\Theta)=\frac{1}{2}\sum_{s=1}^n \left(\hat{y}_{\Theta}(x_s)-y_s\right)^2$ with infinitesimally small step-size given by $\dot{\Theta}_t=-\nabla_{\Theta}L_{\Theta_t}$. It follows that the dynamics of the error term can be written as $\dot{e}_t=-K_{\Theta_t} e_t$. 
\end{proposition}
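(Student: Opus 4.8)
The plan is to differentiate the error trajectory coordinate-wise and substitute the gradient-flow equation; this is a direct chain-rule computation once the right objects are named. First I would write the loss gradient explicitly. By the chain rule, $\nabla_{\Theta} L_{\Theta} = \sum_{s=1}^n \big(\hat{y}_{\Theta}(x_s)-y_s\big)\,\nabla_{\Theta}\hat{y}_{\Theta}(x_s) = \sum_{s=1}^n e_{\Theta}(s)\,\psi_{x_s,\Theta}$, where $e_{\Theta}(s) = \hat{y}_{\Theta}(x_s)-y_s$ and $\psi_{x_s,\Theta}=\nabla_{\Theta}\hat{y}_{\Theta}(x_s)$ is the NTF. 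Equivalently, collecting the NTFs as the columns of a $\dnet\times n$ matrix $\Psi_{\Theta}$, this reads $\nabla_{\Theta}L_{\Theta} = \Psi_{\Theta}\, e_{\Theta}$.

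Next I would compute the time derivative of a single predicted output along the flow. Since $\Theta_t$ evolves in $t$ while $x_s$ is fixed, the chain rule gives $\tfrac{d}{dt}\hat{y}_{\Theta_t}(x_s) = \ip{\nabla_{\Theta}\hat{y}_{\Theta_t}(x_s),\dot{\Theta}_t} = \ip{\psi_{x_s,\Theta_t},\dot{\Theta}_t}$. Because the labels $y_s$ are constant in $t$, $\dot{e}_t(s) = \tfrac{d}{dt}\hat{y}_{\Theta_t}(x_s)$, so substituting $\dot{\Theta}_t = -\nabla_{\Theta}L_{\Theta_t} = -\sum_{s'} e_t(s')\,\psi_{x_{s'},\Theta_t}$ yields $\dot{e}_t(s) = -\sum_{s'=1}^n e_t(s')\,\ip{\psi_{x_s,\Theta_t},\psi_{x_{s'},\Theta_t}} = -\sum_{s'=1}^n K_{\Theta_t}(s,s')\,e_t(s') = -(K_{\Theta_t}e_t)(s)$, where the middle equality is just the definition of the NTK Gram matrix $K_{\Theta}$. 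Stacking the $n$ coordinates gives $\dot{e}_t = -K_{\Theta_t}e_t$, as claimed. (In matrix form one can instead write $\dot{e}_t = \Psi_{\Theta_t}^{\top}\dot{\Theta}_t = -\Psi_{\Theta_t}^{\top}\Psi_{\Theta_t} e_t = -K_{\Theta_t}e_t$, since $K_{\Theta}=\Psi_{\Theta}^{\top}\Psi_{\Theta}$.)

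The algebra here is a one-line computation, so the only point requiring care is regularity: a DNN with ReLU activations is only piecewise differentiable in $\Theta$, so the gradient $\nabla_{\Theta}\hat{y}_{\Theta}(x_s)$ and the chain rule need justification. I would handle this as is standard in the neural tangent literature: for all but a Lebesgue-null set of $\Theta$ the gating pattern is locally constant, hence $\hat{y}_{\Theta}(x_s)$ is differentiable there with the usual backpropagation gradient, and the chain rule applies along the trajectory; existence of the gradient-flow curve $t\mapsto\Theta_t$ is taken as a hypothesis of the proposition. Thus the main (and only mild) obstacle is this differentiability bookkeeping rather than anything in the derivation itself.
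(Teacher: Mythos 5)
Your proposal is correct and follows essentially the same route as the paper's proof: compute $\nabla_{\Theta}L_{\Theta_t}=\sum_{s'}\psi_{x_{s'},\Theta_t}e_t(s')$, apply the chain rule to get $\dot{\hat{y}}_{\Theta_t}(x_s)=\ip{\psi_{x_s,\Theta_t},\dot{\Theta}_t}$, and substitute to obtain $\dot{e}_t=-K_{\Theta_t}e_t$. Your added remark on the piecewise differentiability of ReLU networks is a reasonable extra caveat that the paper's proof does not address.
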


\textbf{Prior works} [\citenum{ntk,dudnn,arora2019exact,cao2019generalization}] have studied DNNs trained using GD in the so called `NTK regime', which occurs (under appropriate randomised initialisation) when the width of the DNN approaches infinity. The characterising property of the NTK regime is that as $w\ra\infty$, $K_{\Theta_0}\ra K^{(d)}$, and $K_{\Theta_t}\approx K_{\Theta_0}$, where $K^{(d)}$ (see \eqref{eq:ntkold} in \Cref{sec:kd}) is a deterministic matrix whose superscript $(d)$ denotes the depth of the DNN. \cite{arora2019exact} showed that an infinite width DNN trained using GD is equivalent to a kernel method with the limiting NTK matrix $K^{(d)}$ (and hence enjoys the generalisation ability of the limiting NTK matrix $K^{(d)}$). Further, \cite{arora2019exact} proposed a pure kernel method based on what they call the CNTK, which is the limiting NTK matrix $K^{(d)}$ of an infinite width convolutional neural network (CNN). \cite{cao2019generalization} showed a generalisation bound of the form $\tilde{\mathcal{O}}\left(d\cdot\sqrt{y^\top {\left(K^{(d)}\right)}^{-1} y/n}\right)$\footnote{$a_t=\mathcal{O}(b_t)$ if $\lim\sup_{t\ra\infty}|a_t/b_t|<\infty$, and $\tilde{\mathcal{O}}(\cdot)$ is used to hide logarithmic factors in $\mathcal{O}(\cdot)$.} in the NTK regime. 

\textbf{Open Question:} \cite{arora2019exact} reported a $5\% - 6\%$ performance gain of finite width CNN (not operating in the NTK regime) over the exact CNTK corresponding to infinite width CNN, and inferred that the study of DNNs in the NTK regime cannot fully explain the success of practical neural networks yet. Can we explain the reason for the performance gain of CNNs over CNTK?

\subsection{Our Contributions}\label{sec:contrib}
To the best of our knowledge, we are the first to analytically characterise the role played by the gates and the active sub-networks in deep learning as presented in the `Claims I and II'.  The key contributions can be arranged into three landmarks as described below.

$\bullet$ The first step involves breaking a DNN into individual paths, and each path again into gates and weights.  To this end, we encode the states of the gates in a novel \emph{neural path feature} (NPF) and the weights in a novel \emph{neural path value} (NPV) and express the output of the DNN as an inner product of NPF and NPV (see \Cref{sec:path}). In contrast to NTF/NTK which are \emph{first-order} quantities (based on derivatives with respect to the weights), NPF and NPV are \emph{zeroth-order} quantities. The kernel matrix associated to the NPFs namely the \emph{neural path kernel} (NPK) matrix $H_{\Theta}\in\R^{n\times n}$ has a special structure, i.e., it can be written as a \emph{Hadamard} product of the input Gram matrix, and a correlation matrix $\Lambda_{\Theta}\in\R^{n\times n}$, where $\Lambda_{\Theta}(s,s')$ is proportional to the total number of paths in the sub-network that is active for both input examples $s,s'\in[n]$. With the $\Lambda_{\Theta}$ matrix we reach our first landmark.

$\bullet$  Second step is to characterise performance of the gates and the active sub-networks in a `stand alone' manner. To this end, we consider a new idealised setting namely the fixed NPF (FNPF) setting, wherein, the NPFs are fixed (i.e., held constant) and only the NPV is learnt via gradient descent. In this setting, we show that (see \Cref{th:main}), in the limit of infinite width and under randomised initialisation the NTK converges to a matrix $K^{(d)}_{\text{FNPF}}=\text{constant} \times H_{\text{FNPF}}$, where $H_{\text{FNPF}}\in\R^{n\times n}$ is the NPK matrix corresponding to the fixed NPFs. $K^{(d)}$ matrix of \cite{ntk,arora2019exact,cao2019generalization} becomes the $K^{(d)}_{\text{FNPF}}$ matrix in the FNPF setting,  wherein, we initialise the NPV statistically independent of the fixed NPFs (see \Cref{th:main}). With \Cref{th:main}, we reach our second landmark, i.e. we justify ``Claim I'', that active sub-networks are fundamental entities, which follows from the fact that $H_{\text{FNPF}}=\Sigma\odot \Lambda_{\text{FNPF}}$, where $\Lambda_{\text{FNFP}}$ corresponds to the fixed NPFs.

$\bullet$ Third step is to show experimentally that sub-network learning happens in practice (see \Cref{sec:experiments}). We show that in finite width DNNs with ReLU activations, NPFs are learnt continuously during training, and such learning improves generalisation. We observe that fixed NPFs obtained from the initial stages of training generalise poorly than CNTK (of \cite{arora2019exact}), whereas, fixed NPFs obtained from later stages of training generalise better than CNTK and generalise as well as standard DNNs with ReLU. This throws light on the open question in \Cref{sec:background}, i.e., the difference between the NTK regime and the finite width DNNs is perhaps due to NPF learning. In finite width DNNs, NPFs are learnt during training and in the NTK regime no such feature learning happens during training. Since the NPFs completely encode the information pertaining to the active sub-networks, we complete our final landmark namely  justification of ``Claim II''. 

\section{Neural Path Feature and Kernel: Encoding Gating Information}\label{sec:path}
First step in understanding the role of the gates is to explicitly \emph{encode} the $1/0$ states of the gates. The gating property of the ReLU activation allows us to express the output of the DNN as a summation of the contribution of the individual paths, and paves a natural way to encode the $1/0$ states of the gates \emph{without loss of information}. The contribution of a path is the product of the signal at its input node, the weights in the path and the gates in the path. For an input $x\in\R^{\din}$, and weights $\Theta\in\R^{\dnet}$, 
we encode the gating information in a novel \emph{neural path feature} (NPF), $\phi_{x,\Theta}\in\R^P$ and encode the weights in a novel \emph{neural path value} (NPV) $v_{\Theta}\in\R^P$, where $P$ is the total number of paths. 
The NPF co-ordinate of a path is the product of the signal at its input node and the gates in the path. The NPV co-ordinate of a path is the product of the weights in the paths. The output is then given by
\begin{align}\label{eq:npfnpv}
\hat{y}_{\Theta}(x)=\ip{\phi_{x,\Theta}, v_{\Theta}},
\end{align}
where $\phi_{x,\Theta}$ can be seen as the \emph{hidden features} which along with $v_{\Theta}$ are learnt by gradient descent.  
\subsection{Paths, Neural Path Feature, Neural Path Value and Network Output}
We consider fully-connected DNNs with `$w$' hidden units per layer and `$d-1$' hidden layers. $\Theta\in\R^{\dnet}$ are the network weights, where $\dnet=\din w+(d-2)w^2+w$. The information flow is shown in \Cref{tb:basic}, where
$\Theta(i,j,l)$ is the weight connecting the $j^{th}$ hidden unit of layer $l-1$ to the $i^{th}$ hidden unit of layer $l$. Further, $\Theta(\cdot,\cdot,1)\in\R^{w\times \din}, \Theta(\cdot,\cdot,l)\in\R^{w\times w},\forall l\in\{2,\ldots,d-1\}, \Theta(\cdot,\cdot,d)\in\R^{1\times w}$.
\begin{table}[h]
\centering
\begin{tabular}{|l l lll|}\hline
Input Layer&: &$z_{x,\Theta}(0)$ &$=$ &$x$ \\
Pre-Activation Input&: & $q_{x,\Theta}(i,l)$& $=$ & $\sum_{j} \Theta(i,j,l)\cdot z_{x,\Theta}(j,l-1)$\\
Gating Values&: &$G_{x,\Theta}(i,l)$& $=$ & $\mathbbm{1}_{\{q_{x,\Theta}(i,l)>0\}}$\\
Hidden Layer Output&: &$z_{x,\Theta}(i,l)$ & $=$ & $q_{x,\Theta}(i,l)\cdot G_{x,\Theta}(i,l)$ \\
Final Output&: & $\hat{y}_{\Theta}(x)$ & $=$ & $\sum_{j\in[w]} \Theta(1,j,d-1)\cdot z_{x,\Theta}(j,d-1)$\\\hline
\end{tabular}
\caption{Here, $l\in[d-1],i\in[w]$, $j\in[\din]$ for $l=1$ and $j\in[w]$ for $l=2,\ldots,d-1$.} 
\label{tb:basic}
\end{table}

\textbf{Paths:} A path starts from an input node, passes through exactly one weight and one hidden node in each layer and ends at the output node. We have a total of $P=\din w^{(d-1)}$ paths. We assume that there is a natural enumeration of the paths, and denote the set of all paths by $[P]$. Let $\I_{l}\colon [P]\ra [w],l=1,\ldots,d-1$ provide the index of the hidden unit through which a path $p$ passes in layer $l$, and $\I_{0}\colon [P]\ra [\din]$ provides the input node, and $\I_{d}(p)=1,\forall p\in[P]$.
\begin{definition}\label{def:nps} Let $x\in\R^{\din}$ be the input to the DNN. For this input, 

(i) The activity of a path $p$ is given by : $A_{\Theta}(x,p)\stackrel{def}{=}\Pi_{l=1}^{d-1} G_{x,\Theta}(\I_l(p),l)$.

(ii) The {neural path feature} (NPF) is given by :  $\phi_{x,\Theta}\stackrel{def}=\left(x(\I_0(p))A_{\Theta}(x,p) ,p\in[P]\right)\in\R^P$. 

(iii) The {neural path value} (NPV) is given by : $v_{\Theta}\stackrel{def}=\left(\Pi_{l=1}^d \Theta(\I_l(p),\I_{l-1}(p),l),p\in[P]\right)\in\R^P$.
\end{definition}
\textbf{Remark:} A path $p$ is active if all the gates in the paths are on.
\begin{proposition}\label{prop:zero}  The output of the network can be written as an inner product of the NPF and NPV, i.e., 
$\hat{y}_{\Theta}(x)=\ip{\phi_{x,\Theta},v_{\Theta}}=\sum_{p\in [P]}x(\I_0(p))A_{\Theta}(x,p)v_{\Theta}(p)$.
\end{proposition}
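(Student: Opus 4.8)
The plan is to prove the identity by fully unrolling the forward-pass recursion of \Cref{tb:basic} one layer at a time and then matching the resulting sum of monomials to the path enumeration of \Cref{def:nps}. The heart of the argument is a closed-form expression for the hidden outputs, which I would establish by induction on the layer index $l$ with base case $z_{x,\Theta}(i,0)=x(i)$: for $l\in\{1,\ldots,d-1\}$ and every hidden unit $i\in[w]$,
\begin{align*}
z_{x,\Theta}(i,l)=\sum_{j_0\in[\din]}\ \sum_{j_1,\ldots,j_{l-1}\in[w]} x(j_0)\Big(\Pi_{l'=1}^{l}\Theta(j_{l'},j_{l'-1},l')\Big)\Big(\Pi_{l'=1}^{l} G_{x,\Theta}(j_{l'},l')\Big),
\end{align*}
where $j_l\defeq i$ is held fixed, so the multi-sum ranges over all \emph{partial paths} running from an input node to unit $i$ in layer $l$.

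For the inductive step I would substitute $z_{x,\Theta}(i,l)=q_{x,\Theta}(i,l)\cdot G_{x,\Theta}(i,l)$ and $q_{x,\Theta}(i,l)=\sum_{j}\Theta(i,j,l)\,z_{x,\Theta}(j,l-1)$, and then insert the induction hypothesis for $z_{x,\Theta}(j,l-1)$. Every term of the hypothesis corresponds to a partial path terminating at unit $j$ in layer $l-1$; multiplying it by the two new factors $\Theta(i,j,l)$ and $G_{x,\Theta}(i,l)$ and then summing over $j$ extends each such partial path by exactly one hop and enumerates precisely the partial paths terminating at unit $i$ in layer $l$, which is the claimed formula one level up. There is no analytic content here — it is pure distributivity and rebracketing of a finite sum of products.

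It then remains to instantiate the closed form at $l=d-1$ and feed it into the output equation $\hat{y}_{\Theta}(x)=\sum_{j}\Theta(1,j,d)\,z_{x,\Theta}(j,d-1)$ (the output layer being $\Theta(\cdot,\cdot,d)\in\R^{1\times w}$). This appends the final hop $j_{d-1}\to 1$, so the index tuples $(j_0,j_1,\ldots,j_{d-1})$ now range over exactly the full paths $[P]$ under the natural enumeration, with $j_{l'}=\I_{l'}(p)$ for $l'=0,\ldots,d-1$ and $\I_d(p)=1$. Regrouping the factors of the $p$-th summand: the gate product is $\Pi_{l'=1}^{d-1}G_{x,\Theta}(\I_{l'}(p),l')=A_{\Theta}(x,p)$, the weight product is $\Pi_{l'=1}^{d}\Theta(\I_{l'}(p),\I_{l'-1}(p),l')=v_{\Theta}(p)$, and the remaining scalar is $x(\I_0(p))$, so the summand equals $x(\I_0(p))A_{\Theta}(x,p)v_{\Theta}(p)=\phi_{x,\Theta}(p)\,v_{\Theta}(p)$ by \Cref{def:nps}. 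Summing over $p\in[P]$ gives $\hat{y}_{\Theta}(x)=\ip{\phi_{x,\Theta},v_{\Theta}}$.

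The only place that calls for genuine care is this last identification: that summing over the index tuples $(j_0,\ldots,j_{d-1})\in[\din]\times[w]^{d-1}$ is the same as summing over $p\in[P]$, i.e., that the natural enumeration of paths is literally the product enumeration of ``one hidden unit per layer'' captured by the maps $\I_0,\ldots,\I_d$ — together with keeping the output-layer index $d$ distinct from the last-hidden-layer index $d-1$. I expect that to be the main source of possible slips, but nothing deeper than bookkeeping is involved.
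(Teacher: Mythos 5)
Your proposal is correct and is essentially the paper's own argument: both unroll the forward recursion $z_{x,\Theta}(i,l)=G_{x,\Theta}(i,l)\sum_j\Theta(i,j,l)z_{x,\Theta}(j,l-1)$ into a sum of monomials indexed by tuples $(j_0,\ldots,j_{d-1})$ and identify these with paths, the only cosmetic difference being that you run the induction bottom-up from the input layer while the paper expands top-down from the output. Your closing remark about bookkeeping is also the right one to flag (note the paper's Table~1 writes $\Theta(1,j,d-1)$ in the output equation where it means $\Theta(1,j,d)$, as its own proof confirms).
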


\FloatBarrier
\begin{figure}[h]
\centering
\resizebox{\columnwidth}{!}{
\includegraphics[scale=1]{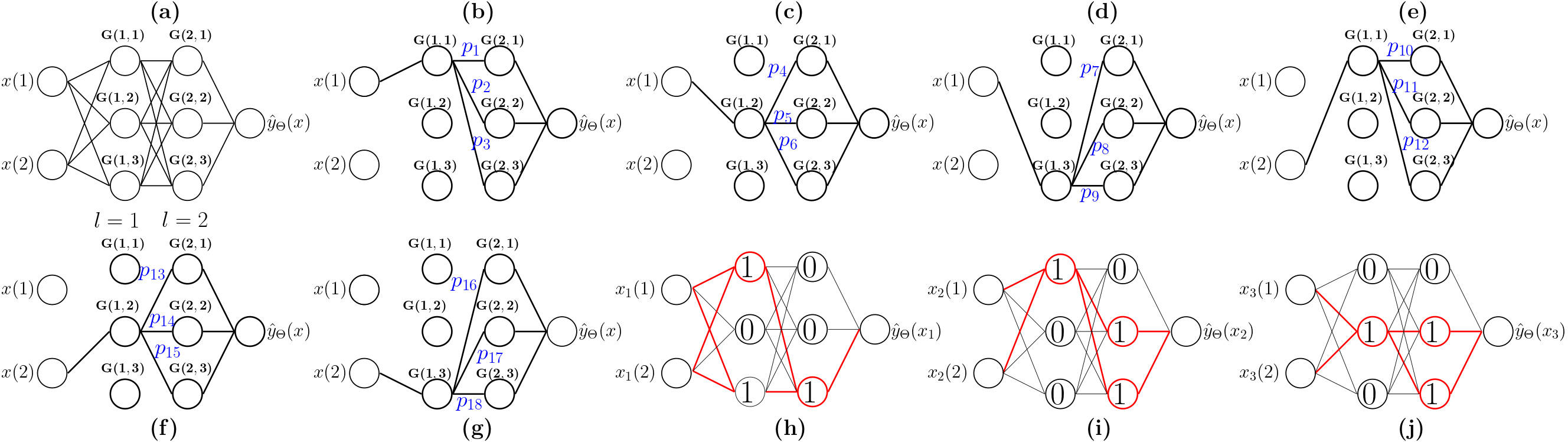}
}
\centering
\begin{minipage}{0.64\textwidth}
\resizebox{\columnwidth}{!}{
$\phi_{x_1}=[0, 0, x_1(1),0,0, 0, 0,0, x_1(1), 0, 0, x_1(2),0, 0, 0,0, 0, x_1(2)]^\top$
}
\\
\resizebox{\columnwidth}{!}{
$\phi_{x_2}=[0, x_2(1), x_2(1),0,0, 0, 0,0,0, 0, x_2(2), x_2(2),0,0, 0, 0,0,0 ]^\top$
}
\\
\resizebox{\columnwidth}{!}{
$\phi_{x_3}=[0, 0,0,0, x_3(1),x_3(1),0,0, 0,0, 0,0,0, x_3(2),x_3(2),0,0, 0 ]^\top$
}
\end{minipage}
\begin{minipage}{0.25\textwidth}
,\,$\Lambda=\left[\begin{matrix} 
2 &1& 0 \\
1 &2& 0\\
0 &0& 2
\end{matrix}\right]$
\end{minipage}
\caption{A toy illustration of gates, paths and active sub-networks. The cartoon (\textbf{a}) in the top left corner shows a DNN with $2$ hidden layers, $6$ ReLU gates $G(l,i),l=1,2,i=1,2,3$, $2$ input nodes $x(1)$ and $x(2)$ and an output node $\hat{y}_{\Theta}(x)$. Cartoons (\textbf{b}) to (\textbf{g}) show the enumeration of the paths $p_1,\ldots, p_{18}$. Cartoons (\textbf{h}), (\textbf{i}) and (\textbf{j}) show hypothetical gates for $3$ different hypothetical input examples $\{x_s\}_{s=1}^3 \in\R^2$. In each of the cartoons (\textbf{h}), (\textbf{i}) and (\textbf{j}), the $1/0$ inside the circles denotes the on/off state of the gates, and the bold paths/gates shown in red colour constitute the active sub-network for that particular input example. The NPFs are given by $\phi_{x}=[x(1)A(x,p_1),\ldots,x(1)A(x,p_{9}),x(2)A(x,p_{10}),\ldots,x(2)A(x,p_{18})]^\top$. Here, $\Lambda(1,2)=1$ because paths $p_3$ and $p_{12}$ are both active for input examples $x_1$ and $x_2$ and the input dimension is $2$.}
\label{fig:npkexample}
\end{figure}
\subsection{Neural Path Kernel : Similarity based on active sub-networks}
\begin{definition}\label{def:lambda}
 For input examples $s,s'\in[n]$, define $Act_{\Theta}(s,s')\stackrel{def}=\{p\in[P]\colon A_{\Theta}(x_s,p)= A_{\Theta}(x_{s'},p)=1\}$ to be the set of `active' paths for both $s,s'$  and $\Lambda_{\Theta}(s,s')\stackrel{def}=\frac{|Act_{\Theta}(s,s')|}{\din}$.
\end{definition}
\textbf{Remark:} Owing to the symmetry of a DNN, the same number of active paths start from any fixed input node. In \Cref{def:lambda}, $\Lambda_{\Theta}$ measures the size of the active sub-network as the total number of active paths starting from any fixed input node. For examples $s,s'\in[n],s\neq s'$, $\Lambda_{\Theta}(s,s)$ is equal to the size of the sub-network active for $s$, and $\Lambda_{\Theta}(s,s')$ is equal to the size of the sub-network active for both $s$ and $s'$. For an illustration of NPFs and $\Lambda$ please see \Cref{fig:npkexample}.
\begin{lemma}\label{lm:npk}
Let $H_{\Theta}\in\R^{n\times n}$ be the NPK matrix, whose entries are given by $H_{\Theta}(s,s')\stackrel{def}{=}\ip{\phi_{x_s,\Theta},\phi_{x_{s'},\Theta}},s,s'\in[n]$. Let $\Sigma\in\R^{n\times n}$ be the input Gram matrix with entires $\Sigma(s,s')=\ip{x_s,x_{s'}},s,s'\in[n]$. It follows that $H_{\Theta}= \Sigma\odot\Lambda_{\Theta}$, where $\odot$ is  the Hadamard product.
\end{lemma}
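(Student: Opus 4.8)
The plan is to compute the inner product $H_{\Theta}(s,s') = \ip{\phi_{x_s,\Theta},\phi_{x_{s'},\Theta}}$ directly from \Cref{def:nps}(ii) by expanding it as a sum over paths. Writing out the coordinates, we have
\begin{align*}
H_{\Theta}(s,s') = \sum_{p\in[P]} x_s(\I_0(p))\,A_{\Theta}(x_s,p)\cdot x_{s'}(\I_0(p))\,A_{\Theta}(x_{s'},p).
\end{align*}
Since each $A_{\Theta}(x,p)\in\{0,1\}$ (being a product of indicator gating values, by \Cref{def:nps}(i)), the product $A_{\Theta}(x_s,p)A_{\Theta}(x_{s'},p)$ is $1$ precisely when $p\in Act_{\Theta}(s,s')$ and $0$ otherwise. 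So the sum collapses to $\sum_{p\in Act_{\Theta}(s,s')} x_s(\I_0(p))\,x_{s'}(\I_0(p))$.

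The next step is to regroup this sum by the input node $\I_0(p)\in[\din]$ at which each active path starts. For a fixed input coordinate $i\in[\din]$, the contribution is $x_s(i)\,x_{s'}(i)$ times the number of paths $p\in Act_{\Theta}(s,s')$ with $\I_0(p)=i$. The key structural fact I would invoke here is the symmetry remark following \Cref{def:lambda}: in a fully-connected DNN, the number of active paths (active for both $s$ and $s'$) starting from any given input node is the same across all input nodes, hence equals $|Act_{\Theta}(s,s')|/\din = \Lambda_{\Theta}(s,s')$. This is the one nontrivial ingredient; it holds because the gating pattern $G_{x,\Theta}(\cdot,l)$ on the hidden layers does not depend on which input node a path originated from — once past layer $1$, the set of active hidden-unit-to-hidden-unit subpaths is identical regardless of $\I_0(p)$, so the count factors as $\din$ copies of the same number.

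Substituting, the sum becomes $\sum_{i\in[\din]} x_s(i)\,x_{s'}(i)\,\Lambda_{\Theta}(s,s') = \Lambda_{\Theta}(s,s')\sum_{i\in[\din]} x_s(i)x_{s'}(i) = \Lambda_{\Theta}(s,s')\,\ip{x_s,x_{s'}} = \Lambda_{\Theta}(s,s')\,\Sigma(s,s')$. Since this holds entrywise for every $s,s'\in[n]$, it is exactly the Hadamard product identity $H_{\Theta} = \Sigma\odot\Lambda_{\Theta}$.

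I expect the only real obstacle to be making the symmetry claim fully rigorous — i.e. justifying carefully that $\{p\in Act_{\Theta}(s,s') : \I_0(p)=i\}$ has the same cardinality for every $i\in[\din]$. The clean way to do this is to exhibit an explicit bijection: a path $p$ is determined by the tuple $(\I_0(p),\I_1(p),\dots,\I_{d-1}(p))$, its activity $A_{\Theta}(x,p)=\Pi_{l=1}^{d-1}G_{x,\Theta}(\I_l(p),l)$ depends only on $(\I_1(p),\dots,\I_{d-1}(p))$, and so for any two input nodes $i,i'$ the map replacing the first coordinate $i$ by $i'$ is a bijection between the two sets of jointly-active paths. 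Everything else is bookkeeping with finite sums, so no convergence or analytic subtleties arise.
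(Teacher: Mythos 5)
Your proposal is correct and follows essentially the same route as the paper's own (three-line) proof: expand the inner product as a sum over paths, use $A_{\Theta}\in\{0,1\}$ to restrict to $Act_{\Theta}(s,s')$, and invoke the symmetry of the fully-connected architecture to replace the per-input-node count by $\Lambda_{\Theta}(s,s')$. Your explicit bijection justifying that $|\{p\in Act_{\Theta}(s,s')\colon \I_0(p)=i\}|$ is independent of $i$ is a welcome elaboration of a step the paper only asserts in the remark following \Cref{def:lambda}.
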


\section{Dynamics of Gradient Descent with NPF and NPV Learning}\label{sec:gatedyna}
In \Cref{sec:path}, we mentioned that during gradient descent, the DNN is learning a relation $\hat{y}_{\Theta}(x)=\ip{\phi_{x,\Theta}, v_{\Theta}}$, i.e., both the NPFs and the NPV are learnt during gradient descent. In this section, we connect the newly defined quantities, i.e, NPFs and NPV to the NTK matrix $K_{\Theta}$ (see \Cref{prop:ntknew}), and re-write the gradient descent dynamics in \Cref{prop:dnnhard}. In what follows, we use $\Phi_{\Theta}=(\phi_{x_s,\Theta},s\in[n])\in\R^{P\times n}$ to denote the NPF matrix.
\subsection{Dynamics of NPFs and NPV}
\begin{definition}\label{def:npvgrad}
The gradient of the NPV of a path $p$ is defined as $\varphi_{p,\Theta}\stackrel{def}=(\partial_{\theta}v_{\Theta}(p), \theta \in\Theta)\in\R^{\dnet}$.
\end{definition}
\textbf{Remark:} The change of the NPV is given by $\dot{v}_{\Theta_t}(p)=\ip{\varphi_{p,\Theta_t},\dot{\Theta}_t}$, where $\dot{\Theta}_t$ is the change of the weights. We now collect the gradients $\varphi_{p,\Theta}$ of all the paths to define a \emph{value tangent kernel} (VTK). 
\begin{definition}
Let $\nabla_{\Theta}v_{\Theta}$ be a $\dnet\times P$ matrix of NPV derivatives given by $\nabla_{\Theta}v_{\Theta}=(\varphi_{p,\Theta},p\in[P])$. Define the VTK to be the $P\times P$ matrix given by $\V_{\Theta}=(\nabla_{\Theta}v_{\Theta})^\top(\nabla_{\Theta}v_{\Theta})$.
\end{definition}
\textbf{Remark:} An important point to note here is that the VTK is a quantity that is dependent only on the weights. To appreciate the same, consider a deep linear network (DLN) [\citenum{shamir,dudln}] which has identity activations, i.e., all the gates are $1$ for all inputs, and weights. For a DLN and DNN with identical network architecture (i.e., $w$ and $d$), and identical weights, $\V_{\Theta}$ is also identical. Thus, $\V_{\Theta}$ is the gradient based information that excludes the gating information.

The NPFs changes at those time instants when any one of the gates switches from $1$ to $0$ or from $0$ to $1$. In the time between two such switching instances, NPFs of all the input examples in the dataset remain the same, and between successive switching instances,  the NPF of at least one of the input example in the dataset changes. In what follows, in \Cref{prop:dnnhard} we re-write \Cref{prop:basic} taking into account the switching instances which we define in \Cref{def:switch}.
\begin{definition}\label{def:switch}
Define a sequence of monotonically increasing time instants $\{T_{i}\}_{i=0}^\infty$ (with $T_0=0$) to be `switching' instants if $\phi_{x_s,\Theta_t}=\phi_{x_s,\Theta_{T_i}},\forall s\in[n],\forall t\in[T_{i},T_{i+1}), i\geq 0$, and  $\forall i\geq 0$, there exists $s(i)\in[n]$ such that $\phi_{x_{s(i)},\Theta_{T_i}}\neq \phi_{x_{s(i)},\Theta_{T_{i+1}}}$.
\end{definition}
\subsection{Re-writing Gradient Descent Dynamics}
\begin{proposition}\label{prop:ntknew}
The NTK is given by $K_{\Theta}=\Phi^\top_{\Theta}\V_{\Theta}\Phi_{\Theta}$.
\end{proposition}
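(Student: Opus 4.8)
The plan is to show that every neural tangent feature factors through the neural path feature via the NPV-gradient matrix — concretely, $\psi_{x,\Theta}=(\nabla_\Theta v_\Theta)\,\phi_{x,\Theta}$ — and then obtain $K_\Theta=\Phi_\Theta^\top\V_\Theta\Phi_\Theta$ by a one-line expansion of the Gram matrix.

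First I would start from \Cref{prop:zero}, which writes $\hat y_\Theta(x)=\sum_{p\in[P]}\phi_{x,\Theta}(p)\,v_\Theta(p)$ with $\phi_{x,\Theta}(p)=x(\I_0(p))\,A_\Theta(x,p)$. The key observation — and the only subtle point — is that the gating values $G_{x,\Theta}(i,l)=\mathbbm{1}_{\{q_{x,\Theta}(i,l)>0\}}$, hence the path activities $A_\Theta(x,p)=\Pi_{l=1}^{d-1}G_{x,\Theta}(\I_l(p),l)$, hence $\phi_{x,\Theta}$ itself, are \emph{locally constant} functions of $\Theta$ at every weight vector for which all pre-activations $q_{x,\Theta}(i,l)$ are non-zero; this holds for almost every $\Theta$ and is precisely the regime in which $\hat y_\Theta(x)$ is differentiable and the continuous-time dynamics of \Cref{prop:basic} is well defined. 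Consequently $\partial_\theta\phi_{x,\Theta}(p)=x(\I_0(p))\,\partial_\theta A_\Theta(x,p)=0$ for every $\theta\in\Theta$ at such $\Theta$.

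Next, differentiating $\hat y_\Theta(x)=\sum_p\phi_{x,\Theta}(p)v_\Theta(p)$ by the product rule and discarding the vanishing $\partial_\theta\phi_{x,\Theta}(p)$ terms gives $\partial_\theta\hat y_\Theta(x)=\sum_{p\in[P]}\phi_{x,\Theta}(p)\,\partial_\theta v_\Theta(p)=\sum_{p\in[P]}\phi_{x,\Theta}(p)\,\varphi_{p,\Theta}(\theta)$, using \Cref{def:npvgrad}. Collecting over all coordinates $\theta\in\Theta$ yields the vector identity $\psi_{x,\Theta}=\nabla_\Theta\hat y_\Theta(x)=\sum_{p\in[P]}\phi_{x,\Theta}(p)\,\varphi_{p,\Theta}=(\nabla_\Theta v_\Theta)\,\phi_{x,\Theta}$, since the columns of the $\dnet\times P$ matrix $\nabla_\Theta v_\Theta$ are exactly the $\varphi_{p,\Theta}$.

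Finally I would expand the NTK entrywise: for $s,s'\in[n]$,
\begin{align*}
K_\Theta(s,s')&=\ip{\psi_{x_s,\Theta},\psi_{x_{s'},\Theta}}=\big\langle(\nabla_\Theta v_\Theta)\phi_{x_s,\Theta},\,(\nabla_\Theta v_\Theta)\phi_{x_{s'},\Theta}\big\rangle\\
&=\phi_{x_s,\Theta}^\top(\nabla_\Theta v_\Theta)^\top(\nabla_\Theta v_\Theta)\,\phi_{x_{s'},\Theta}=\phi_{x_s,\Theta}^\top\V_\Theta\,\phi_{x_{s'},\Theta},
\end{align*}
which is exactly the $(s,s')$ entry of $\Phi_\Theta^\top\V_\Theta\Phi_\Theta$ because the columns of $\Phi_\Theta$ are the $\phi_{x_s,\Theta}$. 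The main obstacle is really just the first step — justifying rigorously that the gradient "does not see" the gates — which I would handle by restricting to the full-measure set of $\Theta$ where no pre-activation vanishes (the same tacit assumption under which \Cref{prop:basic} is stated); everything downstream is linear algebra.
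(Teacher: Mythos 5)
Your proof is correct and follows essentially the same route as the paper's: establish $\psi_{x,\Theta}=(\nabla_{\Theta}v_{\Theta})\,\phi_{x,\Theta}$ and then expand the Gram matrix $K_{\Theta}=\Psi_{\Theta}^\top\Psi_{\Theta}=\Phi_{\Theta}^\top\V_{\Theta}\Phi_{\Theta}$. The only difference is that you explicitly justify why the gradient does not act on $\phi_{x,\Theta}$ (local constancy of the gates off the measure-zero set where some pre-activation vanishes), a point the paper's proof leaves implicit.
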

\textbf{Remark:} $K_{\Theta_t}$ changes during training (i) continuously at all $t\geq 0$ due to $\V_{\Theta_t}$, and (ii) at switching instants $T_{i},i=0,\ldots,\infty$ due to the change in $\Phi_{\Theta_{T_i}}$. We now describe the gradient descent dynamics taking into the dynamics of the NPV and the NPFs.
\begin{proposition}\label{prop:dnnhard}
Let $\{T_i\}_{i=0}^\infty$ be as in \Cref{def:switch}. For $t\in[T_{i},T_{i+1})$ and small step-size of GD:
\FloatBarrier
\begin{table}[h]
\centering
\begin{tabular}{ l c l l l }
Weights Dynamics &:  & $\dot{\Theta}_t$&$=$&$-\sum_{s=1}^n \psi_{x_s,\Theta_t}e_t(s)$\\
NPV Dynamics&: & $\dot{v}_{\Theta_t}(p)$&$=$&$\ip{\varphi_{p,\Theta_t},\dot{\Theta}_t},\forall p\in[P]$\\
Error Dynamics&: & $\dot{e}_t$&$=$&$-K_{\Theta_t}e_t$, where $K_{\Theta_t}=\Phi^\top_{\Theta_{T_i}}\V_{\Theta_t}\Phi_{\Theta_{T_i}}$\\
\end{tabular}
\end{table}
\end{proposition}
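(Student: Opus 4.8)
The statement is a recombination of facts already established, so the plan is mostly bookkeeping: obtain the weights line from gradient flow on the squared loss, the NPV line from the chain rule, and the error line from \Cref{prop:basic} together with \Cref{prop:ntknew}, using the switching structure of \Cref{def:switch} only to make the trajectory smooth piece by piece.

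First I would derive the weights dynamics. Gradient flow is $\dot\Theta_t=-\nabla_\Theta L(\Theta_t)$; differentiating $L(\Theta)=\tfrac12\sum_{s=1}^n(\hat y_\Theta(x_s)-y_s)^2$ and using that the NTF is $\psi_{x_s,\Theta}=\nabla_\Theta\hat y_\Theta(x_s)$ and $e_t(s)=\hat y_{\Theta_t}(x_s)-y_s$ gives $\nabla_\Theta L(\Theta_t)=\sum_{s=1}^n e_t(s)\,\psi_{x_s,\Theta_t}$, which is the first line. The NPV line is then immediate from the chain rule and \Cref{def:npvgrad}: $\dot v_{\Theta_t}(p)=\sum_{\theta\in\Theta}\partial_\theta v_{\Theta_t}(p)\,\dot\theta_t=\ip{\varphi_{p,\Theta_t},\dot\Theta_t}$ for every $p\in[P]$.

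Next, for the error dynamics: \Cref{prop:basic} gives $\dot e_t=-K_{\Theta_t}e_t$ as is, so it only remains to rewrite the kernel. \Cref{prop:ntknew} gives $K_{\Theta_t}=\Phi^\top_{\Theta_t}\V_{\Theta_t}\Phi_{\Theta_t}$; and for $t\in[T_i,T_{i+1})$, \Cref{def:switch} says $\phi_{x_s,\Theta_t}=\phi_{x_s,\Theta_{T_i}}$ for all $s\in[n]$, i.e. $\Phi_{\Theta_t}=\Phi_{\Theta_{T_i}}$, whence $K_{\Theta_t}=\Phi^\top_{\Theta_{T_i}}\V_{\Theta_t}\Phi_{\Theta_{T_i}}$, exactly as claimed.

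The one point that is more than algebra is regularity: $\hat y_\Theta(x)$ is only continuous and piecewise polynomial in $\Theta$ (it has kinks where a pre-activation $q_{x,\Theta}(i,l)$ changes sign), so a priori the derivatives used above need not exist and \Cref{prop:basic} need not apply. The definition of the switching instants is exactly what fixes this. On the half-open interval $[T_i,T_{i+1})$ the NPF of every training example is frozen at $\phi_{x_s,\Theta_{T_i}}$, so by \Cref{prop:zero} the trajectory is driven by the fixed polynomial surrogate loss $\widetilde L_i(\Theta)=\tfrac12\sum_{s=1}^n\big(\sum_{p}\phi_{x_s,\Theta_{T_i}}(p)v_\Theta(p)-y_s\big)^2$, which agrees with $L$ (and shares its gradient) along that piece of the trajectory; on each such interval the gradient flow is classically well posed and \Cref{prop:basic}, \Cref{prop:ntknew} apply verbatim. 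The global solution is then assembled by concatenating these pieces across the (at most countably many) switching times $T_i$, using continuity of $t\mapsto(\Theta_t,v_{\Theta_t},e_t)$. I expect this regularity argument to be the main — essentially the only — obstacle; everything else is routine.
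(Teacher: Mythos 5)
Your proposal is correct and follows essentially the same route as the paper, whose proof of this proposition is literally the one-liner ``follows in a similar manner as the proof of \Cref{prop:basic}'': you redo the gradient-flow computation from \Cref{prop:basic}, apply the chain rule for the NPV line, and substitute $\Phi_{\Theta_t}=\Phi_{\Theta_{T_i}}$ on each interval via \Cref{def:switch} and \Cref{prop:ntknew}. Your closing regularity discussion (the piecewise-polynomial surrogate loss on each switching interval) is a point the paper silently glosses over, and is a welcome, correct addition rather than a deviation.
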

\begin{proposition}\label{prop:condition}
Let the maximum and minimum eigenvalues of a real symmetric matrix $A$ be denoted by $\rho_{\max}(A)$ and $\rho_{\min}(A)$. Then, $\rho_{\min}(K_{\Theta})\leq \rho_{\min}(H_{\Theta})\rho_{\max}\left(\V_{\Theta}\right)$.
\end{proposition}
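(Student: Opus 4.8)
The plan is to combine the factorisation $K_{\Theta}=\Phi_{\Theta}^\top\V_{\Theta}\Phi_{\Theta}$ from \Cref{prop:ntknew} with the fact that the NPK matrix is itself the Gram matrix of the NPF matrix, i.e.\ $H_{\Theta}=\Phi_{\Theta}^\top\Phi_{\Theta}$ (immediate from the definition $H_{\Theta}(s,s')=\ip{\phi_{x_s,\Theta},\phi_{x_{s'},\Theta}}$, and consistent with \Cref{lm:npk}). Both $H_{\Theta}$ and $\V_{\Theta}=(\nabla_{\Theta}v_{\Theta})^\top(\nabla_{\Theta}v_{\Theta})$ are Gram matrices, hence real symmetric and positive semidefinite; in particular $\rho_{\max}(\V_{\Theta})\geq 0$ and $z^\top\V_{\Theta}z\leq\rho_{\max}(\V_{\Theta})\norm{z}^2$ for every vector $z$ of the appropriate dimension.

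First I would use the Rayleigh-quotient (Courant--Fischer) characterisation of the smallest eigenvalue: for any unit vector $u\in\R^n$, $\rho_{\min}(K_{\Theta})\leq u^\top K_{\Theta}u$. Substituting the factorisation gives $u^\top K_{\Theta}u=(\Phi_{\Theta}u)^\top\V_{\Theta}(\Phi_{\Theta}u)$. Next I would apply the bound $z^\top\V_{\Theta}z\leq\rho_{\max}(\V_{\Theta})\norm{z}^2$ with $z=\Phi_{\Theta}u$, obtaining $u^\top K_{\Theta}u\leq\rho_{\max}(\V_{\Theta})\norm{\Phi_{\Theta}u}^2=\rho_{\max}(\V_{\Theta})\,u^\top H_{\Theta}u$. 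Finally, I would specialise $u$ to be a unit eigenvector of the symmetric matrix $H_{\Theta}$ associated with its smallest eigenvalue, so that $u^\top H_{\Theta}u=\rho_{\min}(H_{\Theta})$, which yields $\rho_{\min}(K_{\Theta})\leq\rho_{\min}(H_{\Theta})\,\rho_{\max}(\V_{\Theta})$.

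There is essentially no hard step here; the only points needing a moment of care are (i) confirming $H_{\Theta}=\Phi_{\Theta}^\top\Phi_{\Theta}$, which is what lets us rewrite $\rho_{\max}(\V_{\Theta})\norm{\Phi_{\Theta}u}^2$ as $\rho_{\max}(\V_{\Theta})\,u^\top H_{\Theta}u$, and (ii) the PSD/sign bookkeeping: since $\V_{\Theta}$ is a Gram matrix, $\rho_{\max}(\V_{\Theta})\geq 0$, so the inequality direction is preserved throughout. One should also note the minor subtlety that the displayed chain of inequalities holds for the specific choice of $u$ (the bottom eigenvector of $H_{\Theta}$), not simultaneously for all $u$ --- which is precisely why $\rho_{\min}(H_{\Theta})$, rather than a larger Rayleigh quotient of $H_{\Theta}$, appears on the right-hand side.
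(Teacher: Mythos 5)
Your proof is correct and is essentially identical to the paper's: both arguments evaluate the Rayleigh quotient of $K_{\Theta}=\Phi_{\Theta}^\top\V_{\Theta}\Phi_{\Theta}$ at the unit eigenvector of $H_{\Theta}=\Phi_{\Theta}^\top\Phi_{\Theta}$ attaining $\rho_{\min}(H_{\Theta})$, and then bound the resulting quadratic form in $\V_{\Theta}$ by $\rho_{\max}(\V_{\Theta})\norm{\Phi_{\Theta}u}^2$. No differences worth noting.
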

\textbf{Remark:} For the NTK to be well conditioned, it is necessary for the NPK to be well conditioned. This is intuitive, in that, the closer two inputs are, the closer are their NPFs, and it is harder to train the network to produce arbitrarily different outputs for such inputs that are very close to one another.

\section{Deep Gated Networks: Decoupling Neural Path Feature and Value}\label{sec:decoupled}
The next step towards our goal of understanding the role of the gates (and gate dynamics) is the separation of the gates (i.e., the NPFs) from the weights  (i.e., the NPV).  This is achieved by a deep gated network (DGN) having two networks of identical architecture namely i) a feature network parameterised by $\Tf\in\R^{\dfnet}$, that holds gating information, and hence the NPFs and ii) a value network that holds the NPVs parameterised by $\Tv\in\R^{\dvnet}$. As shown in \Cref{fig:dgn}, the gates/NPFs are generated in the feature network and are used in the value network. In what follows, we let $\Tdgn=(\Tf,\Tv)\in\R^{\dfnet+\dvnet}$ to denote the combined parameters of a DGN. 
\begin{figure}[t] 
\begin{minipage}{0.79\columnwidth}
\resizebox{\columnwidth}{!}{
\begin{tabular}{|  l | l |}\hline
 Feature Network (NPF)& Value Network (NPV)\\
 $z^{\text{f}}_{x}(0)=x$ &$z^{\text{v}}_{x}(0)=x$ \\
$q^{\text{f}}_{x}(i,l)=\sum_{j} \Tf(i,j,l)\cdot z_{x}(j,l-1)$ & $q^{\text{v}}_{x}(i,l)=\sum_{j} \Tv(i,j,l)\cdot z^{\text{v}}_{x}(j,l-1)$\\
$z^{\text{f}}_{x}(i,l)= q^{\text{f}}_{x}(i,l)\cdot\mathbbm{1}_{\{q^{\text{f}}_{x}(i,l)>0\}}$& $z^{\text{v}}_{x}(i,l)= q^{\text{v}}_{x}(i,l)\cdot G_{x}(i,l)$ \\\hline
 \multicolumn{2}{|c|}{Output: $\hat{y}_{\Tdgn}(x)= \sum_{j} \Tv(1,j,l)\cdot z^{\text{v}}_{x}(j,d-1)$}\\\hline
\multicolumn{2}{|l|}{{Hard ReLU: $G_{x}(i,l)=\mathbbm{1}_{\{q^{\text{f}}_{x}(i,l)>0\}}$ or Soft-ReLU: $G_{x}(i,l)={1}/{\left(1+\exp(-\beta\cdot q^{\text{f}}_{x}(i,l)>0)\right)} $}}\\\hline
\end{tabular}
}
\end{minipage}
\begin{minipage}{0.20\columnwidth}
\resizebox{\columnwidth}{!}{
\includegraphics[scale=0.4]{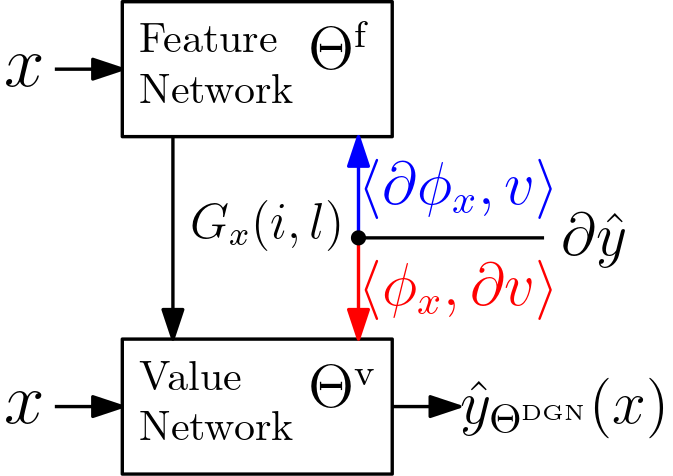}
}
\end{minipage}
\caption{Deep gated network (DGN) setup.  The pre-activations $q^{\text{f}}_{x}(i,l)$ from the feature network are used to derive the gating values $G_{x}(i,l)$. The range of the indices $i,j,l$ is the same as in \Cref{tb:basic}. }
\label{fig:dgn}
\end{figure}

\textbf{Regimes of a DGN:} We can configure the DGN in four different \emph{regimes} by controlling (i)  the trainability of $\Tf$, and (ii) the initialisation of $\Tf_0$. By setting $\Tf$ to be \emph{non-trainable/trainable} we can compare fixed NPFs and NPFs that change during training. By setting $\Tf_0$ to be \emph{random/pre-trained} we can compare random NPFs and learnt NPFs. By setting $\Tf_0=\Tv_0$ we can mimic the initialisation of a standard DNN with ReLU. The four regimes of a DGN are described below (note that in all the regimes $\Tv_0$ is randomly initialised,  $\Tv$ is trainable and $\hat{y}_{\Tdgn}$ is the output node).

1. Decoupled Learning of NPF (\textbf{DLNPF}): Here, $\Tf$ is trainable, and hence the NPFs are learnt in a decoupled manner (as opposed to the standard DNN with ReLU where a single parameter is responsible for learning NPFs and NPV). Here, soft-ReLU gate with $\beta>0$ is used to ensure gradient flow via feature network. $\Tf_0,\Tv_0$ are initialised at random and are statistically independent.

2. Fixed Learnt NPF (\textbf{FLNPF}): Here $\Tf$ is non-trainable, and $\Tf_0$ copied from a  pre-trained DNN with ReLU (NPFs are learnt). $\Tv_0$ are initialised at random and is statistically independent of $\Tf_0$.

3. Fixed Random NPF with Independent Initialisation (\textbf{FRNPF-II}): Here,  $\Tf_0,\Tv_0$ are initialised at random are statistically independent. Also, $\Tf$ is non-trainable, i.e.,  the NPFs are random and fixed.

4. Fixed Random NPF with Dependent Initialisation (\textbf{FRNPF-DI}):  Here, the initialisation mimics standard DNN with ReLU, i.e., $\Tf_0=\Tv_0$ are initialised at random, and $\Tf$ is non-trainable.

\textbf{Remark:} The DGN and its regimes are idealised models to understand the role of the gates, and not alternate proposals to replace standard DNNs with ReLU activations. 
\begin{proposition}[Gradient Dynamics in a DGN]\label{prop:dgn} Let $\psif_{x,\Tdgn}\stackrel{def}=\nabla_{\Tf}\hat{y}_{\Tdgn}(x) \in \R^{\dfnet}$, $\psiv_{x,\Tdgn}\stackrel{def}=\nabla_{\Tv}\hat{y}_{\Tdgn}(x) \in \R^{\dvnet}$. Let $K^{\text{v}}_{\Tdgn}$ and $K^{\text{f}}_{\Tdgn}$ be $n\times n$ matrices with entries $K^{\text{v}}_{\Tdgn}(s,s')=\ip{\psiv_{x_s,\Tdgn},\psiv_{x_{s'},\Tdgn}}$ and $K^{\text{f}}_{\Tdgn}(s,s')=\ip{\psif_{x_s,\Tdgn},\psif_{x_{s'},\Tdgn}}$. For infinitesimally small step-size of GD, the error dynamics in a DGN (in the DLNPF and FNPF modes) is given by:
\FloatBarrier
\begin{table}[h]
\resizebox{\columnwidth}{!}{
\begin{tabular}{| l | l l l | l |}\hline
	Dynamics&		&&Decoupled Learning& Fixed NPF\\\hline
		&		&&& \\
Weight  & $\dot{\Theta}^{\text{v}}_t$&$=$&$-\sum_{s=1}^n \psiv_{x,\Tdgn_t}e_t(s),\dot{\Theta}^{\text{f}}_t=-\sum_{s=1}^n \psif_{x,\Tdgn_t}e_t(s)$ & $\dot{\Theta}^{\text{v}}_t$ same as (DLNFP),  $\dot{\Theta}^{\text{f}}_t=0$\\
NPF & $\dot{\phi}_{x_s,\Tf_t}(p)$&$=$&$x(\I_0(p))\sum_{\Tf\in\Tf}\partial_{\Tf}A_{\Tf_t}(x_s,p)\dot{\theta}^{\text{f}}_t,\forall p\in[P], s\in[n]$& $\dot{\phi}_{x_s,\Tf_t}(p)=0$\\
NPV & $\dot{v}_{\Tv_t}(p)$&$=$&$\sum_{\tv\in\Tv}\partial_{\tv}v_{\Tv_t}(p)\dot{\theta}^{\text{v}}_t,\forall p\in[P]$ & $\dot{v}_{\Tv_t}(p)$ same as DLNPF\\
Error & $\dot{e}_t$&$=$&$-\left(K^{\text{v}}_{\Tdgn}+ K^{\text{f}}_{\Tdgn}\right)e_t$  & $\dot{e}_t=-\left(K^{\text{v}}_{\Tdgn}\right)e_t$ \\\hline
\end{tabular}
}
\end{table}
\end{proposition}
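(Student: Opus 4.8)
The plan is to obtain all four rows of the table from a single starting point: the combined parameters $\Tdgn_t=(\Tf_t,\Tv_t)$ evolve by gradient flow $\dot{\Tdgn}_t=-\nabla_{\Tdgn}L_{\Tdgn_t}$ on the squared loss $L(\Tdgn)=\tfrac12\sum_{s=1}^n(\hat{y}_{\Tdgn}(x_s)-y_s)^2$, exactly as in \Cref{prop:basic}, and then to exploit that $\Tf$ and $\Tv$ occupy disjoint coordinate blocks of $\Tdgn$, so that gradients and inner products split cleanly across the two networks.

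First I would establish the weight dynamics. Since $\nabla_{\Tdgn}L=\sum_{s=1}^n e_t(s)\,\nabla_{\Tdgn}\hat{y}_{\Tdgn}(x_s)$ and the gradient with respect to $\Tdgn$ restricts to $\nabla_{\Tf}\hat{y}_{\Tdgn}(x)=\psif_{x,\Tdgn}$ on the feature block and to $\nabla_{\Tv}\hat{y}_{\Tdgn}(x)=\psiv_{x,\Tdgn}$ on the value block, projecting the flow onto the two blocks gives $\dot{\Theta}^{\text{f}}_t=-\sum_{s}\psif_{x_s,\Tdgn_t}e_t(s)$ and $\dot{\Theta}^{\text{v}}_t=-\sum_{s}\psiv_{x_s,\Tdgn_t}e_t(s)$; in the FNPF mode $\Tf$ is non-trainable, so $\dot{\Theta}^{\text{f}}_t=0$ by construction and only the value equation remains. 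Next I would read off the NPF and NPV dynamics from \Cref{def:nps}: $v_{\Tv}(p)=\prod_{l=1}^d\Tv(\I_l(p),\I_{l-1}(p),l)$ depends on $\Tv$ alone, and $\phi_{x_s,\Tf}(p)=x_s(\I_0(p))A_{\Tf}(x_s,p)$ depends on $\Tf$ alone because the gates are generated inside the feature network. Applying the chain rule along the trajectory then yields $\dot v_{\Tv_t}(p)=\sum_{\tv\in\Tv}\partial_{\tv}v_{\Tv_t}(p)\,\dot{\theta}^{\text{v}}_t$ and $\dot\phi_{x_s,\Tf_t}(p)=x_s(\I_0(p))\sum_{\tf\in\Tf}\partial_{\tf}A_{\Tf_t}(x_s,p)\,\dot{\theta}^{\text{f}}_t$, which are the two middle rows; in the FNPF mode the second vanishes since $\dot{\theta}^{\text{f}}_t=0$.

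For the error dynamics I would differentiate $e_t(s)=\hat{y}_{\Tdgn_t}(x_s)-y_s$ along the flow, obtaining $\dot e_t(s)=\ip{\nabla_{\Tdgn}\hat{y}_{\Tdgn_t}(x_s),\dot{\Tdgn}_t}=\ip{\psif_{x_s,\Tdgn_t},\dot{\Theta}^{\text{f}}_t}+\ip{\psiv_{x_s,\Tdgn_t},\dot{\Theta}^{\text{v}}_t}$ by the block split. Substituting the weight dynamics of the first step gives $\dot e_t(s)=-\sum_{s'}\big(\ip{\psif_{x_s},\psif_{x_{s'}}}+\ip{\psiv_{x_s},\psiv_{x_{s'}}}\big)e_t(s')$, i.e.\ $\dot e_t=-(K^{\text{f}}_{\Tdgn}+K^{\text{v}}_{\Tdgn})e_t$ in the decoupled mode; since the $K^{\text{f}}_{\Tdgn}$ contribution is carried entirely by $\dot{\Theta}^{\text{f}}_t$, it drops when $\Tf$ is frozen, leaving $\dot e_t=-K^{\text{v}}_{\Tdgn}e_t$ in the FNPF mode.

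The whole argument is the chain rule together with the block structure of the parameters, so no substantive inequality is needed; the one point that requires care is the regularity of $t\mapsto\hat{y}_{\Tdgn_t}(x_s)$. With soft-ReLU gates ($\beta>0$) the map is $C^1$ and the computation above is literal. With hard-ReLU gates the gate outputs are piecewise constant in $\Tf$, so $\nabla_{\Tf}\hat{y}_{\Tdgn}$ picks up no contribution from the gates (hence $\partial_{\tf}A_{\Tf}=0$ almost everywhere), the NPFs are locally constant, and the identities should be read as holding on each interval $[T_i,T_{i+1})$ between the switching instants of \Cref{def:switch}; this is precisely why the soft-ReLU is introduced in the DLNPF mode to keep gradient flowing through the feature network. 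I expect this bookkeeping about switching instants to be the main, though minor, obstacle.
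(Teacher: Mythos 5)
Your proposal is correct and follows essentially the same route as the paper, which simply notes that \Cref{prop:dgn} ``follows in a similar manner as the proof of \Cref{prop:basic}'': gradient flow on the combined parameter $\Tdgn=(\Tf,\Tv)$, the block split of the gradient into $\psif$ and $\psiv$, the chain rule for the NPF/NPV rows, and the resulting decomposition $\dot e_t=-(K^{\text{f}}_{\Tdgn}+K^{\text{v}}_{\Tdgn})e_t$ with the feature term dropping when $\Tf$ is frozen. Your added care about soft- versus hard-ReLU regularity and the switching instants is a welcome refinement the paper leaves implicit.
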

\textbf{Remark:} The gradient dynamics in a DGN specified in \Cref{prop:dgn} is similar to the gradient dynamics in a DNN specified in \Cref{prop:dnnhard}. Important difference is that (in a DGN) the NTF $\psi_{x,\Theta}=(\psif_{x,\Theta},\psiv_{x,\Theta})\in\R^{\dfnet+\dvnet}$, wherein, $\psiv_{x,\Tdgn}\in\R^{\dvnet}$ and $\psif_{x,\Tdgn}\in\R^{\dfnet}$ flow through the value and feature networks respectively. Here, NPF learning is captured explicitly by $\psif$ and $\kf$. 

\section{Learning with Fixed NPFs: Role Of Active Sub-Networks}\label{sec:infomeasure}
We now show that in the fixed NPF regime, at randomised initialisation, $\text{NTK}=\text{const}\times\text{NPK}$. Due to the \emph{Hadamard} structure of the NPK (\Cref{lm:npk}) it follows that the active sub-networks are fundamental entities in DNNs (theoretically justifying  ``Claim I''). 
\begin{theorem} \label{th:main} Let $H_{\text{FNFP}}$ refer to $H_{\Tf_0}$. Let (i) $\Tv_0\in\R^{\dvnet}$ be statistically independent of  $\Tf_0\in\R^{\dfnet}$, and (ii) $\Tv_0$ be sampled i.i.d from symmetric Bernoulli over $\{-\sigma,+\sigma\}$. For $\sigma=\frac{\sigma'}{\sqrt{w}}$,  as $w\ra\infty$,  \begin{align*}K^{\text{v}}_{\Tdgn_0}\ra K^{(d)}_{\text{FNPF}} =d\cdot \sigma^{2(d-1)} \cdot H_{\text{FNPF}}\end{align*}
\end{theorem}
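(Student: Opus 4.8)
The plan is to condition on $\Tf_0$, which is legitimate since $\Tv_0$ is statistically independent of $\Tf_0$; after conditioning, the fixed NPFs $\phi_{x_s,\Tf_0}$ — equivalently the path activities $A_{\Tf_0}(x_s,p)\in\{0,1\}$ — are deterministic constants and all the randomness lives in $\Tv_0$. First I would write down the path decomposition of the DGN output, the value-network analogue of \Cref{prop:zero}: $\hat y_{\Tdgn}(x)=\sum_{p\in[P]}x(\I_0(p))\,A_{\Tf}(x,p)\,v_{\Tv}(p)$ with $v_{\Tv}(p)=\prod_{l=1}^d\Tv(\I_l(p),\I_{l-1}(p),l)$. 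Differentiating with respect to a single value weight $\tv=\Tv(i,j,l)$ and using that a path carries exactly one weight per layer gives $\psiv_{x,\Tdgn}(\tv)=\sum_{p\,:\,p\ni\tv}x(\I_0(p))\,A_{\Tf}(x,p)\prod_{l'\neq l}\Tv(\I_{l'}(p),\I_{l'-1}(p),l')$, the sum running over the paths routed through the edge $(i,j,l)$.

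Next I would expand $K^{\text{v}}_{\Tdgn_0}(s,s')=\sum_{\tv\in\Tv}\psiv_{x_s,\Tdgn_0}(\tv)\psiv_{x_{s'},\Tdgn_0}(\tv)$ into a triple sum over a layer $l$, an edge at that layer, and a pair of paths $p,p'$ both routed through it; the summand is $x_s(\I_0(p))x_{s'}(\I_0(p'))A_{\Tf_0}(x_s,p)A_{\Tf_0}(x_{s'},p')$ times the product of the $2(d-1)$ weight factors of $p$ and $p'$ away from layer $l$. Taking $\E{\,\cdot\mid\Tf_0}$ and using that the entries of $\Tv_0$ are independent, mean zero, with $\Tv_0(\cdot)^2=\sigma^2$ surely, this product has conditional mean $\sigma^{2(d-1)}$ when $p=p'$ and $0$ otherwise: two distinct paths through a common layer-$l$ edge must disagree at some layer $m\neq l$, which forces an edge of $p$ at some layer $\neq l$ to be unmatched in the combined multiset, leaving an isolated mean-zero factor. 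Since each path carries exactly one edge in layer $l$, summing the diagonal $p=p'$ over all layer-$l$ edges collapses to $\sum_{p\in[P]}$, and summing over $l=1,\dots,d$ produces the overall factor $d$; the resulting expression $d\,\sigma^{2(d-1)}\sum_p x_s(\I_0(p))x_{s'}(\I_0(p))A_{\Tf_0}(x_s,p)A_{\Tf_0}(x_{s'},p)$ is exactly $d\,\sigma^{2(d-1)}\ip{\phi_{x_s,\Tf_0},\phi_{x_{s'},\Tf_0}}=d\,\sigma^{2(d-1)}H_{\text{FNPF}}(s,s')$. Thus $\E{K^{\text{v}}_{\Tdgn_0}\mid\Tf_0}$ equals the claimed limit for \emph{every} $w$, and by \Cref{lm:npk} it has the Hadamard form $d\,\sigma^{2(d-1)}\,\Sigma\odot\Lambda_{\Tf_0}$.

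It then remains to show the fluctuation vanishes, i.e.\ $\mathrm{Var}\big(K^{\text{v}}_{\Tdgn_0}(s,s')\mid\Tf_0\big)\to 0$ as $w\to\infty$, uniformly in $\Tf_0$; this is the real work. Writing $K^{\text{v}}_{\Tdgn_0}(s,s')-d\,\sigma^{2(d-1)}H_{\text{FNPF}}(s,s')$ as the sum of the off-diagonal ($p\neq p'$) contributions $\sum_\alpha c_\alpha X_\alpha$, with $|c_\alpha|\le\max_r\|x_r\|^2$ and each $X_\alpha$ a $\pm\sigma^{2(d-1)}$ monomial in the Bernoulli weights, one gets $\E{(\,\cdot\,)^2\mid\Tf_0}=\sigma^{4(d-1)}\sum_{\alpha,\beta}c_\alpha c_\beta\,\E{\operatorname{sgn}(X_\alpha)\operatorname{sgn}(X_\beta)}$, and $\E{\operatorname{sgn}(X_\alpha)\operatorname{sgn}(X_\beta)}$ vanishes unless the combined multiset of weights occurring with odd multiplicity in $X_\alpha$ and in $X_\beta$ has all multiplicities even. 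The crux is therefore a collision count: the number of configuration pairs $(\alpha,\beta)$ — each a layer, an edge, and two distinct paths through it — whose four associated paths pair up layer by layer in this way. Analysing the edge matching layer by layer (each edge equality forces two adjacent node-index equalities, which chain along the paths), the even-multiplicity constraint costs essentially half of the $\approx 4(d-1)$ index degrees of freedom, and the requirement $p\neq p'$ inside a configuration removes one further factor of $w$, so this count is $O(\operatorname{poly}(d,\din)\,w^{2d-3})$; since $\sigma^{4(d-1)}=\sigma'^{4(d-1)}w^{-2(d-1)}$, the product is $O(1/w)$. Chebyshev's inequality then gives convergence in probability conditionally on $\Tf_0$, hence unconditionally (the variance bound being uniform in $\Tf_0$), and entrywise convergence of a fixed $n\times n$ matrix upgrades to the matrix statement. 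I expect the bookkeeping in the collision count — the boundary layers $l\in\{1,d\}$, the "$4+0$" versus "$2+2$" pairings at a layer, and pairings that switch partner between adjacent layers — to be the only genuinely delicate part; the exact mean and the Hadamard structure are forced by the Bernoulli algebra.
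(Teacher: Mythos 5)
Your proposal is correct and follows essentially the same route as the paper: the exact identity $\E{K^{\text{v}}_{\Tdgn_0}}=d\,\sigma^{2(d-1)}H_{\text{FNPF}}$ obtained by letting the independent mean-zero Bernoulli weights kill all cross-path terms (the paper's \Cref{lm:dot} and \Cref{th:exp}), followed by a variance bound of order $\din^2 d^3/w$ from counting weight configurations in which the four paths pair up layer by layer — your ``collision count'' with the $O(w^{2d-3})$ tally is exactly the paper's splicing/crossing enumeration in \Cref{lm:base} and \Cref{th:var}. Your explicit conditioning on $\Tf_0$ and the Chebyshev step only make precise the mode of convergence that the paper leaves implicit.
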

$\bullet$ \textbf{Statistical independence} of $\Tf_0$ and $\Tv_0$ assumed in \Cref{th:main} holds only for the three regimes namely DLNPF, FLNPF, FRNPF-II. In DNN with ReLU (and in FRNPF-DI) $\Tf_0=\Tv_0$, and hence the assumption in \Cref{th:main} does not capture the conditions at initialisation in a DNN with ReLU. However, it is important to note that the current state-of-the-art analysis for DNNs is in the infinite width regime [\citenum{ntk,arora2019exact,cao2019generalization}], wherein, the activations undergo only an order of $\frac{1}{\sqrt{w}}$ change during training. Since as $w\ra\infty$, $\frac{1}{\sqrt{w}}\ra 0$, assuming the NPFs (i.e., gates) to be fixed during training is not a strong one. With fixed NPFs, statistical independence of $\Tv_0$ is a natural choice. Also, we do not observe significant empirical difference between the FRNPF-DI and FRNPF-II regimes (see \Cref{sec:experiments}).

$\bullet$ \textbf{Role of active sub-networks:} Due to the statistical independence of $\Tf_0$ and $\Tv_0$, $K^{(d)}$ in prior works [\citenum{ntk,arora2019exact,cao2019generalization}] essentially becomes $K^{(d)}_{\text{FNPF}}$ in \Cref{th:main}.  From previous results [\citenum{arora2019exact,cao2019generalization}], it follows that as $w\ra\infty$, the optimisation and generalisation properties of the fixed NPF learner can be tied down to the infinite width NTK of the FNPF learner $K^{(d)}_{\text{FNPF}}$ and hence to $H_{\text{FNPF}}$ (treating $d\sigma^{2(d-1)}$ as a scaling factor).  We can further breakdown $H_{\text{FNPF}}=\Sigma\odot{\Lambda}_{\text{FNPF}}$, where $\Lambda_{\text{FNPF}}=\Lambda_{\Tf_0}$. This justifies ``Claim I'',  because $\Lambda_{\text{FNPF}}$ is the correlation matrix of the active sub-network overlaps. We also observe in our experiments that with the learnt NPFs (i.e., in the FLNPF regime), we can train the NPV without significant loss of performance, a fact that underscores the importance of the NPFs/gates.

 $\bullet$ \textbf{Scaling factor}  `$d$' is due to the `$d$' weights in a path and at $t=0$ the derivative of the value of a given path with respect any of its weights is $\sigma^{(d-1)}$. In the case of random NPFs obtained by initialising $\Tf_0$ at random (by sampling from a symmetric distribution), we expect $\frac{w}2$ gates to be `on' every layer, so $\sigma=\sqrt{\frac{2}{w}}$ is a normalising choice, in that, the diagonal entries of $\sigma^{2(d-1)}{\Lambda}_{\text{FNPF}}(s,s)\approx 1$ in this case.

$\bullet$ \Cref{th:main} can also be applied when the fixed NPFs are obtained from a finite width feature network by a `repetition trick' (see \Cref{sec:finite}). We  also apply \Cref{th:main} to understand the role of width and depth on a pure memorisation task  (see \Cref{sec:mem}).

\section{Experiments: Fixed NPFs, NPF Learning and Verification of Claim II}\label{sec:experiments} 
In this section, we experimentally investigate the role of gates and empirically justify ``Claim II'', i.e., learning of the active sub-networks improves generalisation.  In our framework, since the gates are encoded in the NPFs, we justify ``Claim II'' by showing that NPF learning improves generalisation. For this purpose, we make use of the DGN setup and its four different regimes.  We show that NPF learning also explains the performance gain of finite width CNN over the pure kernel method based on the exact infinite width CNTK. We now describe the setup and then discuss the results.
\subsection{Setup}
\textbf{Datasets:} We used standard datasets namely MNIST and CIFAR-10, with categorical cross entropy loss. We also used a `Binary'-MNIST dataset (with squared loss), which is MNIST with only the two classes corresponding to digits $4$ and $7$, with label $-1$ for digit $4$ and $+1$ for digit $7$.

\textbf{Optimisers:} We used stochastic gradient descent (SGD) and \emph{Adam} [\citenum{adam}] as optimisers. In the case of SGD, we tried constant step-sizes in the set $\{0.1,0.01,0.001\}$ and chose the best. In the case of Adam the we used a constant step size of $3e^{-4}$. In both cases, we used batch size to be $32$.  

\textbf{Architectures:} For MNIST we used fully connected (FC) architectures with $(w=128,d=5)$. For CIFAR-10, we used a \emph{`vanilla' convolutional} architecture namely VCONV and a \emph{convolutional} architecture with \emph{global-average-pooling} (GAP) namely GCONV. GCONV had no pooling, residual connections, dropout or batch-normalisation, and is given as follows: input layer is $(32, 32, 3)$, followed by $4$ convolution layers, each with a stride of $(3, 3)$ and channels $64$, $64$, $128$, $128$ respectively. The convolutional layers are followed by GAP layer, and a FC layer with $256$ units, and a soft-max layer to produce the final predictions. VCONV is same as GCONV without the GAP layer.

 \begin{table}[t]
\resizebox{\columnwidth}{!}{
\begin{tabular}{|c|c|c|c|c|c|c|c|}\hline
Arch			&Optimiser	&Dataset		&FRNPF (II) 			&FRNPF (DI)			&DLNPF $(\beta=4)$					&FLNPF						&ReLU\\\hline
FC			&SGD		&MNIST 		&$95.85\pm0.10$		&$95.85\pm0.17$		&$97.86\pm0.11$			&$97.10\pm0.09$				&$97.85\pm0.09$\\\hline
FC			&Adam		&MNIST 		&$96.02\pm0.13$		&$96.09\pm0.12$		&$\mathbf{98.22\pm0.05}$	&$\mathbf{97.82\pm0.02}$		&$\mathbf{98.14\pm0.07}$\\\hline
VCONV		&SGD		&CIFAR-$10$	&$58.92\pm0.62$		&$58.83\pm0.27$ 		&$63.21\pm0.07$			&$63.06\pm0.73$				&$67.02\pm0.43$\\\hline
VCONV		&Adam		&CIFAR-$10$	&$64.86\pm1.18$		&$64.68\pm0.84$		&$\mathbf{69.45\pm0.76}$	&$\mathbf{71.40\pm0.47}$			&$\mathbf{72.43\pm0.54}$\\\hline
GCONV	&SGD		&CIFAR-$10$	&$67.36\pm0.56$		&$66.86\pm0.44$		&$\mathbf{74.57\pm0.43}$			&$\mathbf{78.52\pm0.39}$				&$\mathbf{78.90\pm0.37}$\\\hline
GCONV	&Adam		&CIFAR-$10$	&$67.09\pm0.58$		&$67.08\pm0.27$		&$\mathbf{77.12\pm0.19}$	&$\mathbf{79.68\pm0.32}$		&$\mathbf{80.32\pm0.35}$\\\hline
\end{tabular}
}
\caption{Shows the test accuracy of different NPFs learning settings. Each model is trained close to $100\%$. In each run, the best test accuracy is taken and the table presents values averaged over $5$ runs.}
\label{tb:npfs}
\end{table}
\normalsize
\subsection{Result Discussion}
The results are tabulated in \Cref{tb:npfs}. In what follows, we discuss the key observations.

$1.$ \textbf{Decoupling:} There is no significant performance difference between FRNPF-II and FRNPF-DI regimes, i.e., the statistical independence of $\Tv_0$ and $\Tf_0$ did not affect performance. Also, DLNPF performed better than FRNPF, which shows  that the NPFs can also be learnt in a decoupled manner.

$2.$ \textbf{Performance gain of CNN over CNTK} can be explained by NPF learning. For this purpose, we look at the performance of GCONV models trained with \emph{Adam} on CIFAR-10 (last row of \Cref{tb:npfs}). Consider models grouped as $S_1=\{$FRNPF-DI,FRNPF-II$\}$, $S_2=\{$CNTK$\}$ that have no NPF learning versus models grouped as $S_3=\{$ FLNPF, ReLU$\}$ that have either NPF learning during training or a fixed learnt NPF. The group $S_3$ performs better than $S_1$ and $S_2$. Note that, both $S_1$ and $S_3$ are finite width networks, yet, performance of $S_1$ is worse than CNTK, and the performance of $S_3$ is better than CNTK. Thus finite width alone does not explain the performance gain of CNNs over CNTK. Further, all models in group $S_3$ are finite width and also have NPF learning. Thus, finite width together with NPF learning explains the performance gain of CNN over CNTK. 

$3.$ \textbf{Standard features vs NPFs:} The standard view is that the outputs of the intermediate/hidden layers learn lower to higher level features (as depth proceeds) and the final layer learns a linear model using the hidden features given by the penultimate layer outputs. This view of feature learning holds true for all the models in $S_1$ and $S_3$. However, only NPF learning clearly \emph{discriminates} between the different regimes $S_1, S_2$ and $S_3$. Thus, NPF learning is indeed a unique aspect in deep learning. Further, in the FLNPF regime, using the learnt NPFs and training the NPV from scratch, we can recover the test accuracy. Thus almost all useful information is stored in the gates, a novel observation which underscores the need to further investigate the role of the gates.

$4.$ \textbf{Continuous NPF Learning:} The performance gap between FRNPF and ReLU is continuous. We trained a standard ReLU-CNN with GCONV architecture (parameterised by $\bar{\Theta}$) for $60$ epochs. We sampled $\bar{\Theta}_t$ at various \emph{stages} of training, where stage $i$ corresponds to $\bar{\Theta}_{10\times i}, i=1,\ldots,6$. For these $6$ stages, we setup $6$ different FLNPFs, i.e., FLNPF-$1$ to FLNPF-$6$. We observe that the performance of FLNPF-$1$ to FLNPF-$6$ increases monotonically, i.e., FLNPF-$1$ performs ($72\%$) better than FRNPF ($67.09\%$),  and FLNPF-$6$ performs as well as ReLU (see left most plot in \Cref{fig:dynamics}). The performance of CNTK of \cite{arora2019exact} is $77.43\%$. Thus, through its various stages, FLNPF starts from below $77.43\%$ and surpasses to reach $79.43\%$, which implies performance gain of CNN is due to learning of NPFs.

\begin{figure}[h]
\centering
\includegraphics[scale=0.23]{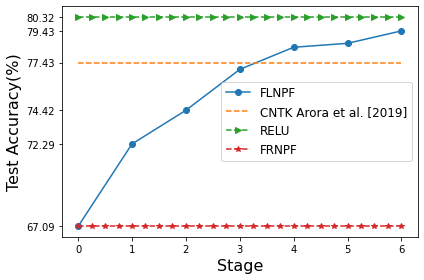}
\includegraphics[scale=0.23]{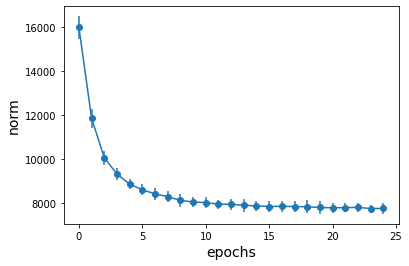}
\includegraphics[scale=0.23]{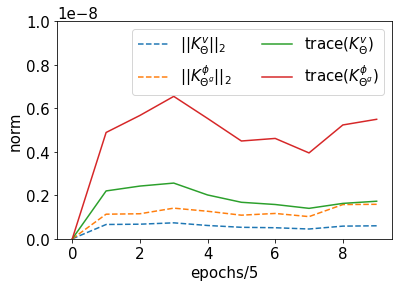}
\caption{Dynamics of NPF Learning. }
\label{fig:dynamics}
\end{figure}

$5.$ \textbf{Dynamics of active sub-networks during training:} We trained a FC network ($w=100$, $d=5$) on the ``Binary''-MNIST dataset. Let $\widehat{H}_{\Theta_t}=\frac{1}{trace(H_{\Theta_t})}H_{\Theta_t}$ be the normalised NPK matrix. For a subset size, $n'=200$ ($100$ examples per class) we plot $\nu_t=y^\top (\widehat{H}_{\Theta_t})^{-1} y$, (where $y\in\{-1,1\}^{200}$ is the labelling function), and observe that $\nu_t$ reduces as training proceeds (see middle plot in \Cref{fig:dynamics}). Note that, $\nu_t=\sum_{i=1}^{n'}(u_{i,t}^\top y)^2 (\hat{\rho}_{i,t})^{-1}$, where $u_{i,t}\in \R^{n'}$ are the orthonormal eigenvectors of $\widehat{H}_{\Theta_t}$ and $\hat{\rho}_{i,t},i\in[n']$ are the corresponding eigenvalues. Since $H_{\Theta_t}=\Sigma\odot \Lambda_{\Theta_t}$, we can infer that $\Lambda_{\Theta_t}$ is learnt during training.

$6.$ \textbf{How are NPFs learnt?} In order to understand this, in the case of DNNs with ReLU, for the purpose of analysis,  we can replace the hard-ReLU gate by the soft-ReLU gate. Now, the gradient is given by $\partial_{\theta}\hat{y}_{\Theta}(x)=\ip{\partial_{\theta} \phi_{x,\Theta},v_{\Theta}}+\ip{\phi_{x,\Theta},\partial_{\theta} v_{\Theta}}$, where the two terms on the right can be interpreted as NPF and NPV gradients. Using the soft-ReLU ensures $\psif\neq 0$ (note that $\psif=0$ for hard-ReLU due to its $0/1$ state). We can obtain further insights in the DLNPF regime, wherein, the NTK is given by $K_{\Tdgn}=K^{\text{v}}_{\Tdgn}+K^{\text{f}}_{\Tdgn}$. For MNIST, we compared $K^{\text{v}}_{\Tdgn}$ and $K^{\text{f}}_{\Tdgn}$ (calculated on $100$ examples in total with $10$ examples per each of the $10$ classes) using their trace and Frobenius norms, and we observe that $K^{\text{v}}_{\Tdgn}$ and $K^{\text{f}}_{\Tdgn}$ are in the same scale (see right plot in \Cref{fig:dynamics}), which perhaps shows that both $K^{\text{f}}_{\Tdgn}$ and $K^{\text{v}}_{\Tdgn}$ are equally important for obtaining good generalisation performance. Studying $K^{\text{f}}_{\Tdgn}$ responsible for NPF learning is an interesting future research direction.

\section{Related Work}
\setcitestyle{authoryear}
\textbf{Gated Linear Unit (GaLU)} networks with a single hidden layer was theoretically analysed by \cite{sss,fiat}. In contrast, our NPFs/NPV formulation enabled us to analyse DGNs of any depth $d$. The fixed random filter setting of \cite{fiat} is equivalent to (in our setting) the FRNPF regime of DGNs with a single hidden layer. They test the hypothesis (on MNIST and Fashion-MNIST) that the effectiveness of ReLU networks is mainly due to the training of the linear part (i.e., weights) and not the gates. They also show that a ReLU network marginally outperforms a GaLU network (both networks have a single hidden layer), and propose a non-gradient based algorithm (which has a separate loop to randomly search and replace the gates of the GaLU network) to close the margin between the GaLU and ReLU networks. We observe a similar trend in that, the FRNPFs (here only NPV is learnt) do perform well in the experiments with a test accuracy of around $68\%$ on CIFAR-10. However, it is also the case that models with NPF learning perform significantly better (by more than $10\%$) than FRNPF, which underscores the importance of the gates. Further, we used soft-ReLU gates in the DLNPF regime, and showed that using standard optimisers (based on gradient descent), we can learn the NPFs and NPV in two separate networks. In addition, capturing the role of the active sub-networks via the NPK is another significant progress over \cite{sss,fiat}.

\textbf{Prior NTK works:} \cite{ntk} showed the NTK to be the central quantity in the study of generalisation properties of infinite width DNNs. \cite{jacot2019freeze} identify two regimes that occur at initialisation in fully connected infinite width DNNs namely i) \emph{freeze:} here, the (scaled) NTK converges to a constant and hence leads to slow training,  and  ii) \emph{chaos:} here, the NTK converges to Kronecker delta and hence hurts generalisation. \cite{jacot2019freeze} also suggest that for good generalisation it is important to operate the DNNs at the edge of the freeze and the chaos regimes. The works of \cite{arora2019exact,cao2019generalization} are closely related to our work and have been discussed in the introduction.
\cite{dudnn} use the NTK to show that over-parameterised DNNs trained by gradient descent achieve zero training error. \cite{dudln,shamir,ganguli} studied deep linear networks (DLNs). Since DLNs are special cases of DGNs, \Cref{th:main} of our paper also provides an expression for the NTK at initialisation of deep linear networks. To see this, in the case of DLNs, all the gates are always $1$ and $\Lambda_{\Theta}$ is a matrix whose entries will be $w^{(d-1)}$. 

\textbf{Other works:} Empirical analysis of the role of the gates was done by \cite{srivastava2014understanding}, where the active sub-networks are called as \emph{locally competitive} networks. Here, a `sub-mask' encodes the $0/1$ state of all the gates. A `t-SNE' visualisation of the sub-masks showed that ``subnetworks active for examples of the same class are much more similar to each other compared to the ones activated for the examples of different classes''. \cite{balestriero2018spline}  connect $\max$-affine linearity and DNN with ReLU activations. \cite{neyshabur2015path} proposed a novel \emph{path-norm} based gradient descent. 

\section{Conclusion}
In this paper, we developed a novel neural path framework to capture the role of gates in deep learning. We showed that the neural path features are learnt during training and such learning improves generalisation. In our experiments, we observed that almost all information of a trained DNN is stored in the neural path features. We conclude by saying that \emph{understanding deep learning requires understanding neural path feature learning}. 

\section{Broader Impact}
Deep neural networks are still widely regarded as blackboxes. The standard and accepted view on the inner workings of  deep neural networks is the `layer-by-layer' viewpoint:  as the input progresses through the hidden layers, features at different levels of abstractions are learnt. This paper deviates from the standard `layer-by-layer' viewpoint, in that, it breaks down the deep neural network blackbox into its constituent paths: different set of paths get fired for different inputs, and the output is the summation of the contribution from individual paths. This makes the inner workings of deep neural networks interpretable, i.e., each input is remembered in terms of the active sub-network of the paths that get `fired' for that input, and learning via gradient descent amounts to `rewiring' of the paths. The paper also analytically connects this sub-network and path based view to the recent kernel based interpretation of deep neural networks, and furthers the understanding of feature learning in deep neural networks. We believe that these better insights into the working of DNNs  can potentially lead to foundational algorithmic development in the future.

\section*{Acknowledgements}
We thank Harish Guruprasad Ramaswamy, Arun Rajkumar, Prabuchandran K J, Braghadeesh Lakshminarayanan and the anonymous reviewers for their valuable comments. We would also like to thank Indian Institute of Technology Palakkad for the `Seed Grant', and Science and Engineering Research Board (SERB), Department of Science and Technology, Government of India for the `Startup Research Grant' (SRG/2019/001856). 

\setcitestyle{numbers}
\bibliographystyle{plainnat}
\bibliography{refs}
\newpage
\begin{center}
{\Large{\textbf{Appendix}}}
\end{center}

\begin{appendix}
\section{Expression for $K^{(d)}$}\label{sec:kd}
The $K^{(d)}$ matrix is computed by the recursion in \eqref{eq:ntkold}.
\begin{align}\label{eq:ntkold}
&\tilde{K}^{(1)}(s,s')=\Sigma^{(1)}(s,s')=\Sigma(s,s'), M^{(l)}_{ss'}=\left[\begin{matrix}\Sigma^{(l)}(s,s) & \Sigma^{(l)}(s,s')\\ \Sigma^{(l)}(s',s) & \Sigma^{(l)}(s',s')\end{matrix}\right]\in \R^2,\nn\\
&\Sigma^{(l+1)}(s,s')= 2\cdot\mathbb{E}_{(q,q')\sim N(0,M_{ss'}^{(l)})} \left[\chi(q)\chi(q')\right], \hat{\Sigma}^{(l+1)}(s,s')= 2\cdot\mathbb{E}_{(q,q')\sim N(0,M_{ss'}^{(l)})}\left[\partial\chi(q)\partial{\chi}(q')\right],\nn\\
&\tilde{K}^{(l+1)}=\tilde{K}^{(l)}\odot \hat{\Sigma}^{(l+1)}+\Sigma^{(l+1)}, K^{(d)}=\left(\tilde{K}^{(d)}+\Sigma^{(d)}\right)/2
\end{align}
where $s,s'\in[n]$ are two input examples in the dataset, $\Sigma$ is the data Gram matrix, $\partial{\chi}$ stands for the derivative of the activation function with respect to the pre-activation input, $N(0,M)$ stands for the mean-zero Gaussian distribution with co-variance matrix $M$.

\section{Proofs of technical results}
\textbf{Proof of \Cref{prop:basic}}
\begin{proof}
We know that $e_t=(e_t(s),s\in[n])\in\R^n$, and $e_t(s)=\hat{y}_{\Theta_t}(x_s)-y(s)$. Now
\begin{align} 
L_{\Theta_t}&=\frac{1}2\sum_{s'=1}^n(\hat{y}_{\Theta_t}-y)^2\nn\\
&=\frac{1}2\sum_{s'=1}^n e_t^2\nn\\
\nabla_{\Theta} L_{\Theta_t}&= \sum_{s'=1}^n\nabla_{\Theta} \hat{y}_{\Theta_t}(x_{s'})e_t(s')\nn\\
\label{eq:above1} \nabla_{\Theta} L_{\Theta_t}&= \sum_{s'=1}^n \psi_{x_{s'},\Theta_t}e_t(s')
\end{align}
For gradient descent, $\dot{\Theta}_t=-\nabla_{\Theta} L_{\Theta_t}$, from \eqref{eq:above1} it follows that 
\begin{align}
\dot{\Theta}_t=-\sum_{s'=1}^n \psi_{x_{s'},\Theta_t}e_t(s')
\end{align}
Now $\dot{e}_t=\dot{\hat{y}}_{\Theta_t}$, and expanding $\dot{\hat{y}}_{\Theta_t}(x_s)$ for some $s\in[n]$, we have:
\begin{align}
\dot{\hat{y}}_{\Theta_t}(x_s)&=\frac{d \hat{y}_{\Theta_t}(x_s)}{d t}\nn\\
&=\sum_{\theta\in\Theta}\frac{d \hat{y}_{\Theta_t}(x_s)}{d \theta}\frac{d \theta_t}{dt},\,\text{by expressing this summation as a dot product we obtain} \nn\\
\dot{\hat{y}}_{\Theta_t}(x_s)&=\ip{\psi_{x_s,\Theta_t},\dot{\Theta}_t}
\end{align}
We now use that fact that $\Theta_t$ is updated by gradient descent
\begin{align}
\dot{\hat{y}}_{\Theta_t}(x_s)&=-\ip{\psi_{x_s,\Theta_t},\sum_{s'=1}^n \psi_{x_{s'},\Theta_t}e_t(s')}\nn\\
&=-\sum_{s'=1}^n K_{\Theta_t}(s,s')e_t(s')
\end{align}
The proof is complete by recalling that $\hat{y}_{\Theta_t}=(\hat{y}_{\Theta_t}(x_s),s\in[n])$, and $\dot{e}_t=\dot{\hat{y}}_{\Theta_t}$.
\end{proof}

\textbf{Proof of \Cref{prop:zero}}
\begin{proof}
Let $x\in\R^{\din}$ be the input to the DNN and $\hat{y}_{\Theta}(x)$ be its output. The output can be written in terms of the final hidden layer output as:
\begin{align}
\hat{y}_{\Theta}(x)&=\sum_{j_{d-1}=1}^w\Theta(1, j_{d-1},d) \cdot z_{x,\Theta}(j_{d-1},d-1)\nn\\
\label{lastlayer}&=\sum_{j_{d-1}=1}^w\Theta(1, j_{d-1},d) \cdot G_{x\Theta}(j_{d-1},d-1)\cdot q_{x,\Theta}(j_{d-1},d-1)
\end{align}
Now $q_{x,\Theta}(j_{d-1},d-1)$ for a fixed $j_{d-1}$ can again be expanded as
\begin{align}
q_{x,\Theta}(j_{d-1},d-1)&= \sum_{j_{d-2}=1}^w \Theta(j_{d-1},j_{d-2},d-1) \cdot z_{x,\Theta}(j_{d-2},d-2)\nn\\
\label{onebefore}&=\sum_{j_{d-2}=1}^w \Theta(j_{d-1},j_{d-2},d-1) \cdot G_{x,\Theta}(j_{d-2},d-2)\cdot q_{x,\Theta}(j_{d-2},d-2)
\end{align}
Now plugging in \eqref{onebefore} in the expression in \eqref{lastlayer}, we have
\begin{align}
\hat{y}_{\Theta}(x)&=\sum_{j_{d-1}=1}^w\Theta(1, j_{d-1},d)\cdot G_{x\Theta}(j_{d-1},d-1)\Bigg(\sum_{j_{d-2}=1}^w \Theta(j_{d-1},j_{d-2},d-1)\nn\\ &\hspace{15pt}\cdot G_{x,\Theta}(j_{d-2},d-2)\cdot q_{x,\Theta}(j_{d-2},d-2)\Bigg)\nn\\
&=\sum_{j_{d-1}, j_{d-2}\in[w]} G_{x,\Theta}(j_{d-1},d-1)\cdot G_{x,\Theta}(j_{d-2},d-2)\cdot\Theta(1, j_{d-1},d)\nn\\&\cdot \Theta(j_{d-1},j_{d-2},d-1)\cdot q_{x,\Theta}(j_{d-2},d-2)\nn\\
\end{align}
By expanding $q$'s for all the previous layers till the input layer we have
\begin{align}
\hat{y}_{\Theta}(x)=\sum_{j_{d}=1, j_{d-1},\ldots,j_{1}\in[w], j\in[\din]} x(j) \Pi_{l=1}^{d-1}G_{x,\Theta}(j_{l},l) \Pi_{l=1}^{d}\Theta(j_l, j_{l-1}, l) \nn
\end{align}
\end{proof}

\textbf{Proof of \Cref{lm:npk}}
\begin{proof}
\begin{align}
\ip{\phi_{x_s,\Theta},\phi_{x_{s'},\Theta}}&=\sum_{p\in[P]}x_s(\I_0(p))x_{s'}(\I_0(p))A_{\Theta}(x_s,p)A_{\Theta}(x_{s'},p)\nn\\
&=\sum_{i=1}^{\din}x_s(i)x_{s'}(i)\Lambda_{\Theta}(s,s')\nn\\
&=\ip{x_s,x_{s'}}\cdot\Lambda_{\Theta}(s,s')
\end{align}
\end{proof}

    \textbf{Proof of \Cref{prop:ntknew}}
\begin{proof}
Let $\Psi_{\Theta}=(\psi_{x_s,\Theta},s\in[n])\in\R^{\dnet\times n}$ be the NTF matrix, then the NTK matrix is given by $K_{\Theta_t}=\Psi^\top_{\Theta_t}\Psi_{\Theta_t}$. Note that, $\hat{y}_{\Theta}(x_s)=\ip{\phi_{x_s,\Theta},v_{\Theta}}=\ip{v_{\Theta},\phi_{x_s,\Theta}}=v^\top_{\Theta}\phi_{x_s,\Theta}$. Now $\psi_{x_{s},\Theta}=\nabla_{\Theta} v_{\Theta}\phi_{x_s,\Theta}$, and hence $\Psi=\nabla_{\Theta} v_{\Theta}\Phi_{\Theta}$. Hence, $K_{\Theta_t}=\Psi^\top_{\Theta_t}\Psi_{\Theta_t}=\Phi^\top_{\Theta}(\nabla_{\Theta} v_{\Theta})^\top (\nabla_{\Theta} v_{\Theta})\Phi_{\Theta}=\Phi^\top_{\Theta}\V_{\Theta}\Phi_{\Theta}$.
\end{proof}

\textbf{Proof of \Cref{prop:dnnhard}}

\begin{proof}
Follows in a similar manner as the proof of \Cref{prop:basic}.
\end{proof}

\textbf{Proof of {\Cref{prop:condition}}}
\begin{proof}
$\rho_{\min}(K_{\Theta})=\underset{\norm{x}_2=1}{\underset{x\in \R^n}\min}x^\top K_{\Theta} x$. Let $x'\in\R^n$ such that $\norm{x'}_2=1$ and $\rho_{\min}(H_{\Theta})={x'}^\top H_{\Theta} x'$. Now, $\rho_{\min}(K_{\Theta})\leq {x'}^\top K_{\Theta} x'$. 
Let $y'=\Phi x'$, then we have, $\rho_{\min}(K_{\Theta})\leq {y'}^\top \V_{\Theta}y'$. Hence $\rho_{\min}(K_{\Theta})\leq \norm{y'}^2_2 \rho_{\max}(\V_{\Theta})$. Proof is complete by noting that $\norm{y'}^2_2={x'}^\top \Phi^\top_{\Theta}\Phi_{\Theta}x'= \rho_{\min}(H_{\Theta})$.
\end{proof}

\textbf{Proof of \Cref{prop:dgn}}

\begin{proof}
Follows in a similar manner as proof of \Cref{prop:basic}.
\end{proof}

\subsection{Proof of \Cref{th:main}}
\subsubsection{Calculation of $\E{K^\text{v}_{\Tdgn_0}}$}

\begin{proposition}
Let $\tv\in\Tv$ be a weight in layer $l_{\tv}$, and let $p$ be a path that passes through $\tv$. Then 
\begin{align}
\partial_{\tv} v_{\Tv}(p) =& \Pi_{l=1,l\neq l_{\tv}}^{d} \Theta(\I_{l}(p),\I_{l-1}(p),l )
\end{align}
\end{proposition}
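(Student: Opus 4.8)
The plan is to recall the definition of the NPV, $v_{\Tv}(p) = \Pi_{l=1}^d \Tv(\I_l(p),\I_{l-1}(p),l)$, and observe that it is a monomial in the weights, with each weight along the path $p$ appearing exactly once as a factor. Since $\tv$ is assumed to be a weight in layer $l_{\tv}$ that lies on the path $p$, we have $\tv = \Tv(\I_{l_{\tv}}(p),\I_{l_{\tv}-1}(p),l_{\tv})$, so $\tv$ is exactly one of the $d$ factors in the product. Differentiating a product of distinct variables with respect to one of them simply deletes that factor, which is precisely the claimed formula.

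The key steps, in order, would be: (i) write $v_{\Tv}(p)$ as the explicit product over layers $l=1,\dots,d$; (ii) factor out the single term corresponding to $l = l_{\tv}$, writing $v_{\Tv}(p) = \tv \cdot \Pi_{l=1,\,l\neq l_{\tv}}^{d}\Tv(\I_l(p),\I_{l-1}(p),l)$, using that a path passes through exactly one weight per layer so no other factor involves $\tv$ (here one uses that distinct edges carry distinct coordinates of $\Tv$, hence the remaining product is constant in $\tv$); (iii) apply $\partial_{\tv}$ to get $\partial_{\tv} v_{\Tv}(p) = \Pi_{l=1,\,l\neq l_{\tv}}^{d}\Tv(\I_l(p),\I_{l-1}(p),l)$. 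One should also note the complementary (trivial) case: if $p$ does not pass through $\tv$, then $v_{\Tv}(p)$ does not depend on $\tv$ and the derivative is $0$; the proposition only asserts the formula in the stated case, so this need not be belabored.

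I do not expect any real obstacle here — this is an elementary differentiation of a monomial. The only point requiring a word of care is the implicit assumption that the factors $\Tv(\I_l(p),\I_{l-1}(p),l)$ for $l \neq l_{\tv}$ are genuinely distinct coordinates of $\Tv$ from $\tv$ (and hence treated as constants when differentiating with respect to $\tv$), which is guaranteed by the fully-connected architecture where each edge has its own independent weight, as laid out in \Cref{tb:basic}. Given that, the result is immediate.
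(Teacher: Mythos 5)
Your proposal is correct and follows essentially the same route as the paper, which simply notes that $v_{\Tv}(p)=\Pi_{l=1}^{d}\Theta(\I_l(p),\I_{l-1}(p),l)$ is a monomial containing $\tv$ exactly once and differentiates it. Your additional remarks (one weight per layer on a path, distinctness of the coordinates, the trivial off-path case) are accurate elaborations of the same one-line argument.
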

\begin{proof}
Proof follows by noting that $v_{\Tv}(p)=\Pi_{l=1}^{d}\Theta(\I_l(p),\I_{l-1}(p),l)$.
\end{proof}

\begin{lemma}\label{lm:dot}
Let $\varphi_{p,\Theta}$ be as in \Cref{def:npvgrad}, under the assumption in ~\Cref{th:main}, for paths $p_1,p_2\in [P], p_1\neq p_2$, at initialisation we have (i) $\E{\ip{\varphi_{p_1,\Tv_0}, \varphi_{p_2,\Tv_0}}}= 0$, (ii) ${\ip{\varphi_{p_1,\Tv_0}, \varphi_{p_1,\Tv_0}}}= d\cdot \sigma^{2(d-1)}$.
\end{lemma}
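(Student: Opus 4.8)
The plan is to compute the inner product $\ip{\varphi_{p_1,\Tv_0},\varphi_{p_2,\Tv_0}}$ directly from \Cref{def:npvgrad} and the preceding proposition, which tells us that $\partial_{\tv} v_{\Tv}(p) = \Pi_{l=1,l\neq l_{\tv}}^{d}\Theta(\I_l(p),\I_{l-1}(p),l)$ when $\tv$ lies on path $p$, and is zero otherwise. So $\ip{\varphi_{p_1,\Tv_0},\varphi_{p_2,\Tv_0}} = \sum_{\tv\in\Tv} \partial_{\tv} v_{\Tv_0}(p_1)\,\partial_{\tv} v_{\Tv_0}(p_2)$, and only weights $\tv$ that lie on \emph{both} $p_1$ and $p_2$ contribute.

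For part (ii), with $p_1=p_2=p$: there are exactly $d$ weights on the path $p$ (one per layer), and for each such $\tv$ in layer $l_{\tv}$, $(\partial_{\tv} v_{\Tv_0}(p))^2 = \Pi_{l\neq l_{\tv}} \Theta(\I_l(p),\I_{l-1}(p),l)^2$. Since every weight is $\pm\sigma$ under the symmetric Bernoulli initialisation, each squared weight equals $\sigma^2$, so this product over the $d-1$ remaining layers is exactly $\sigma^{2(d-1)}$, deterministically. Summing over the $d$ weights on the path gives $d\cdot\sigma^{2(d-1)}$, which is the claimed identity (note it holds surely, not just in expectation).

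For part (i), with $p_1\neq p_2$: first I would argue that if $p_1$ and $p_2$ share no weight at all, the sum is empty and the inner product is exactly $0$. If they do share some weights, let $\tv^\star$ be a weight that lies on $p_1$ but not on $p_2$ — such a weight must exist because $p_1\neq p_2$ means the paths diverge at some layer, and beyond the divergence point their weights differ (here one should be slightly careful and pick $\tv^\star$ to be a weight on $p_1$ in a layer where the two paths use different edges; since $p_1\neq p_2$ at least one layer has this property). Then in the expression $\partial_{\tv} v_{\Tv_0}(p_1)$, for any shared weight $\tv$, the factor $\Theta$ corresponding to the edge $\tv^\star$ appears (since $\tv^\star\neq\tv$ and $\tv^\star$ is on $p_1$), whereas it does not appear in $\partial_{\tv} v_{\Tv_0}(p_2)$ at all (since $\tv^\star$ is not on $p_2$). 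Conditioning on all weights except the one indexing edge $\tv^\star$, the product $\partial_{\tv} v_{\Tv_0}(p_1)\,\partial_{\tv} v_{\Tv_0}(p_2)$ is linear in $\Theta_{\tv^\star}$, which is mean-zero and independent of everything else; hence each term has expectation $0$, and by linearity of expectation $\E{\ip{\varphi_{p_1,\Tv_0},\varphi_{p_2,\Tv_0}}}=0$.

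The main obstacle is the bookkeeping in part (i): making the "there is a weight on $p_1$ not on $p_2$ that appears in every shared-weight term" argument airtight. One has to handle the case where the summation index $\tv$ itself is $\tv^\star$ — but then $\tv$ is not on $p_2$, so $\partial_{\tv}v_{\Tv_0}(p_2)=0$ and that term vanishes trivially; for all other shared $\tv$, the factor $\Theta_{\tv^\star}$ genuinely appears in $\partial_{\tv}v_{\Tv_0}(p_1)$ to the first power and is absent from $\partial_{\tv}v_{\Tv_0}(p_2)$, so the independence/mean-zero argument applies cleanly. Everything else is a routine expansion using that the NPV of a path is a product of its weights and that the initialisation is i.i.d. symmetric with $\E{\Theta^2}=\sigma^2$.
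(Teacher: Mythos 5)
Your proof is correct and follows essentially the same route as the paper's: expand the inner product as a sum over weights, observe that only weights lying on both paths contribute, kill each cross term using the mean-zero, independent symmetric Bernoulli initialisation (the paper isolates a layer $\tilde{l}$ where the two paths use distinct weights and factorises the expectation layer by layer, which is the same mechanism as your ``condition on everything except the odd weight $\theta^{\star}$'' step), and for the diagonal count the $d$ weights on the path, each contributing $\sigma^{2(d-1)}$ deterministically. No gaps.
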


\begin{proof}
\begin{align*}
\ip{\varphi_{p_1,\Tv_0}, \varphi_{p_2,\Tv_0}}= \sum_{\tv\in \Tv} \partial_{\tv}v_{\Tv_0}(p_1) \partial_{\tv}v_{\Tv_0}(p_2)
\end{align*}
Let $\tv\in\Tv$ be an arbitrary weight. If either $p_1$ or $p_2$ does not pass through $\tv$, then it follows that $\partial_{\tv} v_{\Tv_0}(p_1) \partial_{\tv} v_{\Tv_0}(p_2)=0$. Let us consider the the case when $p_1,p_2$ pass through $\tv$ and without of loss of generality let $\tv$ belong to layer $l_{\tv}\in[d]$. 
  we have
\begin{align*}
&\E{\partial_{\tv}v_{\Tv_0}(p_1)\partial_{\tv}v_{\Tv_0}(p_2)}\\
&=\E{\underset{l\neq l_{\tv}}{\underset{l=1}{\overset{d}{\Pi}}} \Bigg(\Tv_0(\I_{l} (p_1),  \I_{l-1}(p_1),l) \Tv_0(\I_{l}(p_2),\I_{l-1} (p_2),l) \Bigg)}\\
&=\underset{l\neq l_{\tv}}{\underset{l=1}{\overset{d}{\Pi}}}\E{\Tv_0(\I_{l}(p_1),\I_{l-1}(p_1),l)\Tv_0(\I_{l}(p_2),\I_{l-1}(p_2),l)}
\end{align*}
where the $\E{\cdot}$ moved inside the product because at initialisation the weights (of different layers) are independent of each other.
Since $p_1\neq p_2$, there exist a layer $\tilde{l}\in[d],\tilde{l}\neq l_{\tv}$ such that they do not pass through the same weight in layer $\tilde{l}$, i.e., $\Tv_0(\I_{\tilde{l}}(p_1),\I_{\tilde{l}-1}(p_1),\tilde{l},)$ and $\Tv_0(\I_{\tilde{l}}(p_2),\I_{\tilde{l}-1}(p_2),\tilde{l})$ are distinct weights. Using this fact,  we have 
\begin{align*}
&\E{\partial_{\tv}v_{\Tv_0}(p_1)\partial_{\tv}v_{\Tv_0}(p_2)}\\
=&\Bigg(\underset{l\neq l_{\tv},\tilde{l}}{\underset{l=1}{\overset{d}{\Pi}}}\E{\Tv_0(\I_l(p_1), \I_{l-1}(p_1),l)\Tv_0(\I_{l}(p_2),\I_{l-1}(p_2),l)}\Bigg)\\
&\cdot\Bigg(\E{\Tv_0(\I_{\tilde{l}}(p_1),\I_{\tilde{l}-1} (p_1),\tilde{l})}\E{\Tv_0(\I_{\tilde{l}}(p_2), \I_{\tilde{l}-1 }(p_2),\tilde{l})}\Bigg)\\
=&0
\end{align*}

The proof of (ii) is complete by noting that a given path $p_1$ pass through only `$d$' weights, and hence $\sum_{\tv\in\Tv} \partial_{\tv}v_{\Tv_0}(p_1) \partial_{\tv}v_{\Tv_0}(p_1)$ has `$d$' non-zero terms, and the fact that at initialisation we have 
\begin{align*}
&\partial_{\tv}v_{\Tv_0}(p_1) \partial_{\tv}v_{\Tv_0}(p_1) \\
&=\underset{l\neq l_{\tv}}{\underset{l=1}{\overset{d}{\Pi}}} [\Tv_0(\I_{l}(p),\I_{l-1}(p),l)]^2\\
&=\sigma^{2(d-1)}
\end{align*}
\end{proof}

\begin{theorem}\label{th:exp}
 $\E{K^{\text{v}}_{\Tdgn_0}}=d\cdot\sigma^{2(d-1)} \cdot H_{\text{FNPF}}$. 
\end{theorem}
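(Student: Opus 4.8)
The plan is to reduce the statement to \Cref{lm:dot} via the DGN analogue of \Cref{prop:ntknew}. First I would note that in a DGN the gates, and hence the NPF $\phi_{x,\Tf_0}$, depend only on $\Tf$, while by the path-decomposition argument of \Cref{prop:zero} the output satisfies $\hat{y}_{\Tdgn}(x)=\ip{\phi_{x,\Tf},v_{\Tv}}$. Differentiating with respect to $\Tv$ only (the gates are constants of the value network) gives $\psiv_{x,\Tdgn}=(\nabla_{\Tv}v_{\Tv})\,\phi_{x,\Tf}$, so that $K^{\text{v}}_{\Tdgn_0}=\Phi_{\Tf_0}^\top\,\V_{\Tv_0}\,\Phi_{\Tf_0}$ with $\V_{\Tv_0}=(\nabla_{\Tv}v_{\Tv_0})^\top(\nabla_{\Tv}v_{\Tv_0})$, i.e. $\V_{\Tv_0}(p_1,p_2)=\ip{\varphi_{p_1,\Tv_0},\varphi_{p_2,\Tv_0}}$.

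Next, writing the $(s,s')$ entry as a double sum over paths and taking expectations, the statistical independence of $\Tf_0$ and $\Tv_0$ (assumption (i) of \Cref{th:main}) lets the NPF coordinates come out of the expectation:
\[
\E{K^{\text{v}}_{\Tdgn_0}(s,s')}=\sum_{p_1,p_2\in[P]}\phi_{x_s,\Tf_0}(p_1)\,\phi_{x_{s'},\Tf_0}(p_2)\,\E{\ip{\varphi_{p_1,\Tv_0},\varphi_{p_2,\Tv_0}}}.
\]
Now \Cref{lm:dot} kills every off-diagonal term ($p_1\neq p_2$) and evaluates each diagonal term to $d\,\sigma^{2(d-1)}$ (in fact this is deterministic, so the expectation is not even needed there). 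The double sum collapses to $d\,\sigma^{2(d-1)}\sum_{p\in[P]}\phi_{x_s,\Tf_0}(p)\phi_{x_{s'},\Tf_0}(p)=d\,\sigma^{2(d-1)}\ip{\phi_{x_s,\Tf_0},\phi_{x_{s'},\Tf_0}}=d\,\sigma^{2(d-1)}H_{\text{FNPF}}(s,s')$, which is the claim.

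I do not expect a real obstacle: all the probabilistic content is already isolated in \Cref{lm:dot}, which in turn only uses that $\Tv_0$ is i.i.d. symmetric and mean-zero, so any product of weights involving a weight not common to both paths has zero mean. The only point requiring care is the bookkeeping in the first step — keeping the gates fixed when differentiating the value network, which is exactly how the DGN is defined since $G_x(i,l)$ is generated by the feature network — so that $\psiv$ genuinely factors as $(\nabla_{\Tv}v_{\Tv})\phi_{x,\Tf}$.
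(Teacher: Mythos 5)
Your proposal is correct and follows essentially the same route as the paper: factor $K^{\text{v}}_{\Tdgn_0}=\Phi_{\Tf_0}^\top\V_{\Tv_0}\Phi_{\Tf_0}$ (the DGN analogue of \Cref{prop:ntknew}), use the independence of $\Tf_0$ and $\Tv_0$ to pull the NPF matrix out of the expectation, and apply \Cref{lm:dot} to get $\E{\V_{\Tv_0}}=d\,\sigma^{2(d-1)}I_{P\times P}$. The only cosmetic difference is that you write the argument entry-wise as a double sum over paths while the paper keeps it in matrix form.
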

\begin{proof}
Let $\Phi_{\text{FNPF}}=\Phi_{\Tf_0}=\left(\phi_{x_s,\Tf_0}, s\in[n]\right)\in\R^{P\times n}$ be the NPF matrix. 
\begin{align*}
\E{K^{\text{v}}_{\Tdgn_0}}&=\E{\Phi^\top_{\text{FNPF}} \V_{\Tv_0} \Phi_{\text{FNPF}}}\\
&=\E{\Phi^\top_{\text{FNPF}} (\nabla_{\Tv}v_{\Tv_0})^\top (\nabla_{\Tv}v_{\Tv_0}) \Phi_{\text{FNPF}}}\\
&=\Phi^\top_{\text{FNPF}}\left( \E{(\nabla_{\Tv}v_{\Tv_0})^\top (\nabla_{\Tv}v_{\Tv_0})}\right)\Phi_{\text{FNPF}}\\
&\stackrel{(a)}=d\cdot\sigma^{2(d-1)} \cdot\left(\Phi^\top_{\text{FNPF}}\Phi_{\text{FNPF}}\right)\\
&=d\cdot\sigma^{2(d-1)} \cdot H_{\text{FNPF}}
\end{align*}
Here, $(a)$ follows from  \Cref{lm:dot}, i.e., $\E{(\nabla_{\Tv}v_{\Tv_0})^\top (\nabla_{\Tv}v_{\Tv_0})}= d\cdot\sigma^{2(d-1)}\cdot I_{P\times P}$, where $I_{P\times P}$ is a ${P\times P}$ identity matrix.
\end{proof}

\subsubsection{Calculation of $Var\left[K^\text{v}_{\Tdgn_0}\right]$}

\textbf{Notation:} For $x,x'\in\R^{\din}$, let $\phi(p)=\phi_{x,\Tf_0}(p)$, and $\phi'(p)=\phi_{x',\Tf_0}(p)$. Also in what follows we use $\ta,\tb$ to denote the individual weights in the value network, and $p_a,p'_a,p_b,p'_b\in[P]$ to denote the paths. Further, unless otherwise specified, quantities $\ta,\tb,p_a,p'_a,p_b,p'_b$ are unrestricted.

\begin{proposition}
\begin{align}
\label{eq:k}K^\text{v}_{\Tdgn_0}(x,x')=&\sum_{\ta,p_a,p'_a} \phi(p_a) \phi'(p'_a) \partial_{\ta}v_{\Tv_0}(p_a) \partial_{\ta}v_{\Tv_0}(p'_a) \\
\end{align}
\end{proposition}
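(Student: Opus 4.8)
\textbf{Proof proposal for \Cref{prop:k} (the expression for $K^\text{v}_{\Tdgn_0}(x,x')$).}

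The plan is to start from the definition of the value tangent kernel and unfold the chain of identities already established in the excerpt. By \Cref{prop:dgn}, $K^{\text{v}}_{\Tdgn_0}(x,x')=\ip{\psiv_{x,\Tdgn_0},\psiv_{x',\Tdgn_0}}$ where $\psiv_{x,\Tdgn}=\nabla_{\Tv}\hat{y}_{\Tdgn}(x)$. The key observation is that the output of a DGN has exactly the same NPF/NPV factorisation as an ordinary DNN: from \Cref{prop:zero} applied in the DGN setting (the gates $G_x(i,l)$ are supplied by the frozen feature network, so the value network output is $\hat{y}_{\Tdgn}(x)=\ip{\phi_{x,\Tf_0},v_{\Tv}}=\sum_{p}\phi_{x,\Tf_0}(p)\,v_{\Tv}(p)$). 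Since $\Tf_0$ is frozen, differentiating with respect to a value-network weight $\ta\in\Tv$ only hits $v_{\Tv}(p)$, giving $\partial_{\ta}\hat{y}_{\Tdgn}(x)=\sum_{p_a}\phi(p_a)\,\partial_{\ta}v_{\Tv_0}(p_a)$, i.e. exactly the NPV-gradient representation underlying \Cref{prop:ntknew}.

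From there I would simply form the inner product over all weights $\ta\in\Tv$:
\begin{align*}
K^\text{v}_{\Tdgn_0}(x,x')
&=\sum_{\ta\in\Tv}\big(\partial_{\ta}\hat{y}_{\Tdgn_0}(x)\big)\big(\partial_{\ta}\hat{y}_{\Tdgn_0}(x')\big)\\
&=\sum_{\ta\in\Tv}\Big(\sum_{p_a}\phi(p_a)\,\partial_{\ta}v_{\Tv_0}(p_a)\Big)\Big(\sum_{p'_a}\phi'(p'_a)\,\partial_{\ta}v_{\Tv_0}(p'_a)\Big)\\
&=\sum_{\ta,p_a,p'_a}\phi(p_a)\,\phi'(p'_a)\,\partial_{\ta}v_{\Tv_0}(p_a)\,\partial_{\ta}v_{\Tv_0}(p'_a),
\end{align*}
which is the claimed \eqref{eq:k}. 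This is essentially the scalar (two-input) restatement of the matrix identity $K_{\Tdgn_0}^{\text{v}}=\Phi^\top_{\text{FNPF}}\V_{\Tv_0}\Phi_{\text{FNPF}}$ already used in the proof of \Cref{th:exp}, written out coordinate-wise so that the subsequent variance computation can be organised as a sum over quadruples of paths and pairs of weights.

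There is no real obstacle here — the statement is a bookkeeping identity, and the only thing to be careful about is that the double sum over $p_a,p'_a$ is genuinely unrestricted (a term vanishes automatically unless both $p_a$ and $p'_a$ pass through $\ta$, but one does not need to impose that restriction to state the formula). The one substantive ingredient that must be invoked explicitly is that freezing $\Tf_0$ makes the NPFs $\phi_{x,\Tf_0}$ constants with respect to $\Tv$, so that $\partial_{\ta}$ commutes past $\phi$; everything else is expanding a product of two finite sums. I would present it in the two or three lines above rather than belabour it.
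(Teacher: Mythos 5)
Your proposal is correct and follows essentially the same route as the paper: the paper likewise writes $K^\text{v}_{\Tdgn_0}(x,x')=\ip{\nabla_{\Tv}\hat{y}_{\Tdgn_0}(x),\nabla_{\Tv}\hat{y}_{\Tdgn_0}(x')}$, substitutes $\partial_{\ta}\hat{y}_{\Tdgn_0}(x)=\sum_{p_a}\phi(p_a)\partial_{\ta}v_{\Tv_0}(p_a)$ for each coordinate, and expands the product of the two sums. Your added remark that the NPFs are constants with respect to $\Tv$ (so $\partial_{\ta}$ passes over $\phi$) is the same fact the paper uses implicitly, and the observation that the path sums are unrestricted is consistent with the paper's convention.
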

\begin{proof}
\begin{align}
K^\text{v}_{\Tdgn_0}(x,x')=&\ip{\nabla_{\Tv} \hat{y}_{\Tdgn_0}(x), \nabla_{\Tv}\hat{y}_{\Tdgn_0}(x')}\\
=&\sum_{\ta\in\Tv}\left(\sum_{p_a\in[P]}\phi(p_a) \partial_{\ta}v_{\Tv_0}(p_a) \right)\left(\sum_{p'_a\in[P]}\phi'(p'_a) \partial_{\ta}v_{\Tv_0}(p'_a) \right)\\
 	=& \sum_{\ta,p_a,p'_a} \phi(p_a) \phi'(p'_a) \partial_{\ta}v_{\Tv_0}(p_a) \partial_{\ta}v_{\Tv_0}(p'_a) 
\end{align}
\end{proof}

 We now drop $\Tv$ in $v_{\Tv_0}$ and $\text{v}$, $\Tdgn_0$ from $K^{\text{v}}_{\Tdgn_0}$, and we denote $\Tv$ by $\Theta$.

\begin{proposition}
\begin{align}
\label{eq:expk}\E{K(x,x')} 	=& \sum_{\ta,p_a} \phi(p_a) \phi'(p_a) \E{\left(\partial_{\ta}v(p_a)\right)^2}\\
\label{eq:expksquare}\E{K^2(x,x')}=& \underset {\tb,p_b,p'_b}{\underset{\ta,p_a,p'_a} {\sum}} \phi(p_a) \phi'(p'_a) \phi(p_b) \phi'(p'_b)\E{ \pta v(p_a) \pta v(p'_a) \ptb v(p_b) \ptb v(p'_b)}
\end{align}
\end{proposition}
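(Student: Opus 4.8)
The plan is to obtain both identities directly from the closed form \eqref{eq:k} for $K(x,x')$, exploiting the fact that in the fixed-NPF setting the feature coordinates $\phi(p)=\phi_{x,\Tf_0}(p)$ and $\phi'(p)=\phi_{x',\Tf_0}(p)$ depend only on the feature-network weights $\Tf_0$ and are therefore deterministic with respect to the expectation, which is taken over the value-network weights $\Tv_0$ alone (by the statistical independence of $\Tf_0$ and $\Tv_0$ assumed in \Cref{th:main}). Consequently every $\phi$-factor can be pulled outside $\E{\cdot}$ by linearity, and the only stochastic quantities are the products of NPV derivatives $\pta v(p_a)$.

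For \eqref{eq:expk} I would take the expectation of \eqref{eq:k} term by term, giving $\E{K(x,x')}=\sum_{\ta,p_a,p'_a}\phi(p_a)\phi'(p'_a)\,\E{\pta v(p_a)\pta v(p'_a)}$, and then argue that every cross term with $p_a\neq p'_a$ vanishes. This is precisely the per-weight computation carried out inside the proof of \Cref{lm:dot}: if either path fails to pass through $\ta$ the corresponding derivative is $0$; and if both pass through $\ta$ while $p_a\neq p'_a$, then there is a layer $\tilde l\neq l_{\ta}$ in which the two paths traverse distinct weights, so using that the value weights are i.i.d.\ mean-zero (symmetric Bernoulli) and that weights in different layers are independent, the expectation factorises and acquires a mean-zero factor, hence equals $0$. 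Discarding these terms collapses the double path sum to the single sum over $p_a=p'_a$ asserted in \eqref{eq:expk}.

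For \eqref{eq:expksquare} I would write $K^2(x,x')=K(x,x')\cdot K(x,x')$ using two independent sets of dummy indices $(\ta,p_a,p'_a)$ and $(\tb,p_b,p'_b)$ for the two copies of \eqref{eq:k}, multiply the two sums, and take expectation. Pulling the four deterministic NPF factors $\phi(p_a)\phi'(p'_a)\phi(p_b)\phi'(p'_b)$ outside $\E{\cdot}$ immediately yields \eqref{eq:expksquare}; no cancellation is performed at this stage, since the fourth-order moment $\E{\pta v(p_a)\pta v(p'_a)\ptb v(p_b)\ptb v(p'_b)}$ is retained intact for the later variance analysis.

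The bookkeeping for \eqref{eq:expksquare} is essentially immediate once the deterministic factors are separated, so the only point requiring care is the vanishing of the cross terms in \eqref{eq:expk}. That step is where the specific distributional assumptions on $\Tv_0$—symmetry (mean zero) and cross-layer independence—enter, and it reuses the factorisation argument already established for \Cref{lm:dot}; the main thing to verify is that the single fixed weight $\ta$ plays the same role as the summed weight $\tv$ there, i.e.\ that fixing $\ta$ does not obstruct finding a distinguishing layer $\tilde l\neq l_{\ta}$, which holds because two distinct paths must differ in at least one of the $d$ layers they span.
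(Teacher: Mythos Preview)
Your proposal is correct and follows essentially the same route as the paper: take expectations in \eqref{eq:k}, pull the deterministic NPF factors outside, and for \eqref{eq:expk} invoke the per-weight vanishing $\E{\pta v(p_a)\pta v(p'_a)}=0$ for $p_a\neq p'_a$ (which is exactly what the proof of \Cref{lm:dot} establishes termwise, even though the lemma is stated for the sum), while for \eqref{eq:expksquare} simply square \eqref{eq:k} with fresh dummy indices and push the expectation inside. Your extra remark that passing through a common $\ta$ forces the distinguishing layer $\tilde l$ to satisfy $\tilde l\neq l_{\ta}$ is a useful clarification the paper leaves implicit.
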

\begin{proof}
\begin{align}
\E{K(x,x')} = &\sum_{\ta,p_a,p'_a} \phi(p_a) \phi'(p'_a)\E{\partial_{\ta}v(p_a) \partial_{\ta}v(p'_a)}\\
               \stackrel{(a)}{=}&\sum_{\ta,p_a} \phi(p_a) \phi'(p_a) \E{\left(\partial_{\ta}v(p_a)\right)^2}
\end{align}
where $(a)$ follows from \Cref{lm:dot} that for $p_a\neq p'_a$ $\E{\partial_{\ta}v(p_a) \partial_{\ta}v(p'_a)}=0$.

The expression for $\E{K^2(x,x')}$ is obtained by squaring the expression in \eqref{eq:k} and pushing the $\E{\cdot}$ inside the summation.

\end{proof}

\begin{definition}\label{def:bunching}\quad
\begin{enumerate}
\item Let $\tau=\left(p_a,p'_a,p_b,p'_b;\ta,\tb\right)$ denote the index used to sum the terms in the expression for $\E{K^2(x,x')}$ given in \eqref{eq:expksquare}. Note that the index contains $4$ path variables namely $p_a,p'_a,p_b,p'_b$ and $2$ weight variables namely $\ta,\tb$. 
\item An index $\tau=\left(p_a,p'_a,p_b,p'_b;\ta,\tb\right)$ is said to correspond to a `base' term if $p_a=p'_a$ and $p_b=p'_b$. We define $\B$ to be the set of indices corresponding to `base' terms given by $\B=\{\tau=\left(p_a,p'_a,p_b,p'_b;\ta,\tb\right)\colon p_a=p'_a, p_b=p'_b\}$.
\item For $\tau=\left(p_a,p'_a,p_b,p'_b;\ta,\tb\right)$, define $\omega(\tau) \eqdef \phi(p_a) \phi'(p'_a) \phi(p_b)\phi'(p'_b)$.
\item For $\tau=\left(p_a,p'_a,p_b,p'_b;\ta,\tb\right)$, define $E(\tau)\eqdef\E{\pta v(p_a) \pta v(p'_a) \ptb v(p_b) \ptb v(p'_b) }$.
\end{enumerate}
\end{definition}
\textbf{Remark:} \Cref{def:bunching} helps us to re-write \eqref{eq:expksquare} as $\E{K^2(x,x')}=\sum_{\tau} \omega(\tau)E(\tau)$.

\begin{proposition} \label{prop:prod}
For $\tau=(p_a,p'_a,p_b,p'_b;\ta,\tb)$, let $\ta$ belong to layer $l_a\in[d]$ and $\tb$ belong to layer $l_b\in[d]$. Let paths $p_a$ and $p'_a$ pass through $\ta$ and paths $p_b$ and $p'_b$ pass through $\tb$. 
\small
\begin{align}
E(\tau) =&\underbrace{\underset{l\neq l_a}{\underset{l\neq l_b} {\underset{l=1}{\Pi^{d}}}}  \E{\Theta(\I_l(p_a),\I_{l-1}(p_a),l )\Theta(\I_l(p'_a),\I_{l-1}(p'_a),l )\Theta(\I_l(p_b),\I_{l-1}(p_b),l )\Theta(\I_l(p'_b),\I_{l-1}(p'_b),l )}}_{\text{Term-I}}\nn\\
		      	&\underbrace{\cdot\E{ \Theta(\I_{l_b}(p_a),\I_{l_b-1}(p_a),l_b )\Theta(\I_{l_b}(p'_a),\I_{l_b-1}(p'_a),l_b) }}_{\text{Term-II}}\nn\\
\label{eq:etau}		      	&\underbrace{\cdot \E{\Theta(\I_{l_a}(p_b),\I_{l_a-1}(p_b),l_a )\Theta(\I_{l_a}(p'_b),\I_{l_a-1}(p'_b),l_a)}}_{\text{Term-III}}
\end{align}
\end{proposition}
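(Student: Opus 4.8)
The plan is to turn $E(\tau)$ into a product of per-layer expectations in three steps: (i) write each of the four differentiated path-values as an explicit product of $d-1$ weights via the preceding proposition, (ii) regroup the $4(d-1)$ resulting weight factors according to the layer they sit in, and (iii) use the fact that the initial weights in distinct layers of the value network are mutually independent to split the expectation of the product into a product of expectations, one per layer.

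For step (i), apply the proposition just above with the given hypotheses ($p_a,p'_a$ pass through $\ta$, which lies in layer $l_a$; $p_b,p'_b$ pass through $\tb$, which lies in layer $l_b$): then $\pta v(p_a)$ is the product over $l\in[d]\setminus\{l_a\}$ of $\Theta(\I_l(p_a),\I_{l-1}(p_a),l)$, and similarly for the other three factors (dropping the layer-$l_a$ weight from $p_a,p'_a$ and the layer-$l_b$ weight from $p_b,p'_b$). For step (ii), multiply the four factors together and sort the weights by layer: a layer $l\notin\{l_a,l_b\}$ receives one weight from each of the four paths, which is exactly the four-fold product inside Term-I; layer $l_b$ receives a weight only from $p_a$ and from $p'_a$, giving Term-II; and layer $l_a$ receives a weight only from $p_b$ and from $p'_b$, giving Term-III. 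For step (iii), since weights from distinct layers are independent at initialisation --- the same property invoked in the proof of \Cref{lm:dot} --- the expectation of the full product equals the product over $l$ of the expectation of the layer-$l$ sub-product, which is precisely \eqref{eq:etau}.

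The argument is essentially bookkeeping, so the main thing to get right is the accounting at the two distinguished layers $l_a$ and $l_b$, i.e.\ correctly removing the differentiated weight from exactly the two paths that pass through it. The one point worth flagging explicitly is the degenerate cases $l_a=l_b$ (and, as a sub-case, $\ta=\tb$): then layer $l_a=l_b$ carries no weight at all, Terms II and III become empty products, and the stated formula still holds provided empty products are read as $1$. These cases do not alter the structure of the argument.
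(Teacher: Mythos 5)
Your proposal is correct and follows essentially the same route as the paper's own proof: expand each differentiated path value as a product of $d-1$ weights via the preceding proposition, regroup the resulting $4(d-1)$ factors by layer, and invoke the independence of weights across layers at initialisation to factor the expectation into the per-layer Terms I--III. Your explicit bookkeeping at the distinguished layers $l_a$ and $l_b$, and your flagging of the degenerate case $l_a=l_b$ (where Terms II and III must be read as empty products), is if anything slightly more careful than the paper's one-line justification.
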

\begin{proof}
\normalsize
Since the paths $p_a,p'_a$ pass through $\ta$ and paths $p_b,p'_b$ pass through $\tb$, it follows that $\pta v(p_a)\neq 0$, $\pta v(p'_a)\neq 0$,  $\ptb(p_b)\neq 0$ and $\ptb(p'_b)\neq 0$.  Note that, 
\begin{align*}
\pta v(p_a) =& \Pi_{l=1,l\neq l_a}^{d} \Theta(\I_{l}(p_a),\I_{l-1}(p_a),l )\\
\pta v(p'_a) =& \Pi_{l=1,l\neq l_a}^{d} \Theta(\I_{l}(p'_a),\I_{l-1}(p'_a),l )\\
\pta v(p_b) =& \Pi_{l=1,l\neq l_b}^{d} \Theta(\I_{l}(p_b),\I_{l-1}(p_b),l )\\
\pta v(p'_b) =& \Pi_{l=1,l\neq l_b}^{d} \Theta(\I_{l}(p'_b),\I_{l-1}(p'_b),l )\\
\end{align*}
The proof is complete by using the fact that weights of different layers are independent and pushing the $\mathbb{E}$ operator inside the $\E{ \pta v(p_a) \pta v(p'_a) \ptb v(p_b) \ptb v(p'_b)}$ to convert the expectation of products into a product of expectations.
\end{proof}

\begin{proposition}\label{lm:necc}
For $\tau=(p_a,p'_a,p_b,p'_b;\ta,\tb)$, $E(\tau)= \sigma^{4(d-1)}$ if and only if 

$\bullet$ \textbf{Condition I:} $p_a, p'_a$ pass through $\ta$ \emph{and} $p_b, p'_b$ pass through $\tb$.

$\bullet$ \textbf{Condition II:} In every layer, $l\in[d]$ either all the $4$ paths $p_a,p'_a,p_b,p'_b$ pass through the same weight \emph{or} there exists two distinct weights, say $\theta_{\text{I},l}$ and $\theta_{\text{II},l}$ such that, $2$ paths out of $p_a,p'_a,p_b,p'_b$ pass through $\theta_{\text{I},l}$ and the other $2$ paths pass through $\theta_{\text{II},l}$.

\end{proposition}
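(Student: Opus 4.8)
The plan is to turn the expectation $E(\tau)$ into a purely combinatorial condition about how the four paths $p_a,p'_a,p_b,p'_b$ share weights, layer by layer. The only facts about the initialisation I will use are that each coordinate $\theta$ of $\Tv_0$ satisfies $\theta^2=\sigma^2$ surely and $\E{\theta}=0$, and that distinct weights (in particular, weights of distinct layers) are independent. From these, for any finite multiset $S$ of value-network weights, $\E{\prod_{\theta\in S}\theta}$ equals $\sigma^{|S|}$ when every distinct weight occurs in $S$ with even multiplicity, and equals $0$ otherwise: write the product as $\prod_w \theta_w^{m_w}$ over distinct weights, pass $\E{\cdot}$ through independence, and use $\E{\theta_w^{m_w}}=\sigma^{m_w}$ for $m_w$ even and $0$ for $m_w$ odd.

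First I would handle Condition I. If $p_a$ does not pass through $\ta$, then $v_{\Tv}(p_a)$ does not depend on $\ta$, so $\pta v(p_a)\equiv 0$ and the product inside $E(\tau)$ vanishes identically; thus $E(\tau)=0\neq\sigma^{4(d-1)}$ (as $\sigma>0$), and similarly for the other three incidences. Hence $E(\tau)=\sigma^{4(d-1)}$ already forces Condition I, and I assume it from now on. Under Condition I each of $\pta v(p_a),\pta v(p'_a),\ptb v(p_b),\ptb v(p'_b)$ is a product of exactly $d-1$ weights, namely the weights of the corresponding path in all layers except the layer of the weight being differentiated; so the product defining $E(\tau)$ is a multiset $S$ of precisely $4(d-1)$ weights. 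By the observation above, $E(\tau)\in\{0,\sigma^{4(d-1)}\}$, and $E(\tau)=\sigma^{4(d-1)}$ iff every distinct weight of $S$ has even multiplicity.

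Next I would split $S$ by layer — legitimate since weights of different layers are independent, so the even-multiplicity test factorises over the $d$ layers — and identify the layerwise condition with Condition II. For a layer $l\notin\{l_a,l_b\}$ all four paths contribute their layer-$l$ weight, and four $\pm\sigma$-valued weights have all multiplicities even exactly when they are all equal or form two coincident pairs, which is precisely Condition II at layer $l$. For the layer $l_a$ (with $l_a\neq l_b$) only $p_b$ and $p'_b$ contribute, so even multiplicity there means $p_b$ and $p'_b$ share their layer-$l_a$ weight; combined with Condition I, which forces $p_a$ and $p'_a$ to share the weight $\ta$ at layer $l_a$, this is equivalent to Condition II at layer $l_a$ — if the shared $p_b,p'_b$-weight equals $\ta$ we are in the ``all four'' case, otherwise the ``$2+2$'' case, and conversely Condition I eliminates every $2+2$ partition at layer $l_a$ other than $\{p_a,p'_a\}\mid\{p_b,p'_b\}$. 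The layer $l_b$ is symmetric; and if $l_a=l_b$ no path contributes a weight there while Condition II at that layer is automatic under Condition I. Intersecting over all $d$ layers gives: under Condition I, every weight of $S$ has even multiplicity iff Condition II holds. Together with the previous paragraph this proves both implications.

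I expect the main obstacle to be the bookkeeping in the two distinguished layers $l_a$ and $l_b$, where only two of the four paths contribute a weight: one must recover the ``all four equal or two pairs'' form of Condition II by reintroducing Condition I and check that no spurious $2+2$ pattern is admissible. The coincidence $l_a=l_b$ must be treated separately, and is precisely the case where the naive layerwise factorisation of \Cref{prop:prod} would need the extra input of Condition I to be correct.
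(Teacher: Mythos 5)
Your proof is correct and takes essentially the same route as the paper's: both rest on the layerwise factorisation of $E(\tau)$ afforded by independence of weights across (and within) layers, together with the fact that odd moments of the symmetric Bernoulli weights vanish, so that each layer's expectation is nonzero exactly when the contributing weights pair up evenly. Your even-multiplicity formulation is simply a more explicit packaging of the paper's term-by-term argument via its Proposition on the factorisation of $E(\tau)$, and it has the minor virtue of handling carefully the distinguished layers $l_a$, $l_b$ and the degenerate case $l_a=l_b$, which the paper's two-line necessity/sufficiency argument glosses over.
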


\begin{proof}\quad

\textbf{Sufficiency:} If \textbf{Condition I} and \textbf{Condition II} hold, then from  \eqref{eq:etau} it follows that $E(\tau)=\sigma^{4(d-2)}\cdot\sigma^2\cdot\sigma^2=\sigma^{4(d-1)}$.

\textbf{Necessity:} If \textbf{Condition I} does not hold, then either one of $\pta v(p_a), \pta v(p'_a), \ptb v(p_b), \ptb v(p'_b)$ becomes $0$. If \textbf{Condition II} does not hold, either Term-I or Term-II or Term-III in \eqref{eq:etau} evaluates to $0$ because all the weights involved are independent symmetric Bernoulli.

\end{proof}

\begin{definition}[Crossing]
Paths $\rho_a$ and $\rho_b$ are said to cross each other if they pass through the same node in one or one or more of the intermediate layers $l=2,\ldots,d-1$. For the sake of consistency, for paths $\rho_a$ and $\rho_b$ that do not cross, we call them to have $0$ crossings. 
\end{definition}

\begin{definition}[Splicing]
Let $(\rho_a,\rho_a,\rho_b,\rho_b)$ be $4$ paths (from a base term) occurring in pairs of $2$ each. Let $\rho_a$ and $\rho_b$ cross at $k\in\{0,\ldots,d-1\}$ intermediate nodes, belonging to layers $l_1,\ldots, l_k$ (let $l_0=0$ and $l_{k+1}=d$). Let the set of permutations of $(a,a,b,b)$ be denoted by $Pm\left((a,a,b,b)\right)\subset\{a,b\}^4$ . We say that paths $(p_a,p'_a,p_b,p'_b)$ to be `splicing' of $(\rho_a,\rho_a,\rho_b,\rho_b)$ if there exists $base(i,\cdot)\in Pm\left((a,a,b,b)\right),i=1,\ldots,k+1$ such that 
\begin{align*}
I_l(p_a) &= I_l(\rho_{base(i,1)}), l\in[l_{i-1},l_i], i=1,\ldots, k+1\\
I_l(p'_a) &= I_l(\rho_{base(i,2)}), l\in[l_{i-1},l_i], i=1,\ldots, k+1\\
I_l(p_b) &= I_l(\rho_{base(i,3)}), l\in[l_{i-1},l_i], i=1,\ldots, k+1\\
I_l(p'_b) &= I_l(\rho_{base(i,4)}), l\in[l_{i-1},l_i], i=1,\ldots, k+1
\end{align*}
\end{definition}

\begin{lemma}\label{lm:base}
Let $\tau=\left(p_a,p'_a,p_b,p'_b;\ta,\tb\right)$ be such that $E(\tau)=\sigma^{4(d-1)}$. Then there exists $\rho_a$ and $\rho_b$ such that $\rho_a$ passes through $\ta$, and $\rho_b$ passes through $\tb$, and $(p_a,p'_a,p_b,p'_b)$ is a splicing of $(\rho_a,\rho_a,\rho_b,\rho_b)$.
\end{lemma}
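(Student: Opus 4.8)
The plan is to reverse-engineer the reference paths $\rho_a,\rho_b$ from the structural information provided by \Cref{lm:necc}, and then verify that $(p_a,p'_a,p_b,p'_b)$ is obtained from $(\rho_a,\rho_a,\rho_b,\rho_b)$ by splicing at the crossing nodes. Since $E(\tau)=\sigma^{4(d-1)}$, \Cref{lm:necc} tells us that Condition I holds (so $p_a,p'_a$ pass through $\ta$ in layer $l_a$, and $p_b,p'_b$ pass through $\tb$ in layer $l_b$) and that Condition II holds: in every layer $l\in[d]$, the four indices $\I_l(p_a),\I_l(p'_a),\I_l(p_b),\I_l(p'_b)$ either all coincide, or split into two pairs each sharing a common value. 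The key observation is that Condition II constrains, layer by layer, how the four paths are ``paired up'', and one must show these pairings can be organised consistently across layers into two coherent paths $\rho_a,\rho_b$.

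First I would make precise the ``pairing'' structure: for each layer $l$, define an equivalence-type partition $\pi_l$ of the label set $\{1,2,3,4\}$ (indexing $p_a,p'_a,p_b,p'_b$) according to which of the four paths agree at their layer-$l$ node; Condition II says $\pi_l$ is either the one-block partition or a partition into two blocks of size $2$. Next I would use the boundary conditions to anchor things: in layer $d$ all paths end at the output node $\I_d=1$, and in layer $l_a$ the pair $\{p_a,p'_a\}$ is forced together (similarly $\{p_b,p'_b\}$ at $l_b$), because Condition~I guarantees they share the weight $\ta$ (resp.\ $\tb$), which pins both the layer-$l_a$ node and the layer-$(l_a-1)$ node. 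Then I would define $\rho_a$ to be the path that agrees with $p_a$ on layers $[l_{i-1},l_i]$ following, at each crossing, the block-label assignment; concretely, between consecutive crossing layers of $p_a$ and $p_b$ the ``$a$-strand'' and ``$b$-strand'' each run along a fixed one of the two paired values, and $\rho_a,\rho_b$ are obtained by consistently following the strands. One shows this is well-defined because between two crossings the two blocks of the partition cannot merge or re-split (the only layers where the block structure can change are crossing layers, by definition of crossing), so the assignment of $\rho_a$ and $\rho_b$ to strands is constant on each interval $[l_{i-1},l_i]$ and corresponds exactly to a choice of $base(i,\cdot)\in Pm((a,a,b,b))$. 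Finally I would check that $\rho_a$ passes through $\ta$ and $\rho_b$ through $\tb$: this is immediate since on the interval containing layer $l_a$, the $a$-strand is (by the anchoring above) the common value of $p_a$ and $p'_a$, so $\rho_a$ inherits the node and weight $\ta$; symmetrically for $\rho_b$ and $\tb$.

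The main obstacle I expect is the \emph{consistency of the strand labelling across crossing layers}: a priori, at a crossing node the two size-$2$ blocks could be re-paired in a different way than before (e.g.\ $\{p_a,p_b\}$ and $\{p'_a,p'_b\}$ rather than $\{p_a,p'_a\}$ and $\{p_b,p'_b\}$), and one must argue that whatever re-pairing occurs is still realisable as a splicing, i.e.\ that it corresponds to a legitimate permutation in $Pm((a,a,b,b))$ on the next interval. The resolution is that the splicing definition already allows an arbitrary such permutation on each interval $[l_{i-1},l_i]$, so no contradiction arises; the work is simply to check that the node-equality constraints defining ``splicing of $(\rho_a,\rho_a,\rho_b,\rho_b)$'' are exactly the layerwise pairing constraints forced by Condition~II, and that the crossing set of $\rho_a$ and $\rho_b$ (as newly constructed) is contained in the set of layers where $\pi_l$ is the one-block partition or where the re-pairing switches — which is precisely the data $l_1,\dots,l_k$ needed in the definition of splicing. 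The remaining details (boundary layers $l_0=0$, $l_{k+1}=d$, and the degenerate case of $0$ crossings, where $(p_a,p'_a,p_b,p'_b)=(\rho_a,\rho_a,\rho_b,\rho_b)$ outright) are routine.
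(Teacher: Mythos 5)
Your proposal is correct and follows essentially the same route as the paper: the paper's proof is precisely a layer-by-layer enumeration of the admissible pairings of the four paths (presented as a figure of four layouts plus a transition table showing that the $2{+}2$ pairing can only be re-shuffled at a layer where all four paths meet at a common node, i.e., at a crossing of $\rho_a$ and $\rho_b$), from which the strands $\rho_a,\rho_b$ and the interval-wise permutations $base(i,\cdot)$ are read off. Your partition formulation $\pi_l$, the anchoring of $\{p_a,p'_a\}$ at $\ta$ and $\{p_b,p'_b\}$ at $\tb$, and the observation that any re-pairing is absorbed by the permutations allowed in the definition of splicing are exactly the content of that figure-and-table argument, stated symbolically rather than pictorially.
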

\begin{proof}
Using \Cref{lm:necc} and the fact that $p_a,p'_a,p_b,p'_b$ are paths, only the layouts shown in \Cref{fig:path-layout} are possible. In \Cref{fig:path-layout}, the $4$ different coloured lines stand for the $4$ different paths namely $p_a,p'_a,p_b,p'_b$. The hidden nodes are denoted by the circles. Here, \textbf{(a)} is the case where all the $4$ paths pass through the same weight in a given layer. \textbf{(b),(c), (d)} are the cases where $2$ paths out of the $4$ paths pass through one weight and the other $2$ paths pass through a different weight in a given layer.  \Cref{tb:layout-cond} provides the conditions for the possible current and next layer layouts.
\FloatBarrier
\begin{figure}[h]
\centering
\resizebox{\columnwidth}{!}{
\includegraphics[scale=0.1]{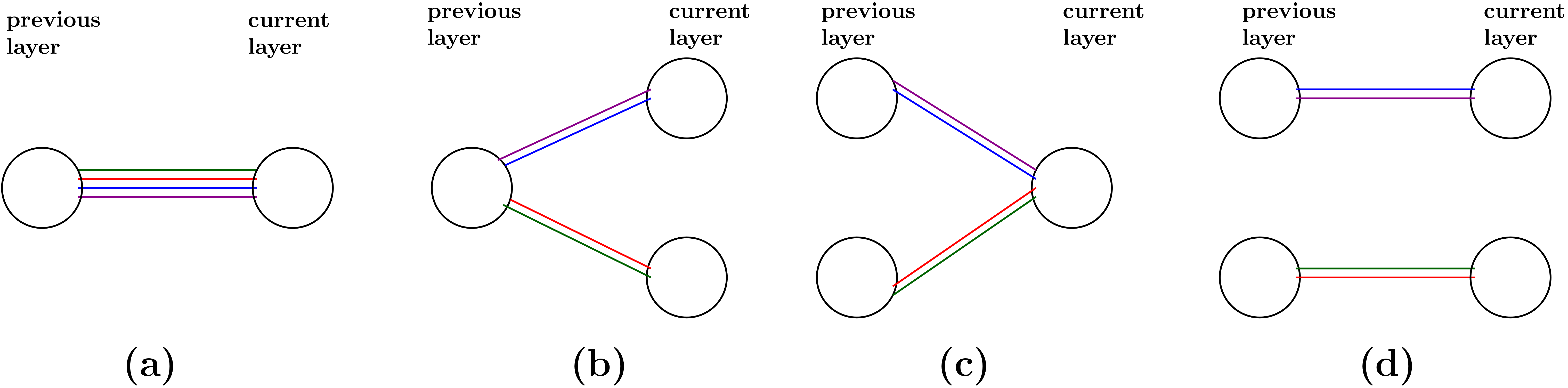}
}
\caption{Various ways in which the $4$  paths $p_a,p'_a,p_b,p'_b$ can pass through a given layer.}
\label{fig:path-layout}
\end{figure}
\FloatBarrier
\begin{table}[h]
\centering
\begin{tabular}{|c|c|}\hline
Current Layer Layout & Next Layer Layout\\\hline
\textbf{(a)} &  \textbf{(a)} or  \textbf{(b)}\\\hline
\textbf{(b)} &  \textbf{(c)} or  \textbf{(d)}\\\hline 
\textbf{(c)} &  \textbf{(a)} or  \textbf{(b)}\\\hline 
\textbf{(d)} &  \textbf{(c)} or  \textbf{(d)}\\\hline 
\end{tabular}
\caption{Show the possible current and next layer layouts.}
\label{tb:layout-cond}
\end{table}
Thus in each layer $p_a,p'_a,p_b,p'_b$ can always be paired to obtain $\rho_a$ and $\rho_b$. In the splicing, $base(i,1)$ specifies whether $p_a$ follows $\rho_a$ or $\rho_b$ between layers $l_{i-1}$ and $l_i$ (i.e., between crossing points). The role of $base(i,2)$, $base(i,3)$ and $base(i,4)$ can be explained in a similar manner.
\end{proof}

\begin{lemma}\label{lm:identitysplice}
Let $\tau'=\left(\rho_a,\rho_a,\rho_b,\rho_b;\ta,\tb\right)\in \B$ be an index in the base set such that $\rho_a$ and $\rho_b$ do not cross and $E(\tau')\neq 0$. Let $\tau=(p_a,p'_a,p_b,p'_b;\ta,\tb)$ be such that $(p_a,p'_a,p'_b,p_b)\neq\left(\rho_a,\rho_a,\rho_b,\rho_b\right) $ is a `splicing' of $(\rho_a,\rho_a,\rho_b,\rho_b)$. Then $E(\tau)=0$.
\end{lemma}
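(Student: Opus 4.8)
The plan is to reduce any splicing of a \emph{non-crossing} pair to a plain relabelling of the four paths and then invoke \Cref{lm:necc}. First I would note that, since $\rho_a$ and $\rho_b$ do not cross, they have $k=0$ crossings; hence in the definition of splicing there is a single block $[l_0,l_{k+1}]=[0,d]$ and a single permutation $base(1,\cdot)\in Pm\left((a,a,b,b)\right)$ applied along the whole length of each path. Consequently each of $p_a,p'_a,p_b,p'_b$ coincides, in every layer, with $\rho_a$ or with $\rho_b$, and as a multiset $\{p_a,p'_a,p_b,p'_b\}=\{\rho_a,\rho_a,\rho_b,\rho_b\}$; moreover, by hypothesis $(p_a,p'_a,p'_b,p_b)\neq(\rho_a,\rho_a,\rho_b,\rho_b)$, so $base(1,\cdot)$ is not the identity permutation.

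Next I would dispose of Condition~II of \Cref{lm:necc}: in any fixed layer $l$ the layer-$l$ weights carried by the four paths form the multiset $\{w_l^a,w_l^a,w_l^b,w_l^b\}$, where $w_l^a$ and $w_l^b$ are the layer-$l$ weights of $\rho_a$ and $\rho_b$, so they are either all equal or split into two equal pairs, and Condition~II holds automatically. By \Cref{lm:necc} this forces $E(\tau)\in\{0,\sigma^{4(d-1)}\}$, with $E(\tau)=\sigma^{4(d-1)}$ \emph{iff} Condition~I also holds, i.e.\ $p_a,p'_a$ both pass through $\ta$ and $p_b,p'_b$ both pass through $\tb$. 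It would then remain to show that Condition~I forces $base(1,\cdot)$ to be the identity, contradicting the previous paragraph. For this I would use that $E(\tau')\neq0$, via \Cref{lm:necc} applied to $\tau'\in\B$, gives that $\rho_a$ passes through $\ta$ and $\rho_b$ passes through $\tb$; and that non-crossing forces $\I_l(\rho_a)\neq\I_l(\rho_b)$ for every $l\in\{2,\dots,d-1\}$, hence $\rho_a$ and $\rho_b$ carry distinct weights in every layer $l\in\{2,\dots,d\}$ (a path's layer-$l$ weight being determined by its layer-$l$ and layer-$(l{-}1)$ units). Therefore $\rho_b$ does not pass through $\ta$ and $\rho_a$ does not pass through $\tb$, so the unique element of $\{\rho_a,\rho_b\}$ that passes through $\ta$ is $\rho_a$; Condition~I then forces $p_a=p'_a=\rho_a$, whence $p_b=p'_b=\rho_b$, i.e.\ $base(1,\cdot)$ is the identity --- a contradiction. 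Thus $E(\tau)=0$.

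The step I expect to require the most care is the parity accounting underlying ``$E(\tau)\in\{0,\sigma^{4(d-1)}\}$ iff Condition~I'': one must track, for each network weight, the multiplicity with which it occurs in $\pta v(p_a)\,\pta v(p'_a)\,\ptb v(p_b)\,\ptb v(p'_b)$, paying particular attention to the two ``omitted'' layers $l_a$ and $l_b$ where only two of the four factors contribute a weight --- precisely where a weight shared by $\rho_a$ and $\rho_b$ could restore even parity. The one configuration not covered by the argument above is $\ta=\tb$ equal to a first-layer weight common to $\rho_a$ and $\rho_b$; this would have to be treated separately, and it is harmless because such a configuration is pinned down by a single first-layer weight and so contributes only $O(1/w)$ lower-order terms to the subsequent variance estimate.
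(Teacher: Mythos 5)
Your proof is correct and takes essentially the same route as the paper's: non-crossing reduces every splicing to a permutation of $(\rho_a,\rho_a,\rho_b,\rho_b)$, and a non-identity permutation together with Condition~I of \Cref{lm:necc} would force both base paths through both $\ta$ and $\tb$, contradicting non-crossing. You are in fact more careful than the paper on two points: you check Condition~II explicitly, and you flag the genuine edge case where $\ta=\tb$ is a first-layer weight shared by $\rho_a$ and $\rho_b$ --- since the paper's notion of crossing only counts shared nodes in layers $2,\ldots,d-1$, two non-crossing paths can share such a weight, and there the paper's step ``both paths pass through $\ta$, hence they cross'' silently fails, so the lemma as literally stated admits a counterexample. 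Your observation that these configurations contribute only $O(1/w)$ is the right way to see that the downstream variance bound in \Cref{th:var} is unaffected.
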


\begin{proof}
Since $E(\tau')\neq 0$, it follows that $\rho_a$ passes through $\ta$ and $\rho_b$ passes through $\tb$. Since $\rho_a$ and $\rho_b$ do not cross each other, the only possible splicings are the permutations of $(\rho_a,\rho_a,\rho_b,\rho_b)$ itself. For the sake of concreteness, let us pick a $\tau$ such that $(p_a,p'_a,p'_b,p_b)=(\rho_a,\rho_b,\rho_a,\rho_b)$ (a non-identity permutation). For $E(\tau)\neq 0$ to hold, $\pta v(\rho_b)\neq0 $ and $\ptb v(\rho_a) \neq 0$ should also hold, which implies both $\rho_a$ and $\rho_b$ pass through $\ta$ and $\tb$. However, we assumed that $\rho_a$ and $\rho_b$ do not cross each other. Hence,  $E(\tau)= 0$ for any $\tau$ such that $(p_a,p'_a,p'_b,p_b)$ is a non-identity permutation of $(\rho_a,\rho_a,\rho_b,\rho_b)$.
\end{proof}

\begin{proposition}
Let $\tau$ and $\B$ be as in \Cref{def:bunching}, then 
\begin{align*}
\E{K(x,x')}^2= \sum_{\tau\in\B}\omega(\tau)E(\tau)
\end{align*}
\end{proposition}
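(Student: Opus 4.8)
The claim is that $\E{K(x,x')}^2 = \sum_{\tau \in \B} \omega(\tau) E(\tau)$, i.e. when we expand $\E{K^2}$ as $\sum_\tau \omega(\tau) E(\tau)$ and compare with the square of $\E{K}$, only the base terms $\tau \in \B$ survive. The plan is to show two things: first, that $\E{K(x,x')}^2$ itself is exactly $\sum_{\tau \in \B} \omega(\tau) E(\tau)$; and second — which is really the content needed for the subsequent variance bound — that the base terms contributing to $\E{K}^2$ are precisely the non-crossing ones, though strictly for \emph{this} proposition I only need the algebraic identity.

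First I would start from \eqref{eq:expk}, $\E{K(x,x')} = \sum_{\ta,p_a} \phi(p_a)\phi'(p_a)\E{(\pta v(p_a))^2}$, and square it directly. Squaring gives a double sum over $(\ta, p_a)$ and $(\tb, p_b)$ of $\phi(p_a)\phi'(p_a)\phi(p_b)\phi'(p_b)\,\E{(\pta v(p_a))^2}\,\E{(\ptb v(p_b))^2}$. Now I would identify this summand with $\omega(\tau)E(\tau)$ for the base index $\tau = (p_a, p_a, p_b, p_b; \ta, \tb)$: indeed $\omega(\tau) = \phi(p_a)\phi'(p_a)\phi(p_b)\phi'(p_b)$ by definition, matching the $\phi$-product. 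The only thing to check is that $E(\tau) = \E{(\pta v(p_a))^2}\,\E{(\ptb v(p_b))^2}$ when $p_a = p'_a$ and $p_b = p'_b$. For that I use \Cref{prop:prod}: with $p'_a = p_a$ and $p'_b = p_b$, Term-II becomes $\E{\Theta(\I_{l_b}(p_a),\cdot)^2}$ and Term-III becomes $\E{\Theta(\I_{l_a}(p_b),\cdot)^2}$, and Term-I becomes a product over layers $l \neq l_a, l_b$ of $\E{\Theta(\I_l(p_a),\cdot)^2 \Theta(\I_l(p_b),\cdot)^2}$ — and since the weights are independent across layers and each factor is a product of (squares of) weights, this factorizes as $\E{\prod_{l \neq l_a} \Theta(\I_l(p_a),\cdot)^2} \cdot \E{\prod_{l \neq l_b}\Theta(\I_l(p_b),\cdot)^2} = \E{(\pta v(p_a))^2}\E{(\ptb v(p_b))^2}$, using that a path has only $d$ weights. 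Actually more carefully one should note that when $p_a$ and $p_b$ share weights the factorization still holds because each shared weight squared has a deterministic value $\sigma^2$; so the cross-layer expectation splits regardless. This establishes $\E{K}^2 = \sum_{\tau\in\B}\omega(\tau)E(\tau)$.

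The one subtlety I would flag and handle carefully is the indexing/counting: in the square of $\E{K}$ the sum is over \emph{all} pairs $((\ta,p_a),(\tb,p_b))$ with $p_a$ passing through $\ta$ and $p_b$ through $\tb$ (else the $\E{(\pta v(p_a))^2}$ factor vanishes because $\pta v(p_a) = 0$), and the base set $\B = \{\tau : p_a = p'_a, p_b = p'_b\}$ ranges over exactly those same pairs (plus indices where the derivatives vanish, for which $E(\tau) = 0$ and so contribute nothing). So the two sums have the same nonzero terms and agree term by term. The main (and really only) obstacle is making sure the factorization of $E(\tau)$ for base indices into the product of two second-moment expectations is airtight when the two paths $p_a, p_b$ overlap in some weights — but since every weight is symmetric Bernoulli on $\{-\sigma,+\sigma\}$, each weight squared equals $\sigma^2$ deterministically, so overlapping-weight factors contribute deterministic constants and the expectation still factors across the two "halves"; this is exactly the mechanism already used in \Cref{lm:dot}(ii) and \Cref{prop:prod}, so I would cite those and keep the argument short.
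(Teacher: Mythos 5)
Your proposal is correct and follows essentially the same route as the paper: both sides are written as double sums over $(\ta,p_a)$ and $(\tb,p_b)$, and the identity reduces to the observation that for symmetric Bernoulli weights each squared weight is deterministically $\sigma^2$, so $\E{(\pta v(p_a))^2}\E{(\ptb v(p_b))^2}=\E{(\pta v(p_a))^2(\ptb v(p_b))^2}=\sigma^{4(d-1)}$ whenever the derivatives are nonzero. Your treatment of overlapping weights and of the vanishing indices is in fact slightly more explicit than the paper's one-line justification, but it is the same argument.
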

\begin{proof}
Writing down the left-hand and right-hand sides, we have:
\begin{align*}
\E{K(x,x')}^2 =& \underset {\tb,p_b}{\underset{\ta,p_a}{\sum}} \phi(p_a) \phi'(p_a) \phi(p_b) \phi'(p_b) \E{\left(\pta v(p_a)\right)^2}\E{\left(\ptb v(p_b)\right)^2}\\
\sum_{\tau\in\B}\omega(\tau)E(\tau) =& \underset {\tb,p_b}{\underset{\ta,p_a} {\sum}} \phi(p_a) \phi'(p_a) \phi(p_b) \phi'(p_b)\E{ \left(\pta v(p_a) \right)^2\left( \ptb v(p_b)\right)^2}
\end{align*}
When $\pta v(p_a)\neq 0$ and $\ptb v(p_b)\neq 0$, for symmetric Bernoulli weights it follows that $\E{\left(\pta v(p_a)\right)^2}\E{\left(\ptb v(p_b)\right)^2}= \E{ \left(\pta v(p_a) \right)^2\left( \ptb v(p_b)\right)^2}=\sigma^{4(d-1)}$.
\end{proof}

\begin{theorem}\label{th:var}
Let the weights be chosen as in \Cref{th:main}. Then, it follows that
 \begin{align*} 
 Var\left[K(x,x')\right]\leq C\din^2\frac{d^3}{w}
 \end{align*}
\end{theorem}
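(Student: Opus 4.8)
The plan is to bound the variance by $\E{K^2(x,x')} - \E{K(x,x')}^2 = \sum_{\tau}\omega(\tau)E(\tau) - \sum_{\tau\in\B}\omega(\tau)E(\tau) = \sum_{\tau\notin\B}\omega(\tau)E(\tau)$, using the two propositions just established (the expansion of $\E{K^2}$ as $\sum_\tau \omega(\tau)E(\tau)$ and the identity $\E{K}^2 = \sum_{\tau\in\B}\omega(\tau)E(\tau)$). So the whole task reduces to controlling $\sum_{\tau\notin\B}\omega(\tau)E(\tau)$. By \Cref{lm:necc}, $E(\tau)$ is nonzero only when $\tau$ satisfies Conditions I and II, and in that case $E(\tau)=\sigma^{4(d-1)}$; by \Cref{lm:base} every such $\tau$ is a splicing of some base index $\tau'=(\rho_a,\rho_a,\rho_b,\rho_b;\ta,\tb)$ with $E(\tau')\neq 0$; and by \Cref{lm:identitysplice}, if $\rho_a,\rho_b$ do not cross, the only surviving splicing is the base term itself. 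Hence every $\tau\notin\B$ with $E(\tau)\neq 0$ must arise from a base pair $(\rho_a,\rho_b)$ that crosses at least once.

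First I would set up the counting. Fix an input node $i_0\in[\din]$ for $\rho_a$ and an input node $i_0'\in[\din]$ for $\rho_b$; the factor $\omega(\tau)=\phi(p_a)\phi'(p'_a)\phi(p_b)\phi'(p'_b)$ contributes a product of at most four input-coordinate values, which after summing over input nodes gives the $\din^2$ factor (two free input indices, and $|x(i)x'(i)|$-type terms bounded using $\|x\|,\|x'\|$; I will absorb data norms into the constant $C$, consistent with the statement's $C\din^2$). For the hidden-layer structure: a base pair $(\rho_a,\rho_b)$ crossing at $k\ge 1$ intermediate layers is specified by choosing the crossing layers and the hidden units, and then each splicing is determined by a choice in $Pm((a,a,b,b))$ at each of the $k+1$ segments — a bounded number ($\binom{4}{2}=6$, or fewer) of choices per segment. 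The key quantitative point is that for the splice to have $E(\tau)\neq 0$ the weight variables $\ta,\tb$ are forced to lie on $\rho_a$ resp. $\rho_b$ (at most $d$ choices each), and the number of crossing pairs of paths is small: two paths sharing a node in layer $\ell$ agree from that node onward in a way that costs a factor $\approx 1/w$ relative to independent paths, because forcing an intersection removes roughly a full layer's worth ($w$) of free hidden-unit choices. Carefully, a pair of paths with $k\ge 1$ crossings is, up to constants, a $w^{-k}$-fraction of all path pairs, while each extra crossing multiplies the number of admissible splicings by only a constant. Summing the geometric series in $k$ then yields an overall factor $d^3/w$: one power of $d$ from choosing $\ta$ on $\rho_a$, one from choosing $\tb$ on $\rho_b$, one more from choosing the first crossing layer (or from the $\sum_k$ collapsing against the constant-per-layer splicing multiplicity), and the single $1/w$ surviving from the first forced crossing.

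Concretely the steps are: (1) write $Var[K] = \sum_{\tau\notin\B,\,E(\tau)\neq 0}\omega(\tau)\sigma^{4(d-1)}$ via the two propositions and \Cref{lm:necc}; (2) using \Cref{lm:base} and \Cref{lm:identitysplice}, reindex the sum by (base pair $(\rho_a,\rho_b)$ that crosses $\ge 1$ times, choice of $\ta$ on $\rho_a$, $\tb$ on $\rho_b$, and a non-identity splicing pattern); (3) bound $|\omega(\tau)|\le \prod$ of input values and sum over input nodes to extract $\din^2\|x\|_?\|x'\|_?$, folded into $C$; (4) count: $\le d$ choices for $\ta$, $\le d$ for $\tb$, $\le d$ for the first crossing layer, $O(1)$ splicing patterns per segment, and crucially a $1/w$ density penalty per crossing with the $\sum_{k\ge 1}(O(1)/w)^{k-1}$ tail summing to $O(1)$; (5) multiply by $\sigma^{4(d-1)} = (\sigma')^{4(d-1)}/w^{2(d-1)}$ and check it exactly cancels the total path-pair count $\sim w^{2(d-1)}$ appearing in the normalisation (this is the same cancellation that makes $\E{K}$ finite), leaving $C\din^2 d^3/w$.

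The main obstacle is step (4): making precise the claim that enforcing $k$ crossings among the four spliced paths costs exactly $w^{-k}$ in combinatorial density while the number of valid splicing patterns only grows by a constant factor per crossing — i.e. pinning down which of the $w^{2(d-1)}$ unconstrained $(\rho_a,\rho_b)$ path-pair configurations remain admissible, and verifying via the transition table in \Cref{tb:layout-cond} that the layout is forced into the "(c)/(d) then (c)/(d)" regime after a crossing so that no combinatorial explosion occurs. Everything else — the algebraic cancellation of $\sigma^{4(d-1)}$ against the path count, and the extraction of $\din^2$ — is bookkeeping. I would also double check the power of $d$: the bound $d^3$ suggests at most three independent $O(d)$ choices, so I must make sure the crossing-layer index and the two weight-layer indices $l_a,l_b$ are the only $d$-dependent free parameters and that the per-layer splicing multiplicity genuinely contributes an $O(1)^{d}$... no: it must contribute $O(1)$ total, which forces the "only one crossing matters" phenomenon from \Cref{lm:identitysplice} to be used in the form that extra crossings beyond the first are suppressed, not just the zero-crossing case. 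Reconciling the lemma as stated (which only handles $k=0$) with the need to suppress $k\ge 2$ is the subtle point I would need to handle, presumably by a short extension of the splicing argument showing each additional crossing either forces another weight coincidence (another $1/d$-type restriction, harmless) or another hidden-unit coincidence (another $1/w$, which is what we want).
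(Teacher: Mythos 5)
Your proposal follows essentially the same route as the paper's proof: reduce $Var[K(x,x')]$ to $\sum_{\tau\notin\B}\omega(\tau)E(\tau)$, use \Cref{lm:necc}, \Cref{lm:base} and \Cref{lm:identitysplice} to show every surviving non-base index arises from a base pair crossing $k\geq 1$ times, and then count crossings layer-by-layer so that each crossing costs a factor of order $d/w$ (the paper's count is $\sum_{k\geq 1} d^2\cdot 6^{k+1}(wd)^k\cdot(P^2/w^{2k})\sigma^{4(d-1)}$, a geometric series in $6d/w$ whose leading $k=1$ term gives $C\din^2 d^3/w$). The subtlety you flag at the end — that extra crossings beyond the first must be suppressed by an additional $1/w$ each, not merely handled by the $k=0$ case of \Cref{lm:identitysplice} — is exactly how the paper closes the argument, via that geometric series.
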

\begin{proof}
\begin{align*}
Var\left[K(x,x')\right] =&\E{K^2(x,x')}-\E{K(x,x')}^2\\
=&\sum_{\tau} \omega(\tau)E(\tau)-\sum_{\tau\in\B}\omega(\tau)E(\tau)\\
=&\sum_{\tau\notin\B}\omega(\tau)E(\tau)
\end{align*}
In what follows, without loss of generality, let $|\omega(\tau)|\leq1$. Then,  
\begin{align*}
Var\left[K(x,x')\right] \leq \sum_{\tau\notin\B}E(\tau)
\end{align*}
Let $\bar{\B}=\{\tau\notin\B\}$. From \Cref{lm:base} we know that every $\tau\in\bar{\B}$ such that $E(\tau)=\sigma^{4(d-1)}$ can always be identified with a base term $\tau'=(\rho_a,\rho_a,\rho_b,\rho_b;\ta,\tb)\in\B$, and from \Cref{lm:identitysplice}, we know that in such a $\tau'$, the base paths $\rho_a$ and $\rho_b$ cross $k>0$ times. Now, there are $\binom{(d-1)}{k}< d^k$ possible ways in which the $k$ crossing can occur within the $(d-1)$ layers, and within each layer there are $w$ possible nodes in which such crossings can occur. The total number of paths that pass through $k<d-1$ nodes is $\frac{P}{w^k}$, where $P=\din w^{(d-1)}$. And the number of splicings of base terms with $k$ crossings is less than $6^{k+1}$. Once we obtain the paths, the crossings, the splicing, the weights $\ta$ and $\tb$ can each occur in up to any of the $d-1$ layers.
Putting all this together, we have
\begin{align*}
Var\left[K(x,x')\right] \leq& \sum_{k=1}^\infty d^26^{k+1}\cdot(wd)^k\cdot\left(\frac{P^2}{w^{2k}}\right)\sigma^{4(d-1)}\\
\leq& 6\din^2 {\sigma'}^2\left(\frac{6d^3}{w}\right)\left(\frac{1}{1-\frac{6d}{w}}\right)\\
\leq& C\din^2\frac{d^3}{w}
\end{align*}
\end{proof}

\textbf{Proof of\Cref{th:main}}
\begin{proof} Follows from \Cref{th:exp} and \Cref{th:var}.
\end{proof}

\section{Applying \Cref{th:main} In Finite Width Case}\label{sec:finite}
In this section, we describe the technical step in applying \Cref{th:main} which requires $w\ra\infty$ to measure the information in the gates of a DNN  with finite width.  Since we are training only the value network in the FPNP mode of the DGN, it is possible to let the width of the value network alone go to $\infty$, while keeping the width of the feature network (which stores the fixed NPFs) finite. This is easily achieved by multiplying the width by a positive integer $m\in\Z_{+}$, and \emph{padding} the gates `$m$' times.
\begin{definition}
Define DGN${}^{(m)}$ to be the DGN whose feature network is of width $w$ and depth $d$, and whose value network  is a fully connected network of width $mw$ and depth $d$. The $mw(d-1)$ gating values are obtained by `padding' the $w(d-1)$gating values of the width `$w$', depth `$d$' feature network `$m$' times (see \Cref{fig:dgnpad}, \Cref{tb:dgnpad}). 
\end{definition}
\FloatBarrier
\begin{table}[h]
\centering
\begin{tabular}{|  l | l |}\hline
 Feature Network (NPF)& Value Network (NPV)\\
 $z^{\text{f}}_{x}(0)=x$ &$z^{\text{v}}_{x}(0)=x$ \\
$q^{\text{f}}_{x}(i,l)=\sum_{j} \Tf(i,j,l)\cdot z_{x}(j,l-1)$ & $q^{\text{v}}_{x}(i,l)=\sum_{j} \Tv(i,j,l)\cdot z^{\text{v}}_{x}(j,l-1)$\\
$z^{\text{f}}_{x}(i,l)= q^{\text{f}}_{x}(i,l)\cdot\mathbbm{1}_{\{q^{\text{f}}_{x}(i,l)>0\}}$& $z^{\text{v}}_{x}(i,l)= q^{\text{v}}_{x}(i,l)\cdot G_{x}(i,l)$ \\
 None &$\hat{y}_{{\Tdgn}^{(m)}}(x)= \sum_{j} \Tv(1,j,l)\cdot z^{\text{v}}_{x}(j,d-1)$\\\hline
\multicolumn{2}{|l|}{{Hard ReLU: $G_{x}(i,l)=\mathbbm{1}_{\{q^{\text{f}}_{x}(i,l)>0\}}$ or Soft-ReLU: $G_{x}(i,l)={1}/{\left(1+\exp(-\beta\cdot q^{\text{f}}_{x}(i,l)>0)\right)} $}}\\\hline
\end{tabular}
\caption{Deep Gated Network with padding. Here the gating values are padded, i.e., $ G_{x}(kw+i,l)=G_{x}(i,l),\forall k=0,1,\ldots,m-1, i\in[w]$. }
\label{tb:dgnpad}
\end{table}

\textbf{Remark:}  DGN${}^{(m)}$ has a total of $P^{(m)}=(mw)^{(d-1)}\din$ paths. Thus, the NPF and NPV are quantities in $\R^{P^{(m)}}$. In what follows, we denote the NPF matrix of DGN${}^{(m)}$ by $\Phi^{(m)}_{\Tf_0}\in\R^{P^{(m)}\times n}$, and use $H^{(m)}_{\text{FNPF}}=(\Phi^{(m)}_{\Tf_0})^\top \Phi^{(m)}_{\Tf_0}$. 

Before we proceed to state the version of \Cref{th:main} for DGN${}^{(m)}$, we will look at an equivalent definition for $\Lambda_{\Theta}$ (see \Cref{def:lambda}).
\begin{definition}\label{def:equilambda}
For input examples $s, s'\in[n]$ define 

$1.$ $\tau_{\Theta}(s,s',l)\stackrel{def}=\sum_{i=1}^w G_{x_s,\Theta}(i,l)G_{x_{s'},\Theta}(i,l)$ be the number of activations that are ``on'' for both inputs $s,s'\in[n]$ in layer $l\in[d-1]$.

$2.$ $\Lambda_{\Theta}(s,s')\stackrel{def}=\Pi_{l=1}^{d-1}\tau_{\Theta}(s,s',l)$.
\end{definition}

\FloatBarrier
\begin{figure}[h]
\centering
\includegraphics[scale=0.1]{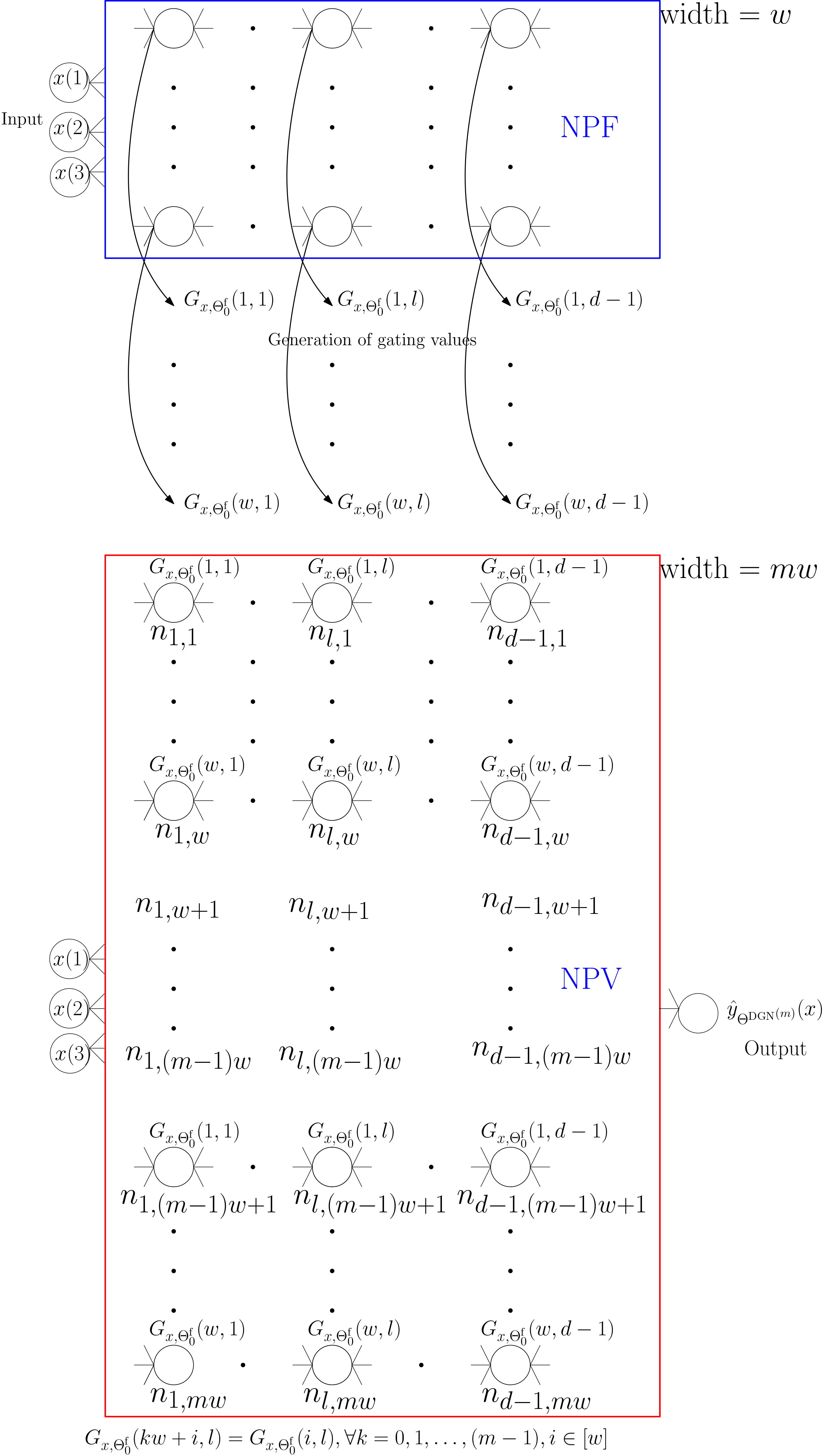}
\caption{DGN${}^{(m)}$ where the value network is of width $mw$ and depth $d$. The gates are derived by padding the gating values obtained from the feature network `$m$' times, i.e., $ G_{x}(kw+i,l)=G_{x}(i,l),\forall k=0,1,\ldots,m-1, i\in[w]$.}
\label{fig:dgnpad}
\end{figure}

\begin{corollary}[Corollary to \Cref{th:main}] Under the same assumptions as in \Cref{th:main} with $\sigma$ replaced by $\sigma_{(m)}=\sigma/\sqrt{m}$, as $m\ra\infty$, \begin{align*}K^{\text{v}}_{\Theta^{\text{DGN}^{(m)}}_0}\ra K^{(d)}_{\text{FNPF}} = d\cdot \sigma_{(m)}^{2(d-1)} H^{(m)}_{\text{FNPF}}= d\cdot \sigma^{2(d-1)} H_{\text{FNPF}}\end{align*}
\end{corollary}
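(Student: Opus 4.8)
The plan is to reduce the corollary to a direct application of \Cref{th:main} by interpreting DGN${}^{(m)}$ as an ordinary DGN whose feature network has width $mw$ but whose gating pattern happens to be constant on each block of $w$ consecutive hidden units. Concretely, the value network of DGN${}^{(m)}$ is a genuine fully connected network of width $mw$ and depth $d$ with i.i.d.\ symmetric Bernoulli weights in $\{-\sigma_{(m)},+\sigma_{(m)}\}$, and the gates it uses are fixed (non-trainable) and statistically independent of $\Tv_0$. Hence \Cref{th:main}, applied verbatim with width $mw$ in place of $w$ and $\sigma_{(m)}$ in place of $\sigma$, gives that as $m\ra\infty$ (so that $mw\ra\infty$), $K^{\text{v}}_{\Theta^{\text{DGN}^{(m)}}_0}\ra d\cdot\sigma_{(m)}^{2(d-1)} H^{(m)}_{\text{FNPF}}$, where $H^{(m)}_{\text{FNPF}}=(\Phi^{(m)}_{\Tf_0})^\top\Phi^{(m)}_{\Tf_0}$ is the NPK of the padded feature network. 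The only thing left to prove is the purely combinatorial identity $d\cdot\sigma_{(m)}^{2(d-1)}H^{(m)}_{\text{FNPF}}=d\cdot\sigma^{2(d-1)}H_{\text{FNPF}}$, i.e.\ that the $m^{2(d-1)}$ blow-up in the number of shared active paths is exactly compensated by $\sigma_{(m)}^{2(d-1)}=\sigma^{2(d-1)}/m^{(d-1)}$ cancelling against\ldots wait, that is only a factor $m^{-(d-1)}$, so I must be careful: I will instead track it via the $\Lambda$ formulation.

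The cleanest route to the combinatorial identity is through the equivalent description of $\Lambda$ in \Cref{def:equilambda}. In the padded network, because $G_{x}(kw+i,l)=G_{x}(i,l)$ for all $k=0,\ldots,m-1$, the per-layer overlap count satisfies $\tau^{(m)}_{\Tf_0}(s,s',l)=\sum_{i=1}^{mw}G_{x_s}(i,l)G_{x_{s'}}(i,l)=m\cdot\tau_{\Tf_0}(s,s',l)$, and therefore $\Lambda^{(m)}_{\Tf_0}(s,s')=\Pi_{l=1}^{d-1}\tau^{(m)}_{\Tf_0}(s,s',l)=m^{(d-1)}\Lambda_{\Tf_0}(s,s')$. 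By \Cref{lm:npk} (applied to the width-$mw$ feature network) we have $H^{(m)}_{\text{FNPF}}=\Sigma\odot\Lambda^{(m)}_{\Tf_0}=m^{(d-1)}\,\Sigma\odot\Lambda_{\Tf_0}=m^{(d-1)}H_{\text{FNPF}}$. Substituting $\sigma_{(m)}^{2(d-1)}=\sigma^{2(d-1)}/m^{(d-1)}$ gives $d\cdot\sigma_{(m)}^{2(d-1)}H^{(m)}_{\text{FNPF}}=d\cdot\sigma^{2(d-1)}\,m^{-(d-1)}\,m^{(d-1)}H_{\text{FNPF}}=d\cdot\sigma^{2(d-1)}H_{\text{FNPF}}$, which is the claimed limit.

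The one point that needs a sentence of justification, rather than pure bookkeeping, is that \Cref{th:main} genuinely applies to the padded architecture: its hypotheses are (i) $\Tv_0$ independent of $\Tf_0$ and (ii) $\Tv_0$ sampled i.i.d.\ symmetric Bernoulli over $\{-\sigma_{(m)},+\sigma_{(m)}\}$ with $\sigma_{(m)}=\sigma'/\sqrt{mw}$ in terms of the new width $mw$; one checks this is consistent with $\sigma_{(m)}=\sigma/\sqrt{m}$ precisely because $\sigma=\sigma'/\sqrt{w}$. Nothing in the proof of \Cref{th:main} (the expectation computation of \Cref{th:exp} and the variance bound of \Cref{th:var}) uses that the gates are unconstrained across hidden units — it only uses the structure of paths, the independence of value weights across layers, and the symmetric-Bernoulli property — so the padded gating pattern is harmless, and indeed the $\mathcal{O}(d^3/(mw))$ variance bound still tends to $0$. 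I do not expect a real obstacle here; the main (mild) subtlety is simply making the two parametrisations of $\sigma_{(m)}$ agree and observing that the $m^{(d-1)}$ factors in $\Lambda^{(m)}$ and in $\sigma_{(m)}^{2(d-1)}$ are reciprocal, which is what makes the padded quantity an honest finite-width surrogate for $H_{\text{FNPF}}$.
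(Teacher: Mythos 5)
Your proposal is correct and follows essentially the same route as the paper: the paper's proof is exactly the bookkeeping identity $\sigma_{(m)}^{2(d-1)}\Lambda^{(m)}(s,s')=\sigma^{2(d-1)}\Lambda(s,s')$ obtained from $\tau^{(m)}(s,s',l)=m\,\tau(s,s',l)$ and \Cref{def:equilambda}, with the convergence itself delegated to \Cref{th:main} applied to the width-$mw$ value network. Your additional remarks on why the hypotheses of \Cref{th:main} are unaffected by the padded gating pattern (and that the $\mathcal{O}(d^3/(mw))$ variance bound still vanishes) are a slightly more careful rendering of a step the paper leaves implicit, but they do not constitute a different argument.
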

\begin{proof}
Let $\Lambda^{(m)}_{\text{FNPF}}$ and $\tau^{(m)}_{\text{FNPF}}$ be quantities associated with DGN${}^{(m)}$. We know that  $H^{(m)}_{\text{FNFP}}=\Sigma\odot\Lambda^{(m)}_{\text{FNPF}}$. Dropping the subscript FNPF to avoid notational clutter, we have
\begin{align*}
\left(\sigma/\sqrt{m}\right)^{2(d-1)}\Lambda^{(m)}(s,s')&=\sigma^{2(d-1)}\frac{1}{m^{(d-1)}}\Pi_{l=1}^{d-1}\tau^{(m)}(s,s',l)\\
&=\sigma^{2(d-1)}\frac{1}{m^{(d-1)}}\Pi_{l=1}^{d-1}\left(m \tau(s,s',l)\right)\\
&=\sigma^{2(d-1)}\frac{1}{m^{(d-1)}}m^{(d-1)}\Pi_{l=1}^{d-1} \tau(s,s',l)\\
&=\sigma^{2(d-1)}\Pi_{l=1}^{d-1} \tau(s,s',l)\\
&=\sigma^{2(d-1)}\Lambda(s,s')
\end{align*}
\end{proof}

\section{DGN as a Lookup Table: Applying \Cref{th:main} to a pure memorisation task}\label{sec:mem}

In this section, we modify the DGN in \Cref{fig:dgn} into a memorisation network to solve a pure memorisation task. The objective of constructing the memorisation network is to understand the roles of depth and width in \Cref{th:main} in a simplified setting. In this setting, we show increasing depth till a point helps in training and increasing depth beyond it hurts training. 

\begin{definition}[Memorisation Network/Task]
Given a set of values $(y_s)_{s=1}^n\in  \R$, a memorisation network (with weights $\Theta\in\R^{\dnet}$) accepts $s\in[n]$ as its input and produces $\hat{y}_{\Theta}(s)\approx y_s$ as its output. The loss of the memorisation network is defined as $L_{\Theta}=\frac{1}{2}\sum_{s=1}^n (\hat{y}_{\Theta}(s)-y_s)^2$.
\end{definition}
\FloatBarrier
\begin{table}[h]
\centering
\begin{tabular}{| l |  l  |}\hline
Layer&  Memorisation Network\\\hline
Input  &$z_{\Theta}(0)=1$ \\
Pre-Activation & $q_{s,\Theta}(l)=\sum_{j}\Theta(i,j,l)\cdot z_{s,\Theta}(j,l-1)$\\
Hidden & $z_{s,\Theta}(i,l)=q_{s,\Theta}(i,l)\cdot G_{s}(i,l)$ \\
Final  Output & $\hat{y}_{\Theta}(s)=\sum_{j} \Theta(1,j,d) \cdot z_{s,\Theta}(j,d-1)$\\\hline
\end{tabular}
\caption{ Memorisation Network. The input is fixed and is equal to $1$. All the internal variables depend on the index $s$ and the parameter $\Theta$. The gating values $G_s(i,l)$ are external and independent variables.}
\label{tb:dgnmemo}
\end{table}

\textbf{Fixed Random Gating:} The memorisation network is described in \Cref{tb:dgnmemo}. In a memorisation network, the gates are \emph{fixed and random}, i.e., for each index $s\in[n]$, the gating values $G_{s}(i,l),\forall l\in[d-1], i\in[w] $ are sampled from $Ber(\mu), \mu\in(0,1)$ taking values in $\{0,1\}$,  and kept fixed throughout training. The input to the memorisation network is fixed as $1$, and since the gating is fixed and random there is a separate random sub-network to memorise each target $y_s\in\R$. The memorisation network can be used to memorise the targets  $(y_s)_{s=1}^n$ by training it using gradient descent by minimising the squared loss $L_{\Theta}$. In what follows, we let $K_0$ and $H_0$ to be the NTK and NPK of the memorisation network at initialisation.

\textbf{Performance of Memorisation Network:} From \Cref{prop:basic} we know that as $w\ra\infty$, the training error dynamics of the memorisation network follows:
\begin{align}
\dot{e}_t=-K_{0} e_t,
\end{align}
i.e., the spectral properties of $K_0$ (or $H_0$) dictates the rate of convergence of the training error to $0$. In the case of the memorisation network with fixed and random gates, we can calculate $\E{K_0}$ explicitly. 

\textbf{Spectrum of $H_0$:} The input Gram matrix $\Sigma$ is a $n\times n$ matrix with all entries equal to $1$ and its rank is equal to 1, and hence $H_0=\Lambda_0$. We can now calculate the properties of $\Lambda_0$. It is easy to check that $\mathbb{E}_{\mu}\left[\Lambda_0(s,s)\right]=(\mu w)^{(d-1)},\forall s\in[n]$ and $\mathbb{E}_{\mu}\left[\Lambda_0(s,s')\right]=(\mu^2 w)^{(d-1)},\forall s,s'\in[n]$.  For $\sigma=\sqrt{\frac{1}{\mu w}}$, and $\mathbb{E}_{\mu}\left[K_0(s,s)/d\right]=1$, and $\mathbb{E}_{\mu}\left[K_0(s,s')/d\right]=\mu^{(d-1)}$. 
\begin{figure}
\centering
\includegraphics[scale=0.3]{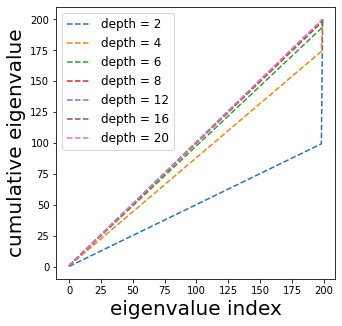}
\caption{Ideal spectrum of $\E{K_0}/d$ for a memorisation network for $n=200$.}
\label{fig:ideal-spectrum}
\end{figure}

\textbf{Why increasing depth till a point helps ?} 
We have:
\begin{align}\label{eq:mat}
\frac{\E{K_0}}{d}=\left[\begin{matrix}
1 &\mu^{d-1} &\ldots &\mu^{d-1} &\ldots\\ 
\ldots &1 &\ldots &\mu^{d-1} &\ldots\\ 
\ldots &\mu^{d-1} &\ldots &1 &\ldots \\
\ldots &\mu^{d-1} &\ldots &\mu^{d-1} &1\\ 
\end{matrix}\right]
\end{align}
i.e., all the diagonal entries are $1$ and non-diagonal entries are $\mu^{d-1}$. Now, let $\rho_i\geq 0,i \in [n]$ be the eigenvalues of $\frac{\E{K_0}}{d}$, and let $\rho_{\max}$ and $\rho_{\min}$ be the largest and smallest eigenvalues.  One can easily show that $\rho_{\max}=1+(n-1)\mu^{d-1}$ and corresponds to the eigenvector with all entries as $1$, and $\rho_{\min}=(1-\mu^{d-1})$ repeats $(n-1)$ times,  which corresponds to eigenvectors given by $[0, 0, \ldots, \underbrace{1, -1}_{\text{$i$ and $i+1$}}, 0,0,\ldots, 0]^\top \in \R^n$ for $i=1,\ldots,n-1$. Note that as $d\ra\infty$, $\rho_{\max},\rho_{\min}\ra 1$.

\textbf{Why increasing depth beyond a point hurts?} 
As the depth increases the variance of the entries $K_0(s,s')$ deviates from its expected value $\E{K_0(s,s')}$. Thus the structure of the Gram matrix degrades from \eqref{eq:mat}, leading to smaller eigenvalues.
\begin{figure}
\resizebox{\textwidth}{!}{
\begin{tabular}{cccc}
\includegraphics[scale=0.5]{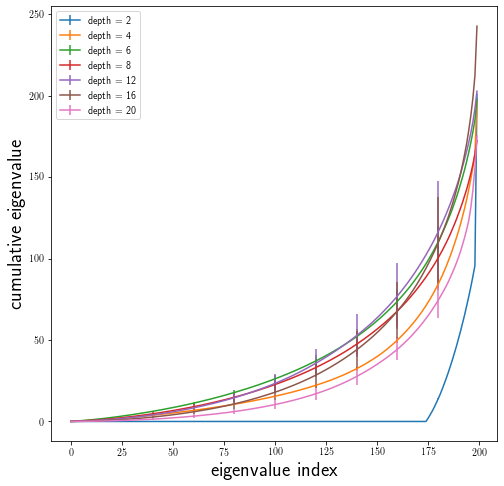}
&
\includegraphics[scale=0.5]{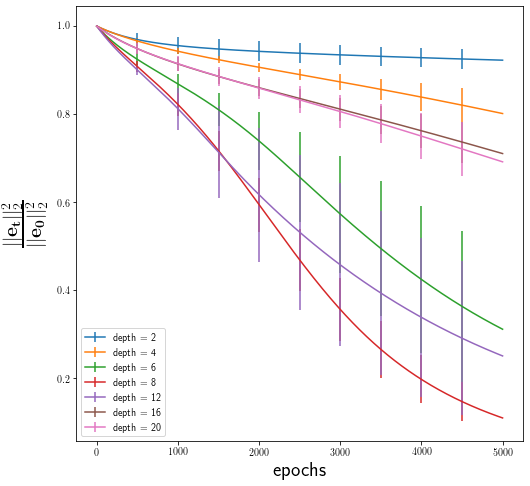}
&
\includegraphics[scale=0.5]{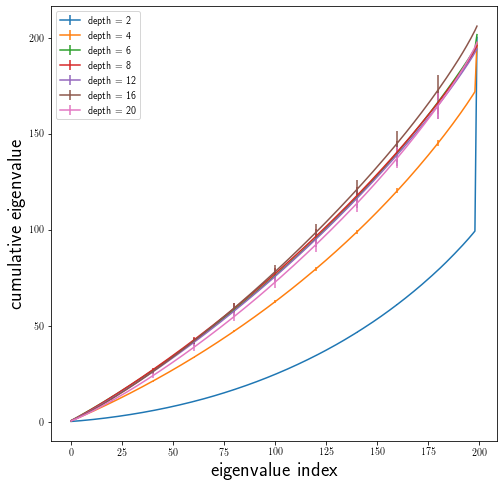}
&
\includegraphics[scale=0.5]{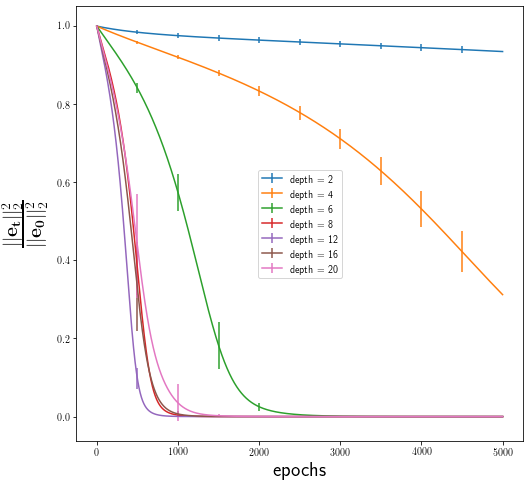}
\end{tabular}
}
\caption{Shows the plots for the memorisation network with $\mu=\frac{1}{2}$ and $\sigma=\sqrt{\frac{2}{w}}$. The number of points to be memorised is $n=200$. The left most plot shows the e.c.d.f for $w=25$ and the second plot from the left shows the error dynamics during training for $w=25$. The second plot from the right shows the e.c.d.f for $w=500$ and the right most plot shows the error dynamics during training for $w=500$. All plots are averaged over $10$ runs.}
\label{fig:dgn-frg-gram-ecdf}
\end{figure}

\subsection{Experiment}
We set $n=200$, and $y_s\sim\text{Uniform}[-1,1]$. We look at the cumulative eigenvalue (e.c.d.f) obtained by first sorting the eigenvalues in ascending order then looking at their cumulative sum. The ideal behaviour (\Cref{fig:ideal-spectrum}) as predicted from theory is that for indices $k\in[n-1]$, the e.c.d.f should increase at a linear rate, i.e., the cumulative sum of the first $k$ indices is equal to $k(1-\mu^{d-1})$, and the difference between the last two indices is $1+(n-1)\mu^{d-1}$. In \Cref{fig:dgn-frg-gram-ecdf}, we plot the actual e.c.d.f for various depths $d=2,4,6,8,12,16,20$ and $w=25,500$ (first and third plots from the left in \Cref{fig:dgn-frg-gram-ecdf}). 

\textbf{Roles of depth and width:} In order to compare how the rate of convergence varies with the depth, we set the step-size $\alpha=\frac{0.1}{\rho_{\max}}$, $w=100$. We use the vanilla SGD-optimiser. Note the$ \frac{1}{\rho_{\max}}$ in the stepsize, ensures that the uniformity of maximum eigenvalue across all the instances, and the convergence should be limited by the smaller eigenvalues. We also look at the convergence rate of the ratio $\frac{\norm{e_t}^2_2}{\norm{e_0}^2_2}$. We notice that for $w=25$, increasing depth till $d=8$ improves the convergence, however increasing beyond $d=8$ worsens the convergence rate. For $w=500$, increasing the depth till $d=12$ improves convergence, and $d=16,20$ are worse than $d=12$.  
\end{appendix}

\end{document}